\documentclass{article}
\usepackage{verbatim}
\usepackage{amssymb}
\usepackage{amsthm}\usepackage{amsmath}
\usepackage{siunitx}

\usepackage{enumitem}
\usepackage{changepage}
\usepackage{algorithm}[1]
\usepackage[noend]{algpseudocode}

\newtheorem{remark}{Remark}
\newtheorem{Lemma}{Lemma}

\usepackage{thmtools}


     \PassOptionsToPackage{numbers, compress}{natbib}



     \usepackage[final]{neurips_2025}


\usepackage[utf8]{inputenc} 
\usepackage[T1]{fontenc}    
\usepackage{babel}          
\usepackage{hyperref}       
\usepackage{cleveref}
\usepackage{url}            
\usepackage{booktabs}       
\usepackage{amsfonts}       
\usepackage{nicefrac}       
\usepackage{microtype}      
\usepackage{xcolor}         

\usepackage{microtype}
\usepackage{graphicx}
\usepackage{wrapfig}
\usepackage{subcaption}
\usepackage{colortbl}
\usepackage{enumitem}
\usepackage{amsmath}
\usepackage{amssymb}
\usepackage{mathtools}
\usepackage{amsthm}
\usepackage{multirow,hhline}
\usepackage{caption}

\newcommand{\TO}{\textbf{to}}
\renewcommand{\gets}{\leftarrow}

\captionsetup[table]{skip=4pt}  

\title{F-Adapter: Frequency-Adaptive Parameter-Efficient Fine-Tuning in Scientific Machine
Learning}

%

\author{\hspace*{-0.05\textwidth}%
Hangwei Zhang\textsuperscript{1,2,3} \quad
Chun Kang\textsuperscript{2,3} \quad
Yan Wang\textsuperscript{2}\textsuperscript{\dag} \quad
Difan Zou\textsuperscript{1}\textsuperscript{\dag} \\
\hspace*{-0.05\textwidth}\textsuperscript{1} School of Computing and Data Science, The University of Hong Kong \\
\hspace*{-0.05\textwidth}\textsuperscript{2} Institute for AI Industry Research, Tsinghua University \\
\hspace*{-0.05\textwidth}\textsuperscript{3} Beihang University \\
\hspace*{-0.05\textwidth}\texttt{\{hangweizhang,kangchun\}@buaa.edu.cn, wangyan@air.tsinghua.edu.cn, dzou@cs.hku.hk}
}

\begin{document}

\maketitle

\begin{abstract}
  Parameter-efficient fine-tuning (PEFT) of powerful pre-trained models for complex downstream tasks has proven effective in vision and language processing, yet this paradigm remains unexplored in scientific machine learning, where the objective is to model complex physical systems. We conduct the first systematic study of PEFT for pre-trained Large Operator Models (LOMs) obtained by scaling variants of Fourier Neural Operator. First, we observe that the widely used Low-Rank Adaptation (LoRA) yields markedly poorer performance on LOMs than Adapter tuning.  Then, we further theoretically establish that stacked LoRA incurs a depth-amplified lower bound on approximation error within Fourier layers, whereas adapters retain universal approximation capacity and, by concentrating parameters on energy-dominant low-frequency modes, attain exponentially decaying error with bottleneck width in the Fourier domain. Motivated by the robust empirical gains of adapters and by our theoretical characterization of PDE solutions as spectrally sparse, we introduce Frequency-Adaptive Adapter (F-Adapter). F-Adapter allocates adapter capacity based on spectral complexity, assigning higher-dimension modules to low-frequency components and lower-dimension modules to high-frequency components. 
  Our F-Adapters establish state-of-the-art (SOTA) results on multiple challenging 3D Navier–Stokes benchmarks, markedly enhancing both generalization and spectral fidelity over LoRA and other PEFT techniques commonly used in LLMs. To the best of our knowledge, this work is the first to explore PEFT for scientific machine-learning and establishes F-Adapter as an effective paradigm for this domain. The code is publicly available at \href{https://github.com/fogradio/F-Adapter-Frequency-Adaptive-PEFT-in-SciML}{here}.

\end{abstract}

\section{Introduction}

Learning solution operators for partial differential equations (PDEs) is a fundamental challenge in scientific machine learning (SciML). Among the most promising approaches are operator-learning architectures, particularly the Fourier Neural Operator (FNO) and its variants~\citep{li2020fourier,xiao2024amortized,tran2021factorized,liu2023domain,bonev2023spherical,guibas2021adaptive,li2023fourier}. These models leverage mesh-independent spectral convolutions to efficiently capture fine-scale dynamics in the frequency domain~\citep{george2022incremental,qin2024toward,you2024mscalefno}, enabling orders-of-magnitude faster inference compared to traditional numerical solvers~\citep{pathak2022fourcastnet,azizzadenesheli2024neural}. Recently, the field has seen the rise of Large Operator Models (LOMs)~\citep{zhou2024strategies}, which scale these architectures and employ large-scale pre-training on diverse datasets, unlocking remarkable generalization capabilities for complex downstream tasks.

When adapting large pretrained models to downstream tasks, parameter-efficient fine-tuning (PEFT) has emerged as a powerful strategy, offering minimal computational and storage overhead~\citep{liu2022few,zaken2021bitfit,liu2022polyhistor,li2021prefix,guo2020parameter}. Unlike full-model fine-tuning, PEFT techniques fine-tune only a small subset of trainable parameters. This approach preserves the benefits of pretraining while enabling rapid deployment across tasks and domains. This paradigm has proven highly effective in natural language processing (NLP)~\citep{zhang2023machine, liu2024rst, parovic2023cross} and computer vision (CV)~\citep{marouf2024mini, chen2022adaptformer, huang2025densely}, where large foundation models dominate and efficient task adaptation is critical for scalability.

However, despite its effectiveness in NLP and CV, its potential within SciML remains unexplored. Although a few recent studies employ LoRA-style Physics-Informed Neural Networks (PINNs)~\citep{raissi2019physics} to build surrogates for parameterized PDEs~\citep{cho2023hypernetwork, wang2025transfer}, they train modest networks from scratch on small and single-type equation datasets. A systematic PEFT study for pre-trained LOMs therefore remains to be established.  
Physical systems governed by PDEs pose qualitatively different challenges: their solution manifolds exhibit broadband, cascade-coupled spectra and reside in high dimensional continuous domains~\citep{o2024derivative, menon2025anant, li2023scalable}. These distinctions prompt our central question: Can PEFT be adapted to LOMs in SciML so that it explicitly respects the frequency-adaptive structure and physics-based priors inherent to PDE solution spaces?

In this work, we present the first systematic study of PEFT for pretrained LOMs. Through a combination of empirical analysis and theoretical investigation, we identify a fundamental limitation in the widely used LoRA approach~\citep{hu2022lora}: its rank-constrained linear updates create a depth-amplified spectral error floor when applied to Fourier-based operator architectures. On the other hand, we show that replacing these linear updates with lightweight \emph{non-linear} adapters, implemented as residual two-layer MLP bottlenecks, can lead to surprisingly effective fine-tuning for LOMs. We further demonstrate that this approach can maintain universal approximation capabilities while strategically concentrating model capacity on the energy-dominant spectral subspace, thus enabling parameter-efficient adaptation without sacrificing
spectral fidelity~\citep{houlsby2019parameter}.

Building on these insights, we propose Frequency-Adaptive Adapters (F-Adapters), a novel PEFT architecture for LOMs that allocates adapter capacity according to spectral complexity. Concretely, the Fourier Layer in LOMs bins its Fourier coefficients into different spaced radial shells, creating disjoint frequency bands for capacity-aware F-Adapter assignment. Specifically, F-Adapters assign larger bottleneck dimensions to low-frequency bands  which typically contain most of the signal energy and govern long-range physical interactions, and smaller dimensions to high-frequency bands that often sparse and susceptible to numerical noise.
We summarize our main contributions as follows:

\begin{itemize}[leftmargin=*]
\item We empirically and theoretically establish that residual two-layer MLP \emph{Adapters} significantly outperform LoRA for fine-tuning in scientific machine learning. Next, we rigorously analyze the energy distribution of PDE solutions in the Fourier domain. All the resulting theory guides the design of our architectural innovations.
    \item We devise a \textbf{Frequency-Adaptive Adapter} (F-Adapter) that allocates parameters in proportion to the spectral energy profile of PDE operator solutions, which in turn couples model capacity to task-relevant frequencies.%
\item We achieve the SOTA performance on multiple challenging 3D Navier–Stokes forecasting benchmarks, which surpasses LoRA and prior PEFT baselines in L2RE accuracy with only less than 2\% of backbone parameters tuned. Comprehensive ablation studies and direct comparisons with other Fourier domain adapter designs confirm the superior effectiveness of F-Adapters.

\end{itemize}

\section{Related Works}
\label{rel_works}

\noindent\textbf{Parameter-Efficient Fine-Tuning.}\label{sec:peft}
PEFT adapts frozen backbones through minimal trainable components. 
Prompt Tuning learns a compact “soft” prompt that is prepended to the input while keeping all model weights fixed \citep{lester2021power}. 
Adapter tuning inserts narrow bottleneck MLPs between Transformer sub-layers so that only these adapters are updated \citep{houlsby2019parameter}. FiLM Adapter extends adapter tuning by treating channel-wise FiLM layers as adapters and updating only their $(\gamma,\beta)$ parameters~\citep{shysheya2022fit}. 
LoRA injects a pair of trainable low-rank matrices whose product is added to each frozen weight tensor \citep{hu2022lora}. 
AdaLoRA allocates the low-rank budget across layers dynamically according to data-driven importance scores \citep{zhang2023adalora}. 
HydraLoRA shares one down-projection across multiple LoRA heads, enlarging expressiveness without extra memory \citep{tian2024hydralora}. 
RandLoRA couples fixed random bases with trainable diagonal scalings to approximate full-rank updates at constant parameter cost \citep{albert2025randlora}. 
SVFT updates each tensor via a sparse mixture of its own singular-vector outer products, training only the corresponding coefficients \citep{lingam2024svft}. Concurrently, \citet{loeschcke2025tensorgrad} uses Tucker-factorized low-rank updates, preserving cross-mode structure and outperforming unfolding-based LoRA.
Together, these methods illustrate how structural priors can drastically reduce trainable parameters while retaining fine-tuning flexibility.

\noindent\textbf{Pretrained Large Operator Models for PDE Solving.}
A rapidly growing body of work now treats LOMs as foundation models. These models are first pretrained on heterogeneous collections of partial differential equations and are later adapted to new physical regimes.
 The pioneering study of \citet{subramanian2023towards} shows that a FNO trained on eight disparate PDEs scales predictably and slashes downstream data needs by orders of magnitude. Building on this, MPP adds an autoregressive transformer over ten systems for strong zero-shot transfer \citep{mccabe2023multiple}, Poseidon cuts cost via multiscale conditioning \citep{herde2024poseidon}, UPS employs cross-modal adaptation for data-efficient generalisation \citep{shen2024ups}, and CoDA-NO introduces codomain-aware attention for few-shot multiphysics tasks \citep{rahman2024pretraining}. PreLowD \citep{hemmasian2024pretraining} and OmniArch \citep{chen2024omniarch} demonstrate the powerful generalisation capabilities of large-scale operator models achieved by moving spatial field values into the frequency domain. DPOT further scales to 1B parameters with Fourier-denoising pretraining, achieving SOTA on $10+$ datasets \citep{hao2024dpot}. Collectively, these LOMs demonstrate that heterogeneous PDE pretraining provides a powerful tool for scientific machine learning.

\section{Behavior of Fine-Tuning Methods for Large Operator Models}

In this section, we examine the performance of fine-tuning techniques applied to pretrained Large Operator Models (LOMs). First, we conduct an empirical comparison of some typical fine-tuning methods, followed by a theoretical interpretation of the results. Next, we delve into a deeper theoretical analysis to illustrate how the information within the solutions is distributed across Fourier spaces. This analysis offers valuable insights that can inform the development of more effective PEFT methods for LOMs.

\subsection{Empirical Comparions between  Different Fine-Tuning Methods for LOMs}
\label{empirical_com}

\noindent\textbf{Task and experiment setup.}
To evaluate fine-tuning methods for LOMs, we focus on the three-dimensional forecasting problem, a challenging scientific machine learning task characterized by a highly nonlinear high-dimensional solution manifold and unstable truncation errors. Specifically, we use DPOT-H as the pretrained model, a 1B parameter backbone that is currently the largest publicly available LOM \citep{hao2024dpot}. We fine-tune this model on two 3D Navier–Stokes datasets from PDEBench \citep{DARUS-2986_2022} with standard parameter-efficient methods, including LoRA and bottleneck adapters (implementation details appear in Appendix \ref{sec:prelim}).  These datasets are configured with random initial conditions at $M=1.0$ and $M=0.1$.

\begin{wrapfigure}{r}{0.6\textwidth}

    \centering
     \vspace{-2.7em}
    \begin{subfigure}[t]{0.5\linewidth}
        \centering
        \includegraphics[width=\linewidth]{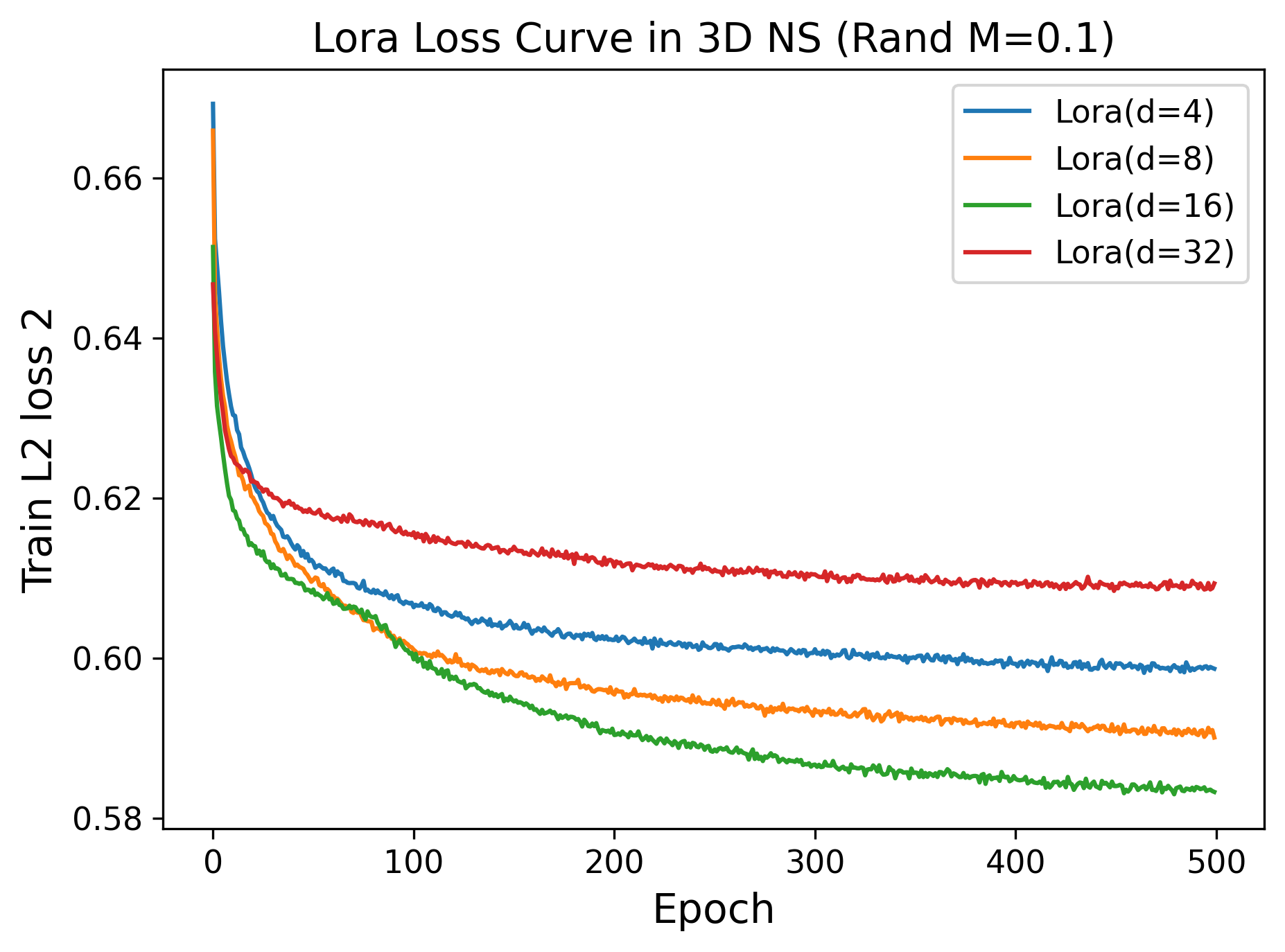}
        \label{fig:lora_loss}
    \end{subfigure}\hfill
    \begin{subfigure}[t]{0.5\linewidth}
        \centering
        \includegraphics[width=\linewidth]{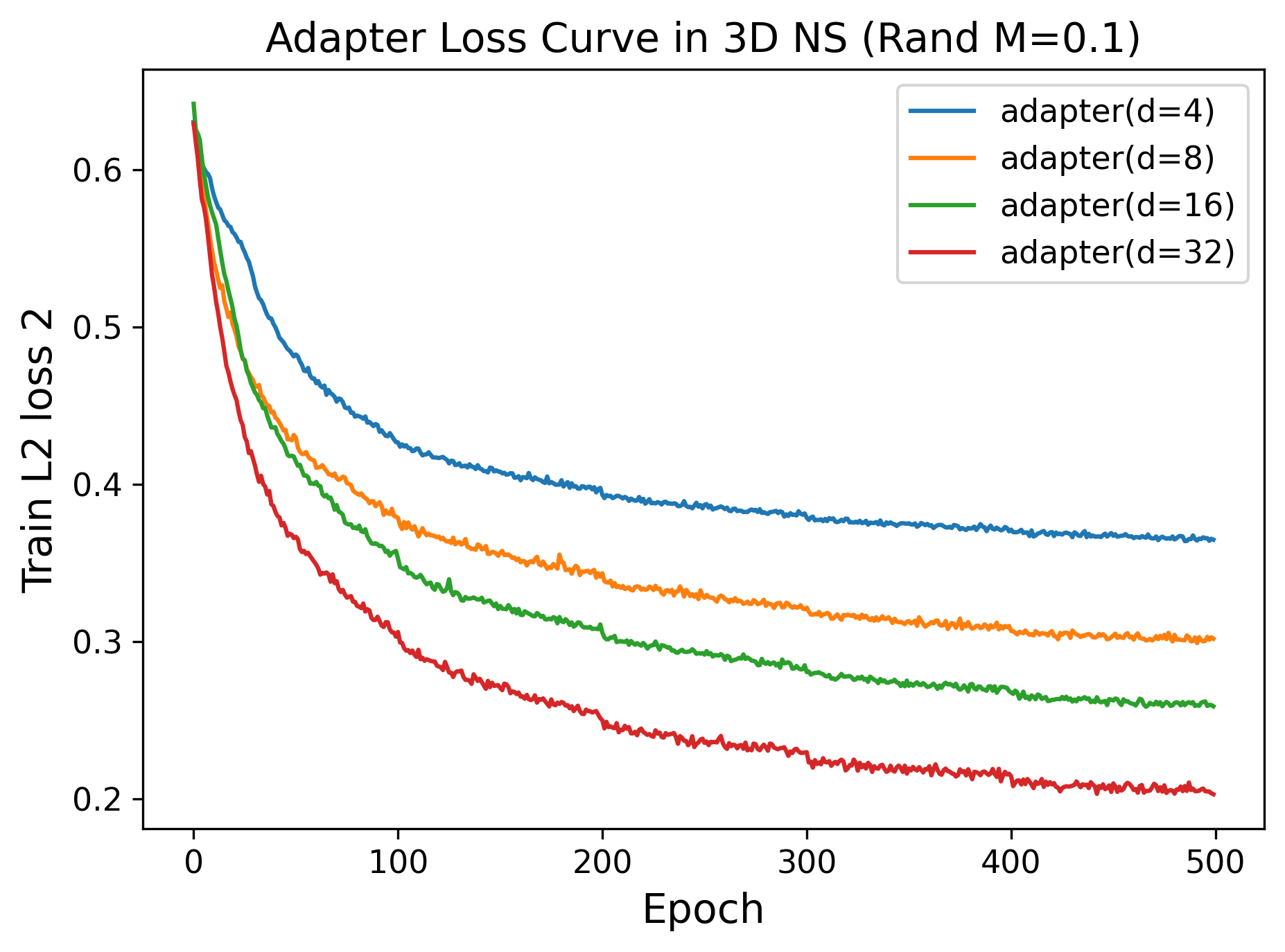}
        \label{fig:adapter_loss}
    \end{subfigure}
     \vspace{-1.0em}
    \caption{Convergence comparison of LoRA and bottleneck Adapter. Adapter not only starts with a lower loss but also reaches a lower steady-state value, indicating faster and more stable convergence.}
    \label{fig:loss_curves}
     \vspace{-1.5em}
\end{wrapfigure}
We primarily integrate the PEFT modules into the Fourier-Attention layers, which form the computational core of the model, contain the majority of its parameters, and dominate the overall computational cost. During fine-tuning, we use the AdamW optimizer and train the model for 500 epochs with different efficiency levels (e.g., ranks for LoRA and bottleneck dimensions for Adapter). The performance is evaluated on the test set using the \emph{L2 relative error} (L2RE), a standard metric in operator learning \citep{li2020fourier}. All experiments are conducted on a single NVIDIA A800 80 GB GPU. Complete experimental details are provided in Appendix \ref{app:experiment}.




\noindent\textbf{Experimental results.}
\label{sub:lora_under}
Building on the experimental setup outlined earlier, we conducted a comprehensive empirical comparison of the performance of the LoRA and Adapter methods across varying ranks and bottleneck dimensions. The results are summarized in Figure \ref{fig:loss_curves} and Table \ref{tab:lora_adapter_ablation}, where several key insights can be revealed. First, despite the widespread adoption of the LoRA method in many large language model (LLM) tasks, it demonstrates significantly poorer performance in fine-tuning LOMs (low-rank models) and does not benefit from increasing the ranks. Second, the Adapter method proves highly effective in fine-tuning LOMs (original models). The performance continues to improve as the  bottleneck width increases, although larger width introduces some overfitting. These findings highlight the distinct suitability of each method depending on the specific model architecture and fine-tuning objectives. This suggests that \textbf{the Adapter method may be a more appropriate choice for fine-tuning LOMs in modeling physical systems.}


\begin{table}[htbp]
 \vspace{-0.5em}
    \centering
    \small
    \setlength\tabcolsep{6pt}
    
    \begin{tabular}{l c c c c}
        \toprule
        \textbf{Scheme} & \textbf{\% Params} & \textbf{Mem (GB)} &
        \textbf{L2RE ($M{=}1.0$)} & \textbf{L2RE ($M{=}0.1$)} \\
        \midrule
        LoRA ($r{=}4$)   & 0.17\% & 12.58 & 0.6413 & 0.6218 \\
        LoRA ($r{=}8$)   & 0.34\% & 12.65 & 0.6345 & 0.6129 \\
        LoRA ($r{=}16$)  & 0.69\% & 12.78 & 0.6427 & 0.6147 \\
        LoRA ($r{=}32$)  & 1.37\% & 15.85 & 0.6395 & 0.6211 \\
        \midrule
        Adapter ($d{=}4$)   & 0.59\% & 15.82 & 0.6169 & 0.5063 \\
        Adapter ($d{=}8$)   & 1.16\% & 15.85 & 0.5496 & 0.4893 \\
        Adapter ($d{=}16$)  & 2.30\% & 15.89 & 0.5227 & \textbf{0.4539} \\
        Adapter ($d{=}32$)  & 4.59\% & 15.98 & \textbf{0.5134} & 0.4570 \\
        \bottomrule
    \end{tabular}
    \caption{Comparison on LoRAs with different rank and Adapters with different bottleneck dimension.}
    \label{tab:lora_adapter_ablation}
    \vspace{-2.5em}    
\end{table}

\subsection{Theoretical and Empirical Explanations on the Benefit of Adapter Methods}
\label{sec:theory-adapter}


In this part, we further provide explanations about why adapter can lead to substantially better performance than LoRA in fine-tuning LOMs.

We mainly consider the comparison between the \textbf{Block-wise LoRA} and \textbf{two–layer MLP Adapter} within the Fourier layers of LOMs. In particular, block-wise LoRA applies the multiple low-rank adaption for different blocks of the target parameters separately. For adapter model, we model the adapter in Fourier blocks as a two-layer MLP
\(
g:\mathbb{C}^{N}\!\to\!\mathbb{C}^{N},\;
g(\hat x)
=
U\,\sigma(V\hat x + b)+c,
\)
with weights
\(V\!\in\!\mathbb{C}^{m\times N},\;
U\!\in\!\mathbb{C}^{N\times m}\),
biases \(b\!\in\!\mathbb{C}^{m},\; c\!\in\!\mathbb{C}^{N}\),
and non-linearity~\(\sigma(\cdot)\).

Then, we first deliver the following proposition that characterizes the approximation error for the block-wise LoRA method with rank $r$.
\begin{restatable}[
Block-wise LoRA lower bound
]{proposition}{BlockLoRALowerBound}
\label{prop:block}
Let $\Delta W_{\mathrm{g}}
        =\operatorname{blockdiag}\!\bigl(\Delta W^{(1)},\dots,\Delta W^{(K)}\bigr)$ be the block-wise model parameter updates and $BA
        =\operatorname{blockdiag}\!\bigl(B^{(1)}A^{(1)},\dots,B^{(K)}A^{(K)}\bigr)$ be the block-wise low-rank approximation, where $B^{(k)}\!\in\!\mathbb{C}^{d\times r}, A^{(k)}\!\in\!\mathbb{C}^{r\times d}$. Then, for any input $x$, the approximation error for block-wise LoRA satisfies
\begin{equation}
\bigl\|(\Delta W_{g}-BA)x\bigr\|
\;\ge\;
\biggl(\sum_{k=1}^{K}\sum_{i=r+1}^{d}\sigma_{k,i}^{2}\,(v_{k,i}^{\!\top}x_{k})^{2}\biggr)^{\!1/2}.
\end{equation}
In particular, the worst-case operator-norm error obeys
\begin{equation}
\label{prop:block_wost}
\sup_{\|\hat x\|_{2}=1}\|(\Delta W_{g}-BA)\hat x\|_{2} \;\ge\; \sigma_{Kr+1}(\Delta W_{g}) 
\end{equation}
\end{restatable}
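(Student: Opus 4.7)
The plan is to reduce the proposition to a blockwise application of the Eckart--Young--Mirsky (EYM) theorem, then reassemble the per-block information into the claimed bounds. The key structural observation is that $BA$ is block diagonal with rank-$r$ blocks, hence has global rank at most $Kr$, and its action on an input $x=(x_1,\dots,x_K)$ decouples across blocks so that $\|(\Delta W_g-BA)x\|_2^2 = \sum_{k=1}^{K}\|(\Delta W^{(k)}-B^{(k)}A^{(k)})x_k\|_2^2$.

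First I would set up the per-block SVD $\Delta W^{(k)}=\sum_{i=1}^{d}\sigma_{k,i}u_{k,i}v_{k,i}^{*}$ so that the $v_{k,i}$'s appearing in the statement are the right singular vectors of the $k$-th block. To derive the pointwise bound, I would write $(\Delta W^{(k)}-B^{(k)}A^{(k)})x_k = \sum_{i=1}^{d}\sigma_{k,i}(v_{k,i}^{*}x_k)u_{k,i} - B^{(k)}A^{(k)}x_k$ and project this residual onto the orthogonal complement of the $r$-dimensional range of $B^{(k)}A^{(k)}$. Since projection cannot increase norm and at most $r$ of the tail singular directions $u_{k,r+1},\ldots,u_{k,d}$ can lie in the $r$-dimensional range, a counting/dimension argument (or a direct Courant--Fischer type minimax over $r$-dimensional subspaces) gives $\|(\Delta W^{(k)}-B^{(k)}A^{(k)})x_k\|_2^{2}\ge \sum_{i=r+1}^{d}\sigma_{k,i}^{2}|v_{k,i}^{*}x_k|^{2}$. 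Summing over $k$ yields the first inequality.

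For the worst-case bound, I would note that $\mathrm{rank}(BA)\le Kr$ because each of the $K$ diagonal blocks contributes at most rank $r$. Applying EYM directly to the $Kd\times Kd$ matrix $\Delta W_g$ gives $\inf_{\mathrm{rank}(M)\le Kr}\|\Delta W_g-M\|_{\mathrm{op}} = \sigma_{Kr+1}(\Delta W_g)$, and since the operator norm coincides with $\sup_{\|\hat x\|_2=1}\|(\Delta W_g-BA)\hat x\|_2$, specializing $M=BA$ yields~\eqref{prop:block_wost}.

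The main obstacle will be the pointwise bound, since classical EYM is stated in operator or Frobenius norm rather than for a fixed input. One must argue carefully that, although a clever choice of $B^{(k)}A^{(k)}$ can cancel the residual on any single input, the lower bound in the statement is really a \emph{spectral} lower bound: it measures the energy of $x_k$ in the tail singular subspace of $\Delta W^{(k)}$, which is inherently inaccessible to a rank-$r$ correction whose range is pinned before seeing $x$. I would resolve this by phrasing the bound as $\inf_{\dim\mathcal{S}\le r}\|P_{\mathcal{S}^{\perp}}\Delta W^{(k)}x_k\|$ and invoking the variational characterization of singular values; the operator-norm corollary then follows immediately by maximizing over $\|x\|_2=1$ and replacing $Kr+1$ by the global index arising from the block structure.
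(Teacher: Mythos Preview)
Your treatment of the worst-case bound \eqref{prop:block_wost} is correct and cleaner than the paper's: observing that $\operatorname{rank}(BA)\le Kr$ and applying Eckart--Young--Mirsky directly to the $Kd\times Kd$ matrix $\Delta W_g$ immediately yields $\|\Delta W_g-BA\|_{\mathrm{op}}\ge\sigma_{Kr+1}(\Delta W_g)$. The paper instead argues per block, exhibiting for each $k$ the input $x=v_{k,r+1}$ with $\|D_k v_{k,r+1}\|=\sigma_{k,r+1}$ and then using that the singular values of a block-diagonal matrix are the union of the block singular values; your global route avoids this detour.

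The gap is in the pointwise bound. You correctly flag the obstacle---a rank-$r$ correction can cancel the residual on any single fixed input---but your proposed resolution via $\inf_{\dim S\le r}\|P_{S^{\perp}}\Delta W^{(k)}x_k\|$ cannot work, because that infimum is zero (take $S=\operatorname{span}\{\Delta W^{(k)}x_k\}$). In fact the inequality is false for arbitrary $B^{(k)}A^{(k)}$: with $d=2$, $r=1$, $\Delta W^{(k)}=\operatorname{diag}(2,1)$ and $x_k=e_2$, the right-hand side equals $\sigma_{k,2}=1$, yet $B^{(k)}A^{(k)}=e_2e_2^\top$ gives zero error. The paper's proof does not address this either; it simply asserts $\operatorname{im}(B^{(k)}A^{(k)})\subseteq\operatorname{span}\{u_{k,1},\dots,u_{k,r}\}$---which is not a consequence of $\operatorname{rank}(B^{(k)}A^{(k)})\le r$ but amounts to taking $B^{(k)}A^{(k)}$ to be the Eckart--Young truncation, in which case the displayed inequality becomes an \emph{equality} and the argument is the direct computation $(\Delta W^{(k)}-\sum_{i\le r}\sigma_{k,i}u_{k,i}v_{k,i}^{*})x_k=\sum_{i>r}\sigma_{k,i}(v_{k,i}^{*}x_k)u_{k,i}$. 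If you adopt that reading of the statement, no projection or Courant--Fischer argument is needed; if you insist on universal $B^{(k)}A^{(k)}$, no argument can succeed.
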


\noindent\textbf{Interpretation.}
Even if each block is well approximated in isolation, the
\emph{worst-case} LoRA error across the entire stack is still lower-bounded
by the $(Kr\!+\!1)$-th singular value of the global matrix,
revealing an intrinsic \textbf{additive bottleneck} as depth~$K$ grows.

Adapters, on the other hand, introduce non-linear, width-controlled bottlenecks \emph{after} the Fourier transform.
Rather than adjusting the pre-trained model’s parameters, it carries out fine-tuning in a separate representation space.
Because the underlying two-layer MLP satisfies the universal-approximation theorem, it is, in principle, capable of representing any measurable function. In the Fourier layers of large operator models (LOMs), however, the practical rate at which this universal approximation is achieved is dictated by the spectral frequency content of the target update. We first introduce the following notation for adapters:
    Let $x\!\in\!\mathbb{R}^{d_1\times d_2\times d_3}$ and define
    the unitary discrete Fourier transform and its inverse
    \(
        \mathcal{F}:\mathbb{R}^{d_1\times d_2\times d_3}\!\to\!\mathbb{C}^{N},
        \;
        \mathcal{F}^{-1}:\mathbb{C}^{N}\!\to\!\mathbb{R}^{d_1\times d_2\times d_3},
        \;
        N=d_1d_2d_3.
    \)
    For a multi-index $k\in\mathbb{Z}^{3}$ we abbreviate
    $\langle k\rangle:=\sqrt{1+\|k\|^{2}}$. An Adapter's Fourier coefficients are defined by
\(
g_{k}
=\frac{1}{(2\pi)^{d}}
\int_{\mathbb{T}^{d}}
g(x)\,e^{-i k\cdot x}\,dx,
\)
where \(d=d_{1}d_{2}d_{3}\), \(k\in\mathbb{Z}^{d}\) and \(\mathbb{T}^{d}\) is the
$d$-dimensional torus.
\begin{restatable}[Frequency-selective approximation of adapters]{proposition}{FourierAdapterUA}
\label{prop:fourier-min}
Let 
$|g_{k}|\!\le\!C\langle k\rangle^{-\alpha}$ with $\alpha\!>\!\frac{d}{2}$.
For any $\varepsilon\!>0$ there exist frequency truncation radius $K>0$ and adapter bottleneck width $m\in\mathbb{N}$ such
that the Fourier-domain adapter $\widehat g$ obeys 
\begin{equation}
\|e\|\;:=\;\|\mathcal{F}^{-1}(g)-\mathcal{F}^{-1}(\widehat g)\|_2
<\varepsilon,
\qquad
\|e\|\;
=
O\!\bigl(K^{\tfrac{d}{2}-\alpha}\bigr)
+
O\!\bigl(K^{\tfrac{d}{2}}\mathrm{e}^{-cm}\bigr).
\end{equation}
\end{restatable}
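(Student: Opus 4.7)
The plan is to split the target error into a spectral truncation term and a network approximation term, bound each separately using Parseval and classical results on MLP approximation of analytic functions, and then choose $K$ and $m$ to make each piece arbitrarily small. Fix a truncation radius $K>0$ and write $g = g_{\le K} + g_{>K}$, where $g_{\le K}$ keeps only Fourier modes with $\|k\|\le K$. Let $\hat g$ be the two-layer MLP's approximation of $g_{\le K}$. Then the triangle inequality, combined with the unitarity of $\mathcal{F}^{-1}$ (Parseval), gives
$$\|\mathcal{F}^{-1}(g)-\mathcal{F}^{-1}(\hat g)\|_2 \;\le\; \|g_{>K}\|_2 + \|g_{\le K}-\hat g\|_2.$$

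For the first term, Parseval and the decay hypothesis yield $\|g_{>K}\|_2^{2} = \sum_{\|k\|>K}|g_{k}|^{2} \le C^{2}\sum_{\|k\|>K}\langle k\rangle^{-2\alpha}$. Comparing this lattice tail with the radial integral $\int_{K}^{\infty} r^{d-1-2\alpha}\,dr$, which converges precisely because $\alpha>d/2$, yields a bound of order $K^{d-2\alpha}$; taking square roots produces the truncation contribution $\|g_{>K}\|_2 = O(K^{d/2-\alpha})$.

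For the second term, $g_{\le K}$ is a trigonometric polynomial and, in particular, extends to an entire function, placing it in the analyticity class for which two-layer networks with analytic activations (tanh, sigmoid, GELU) enjoy exponential convergence. Invoking a Mhaskar/Pinkus-style approximation theorem, there exists a width-$m$ adapter $\hat g$ with $\|g_{\le K}-\hat g\|_2 \le C_{K}\,e^{-cm}$, where $C_{K}$ controls the analytic envelope of $g_{\le K}$. Applying Cauchy--Schwarz to the coefficient expansion and using the fact that the number of retained modes is $|\{\|k\|\le K\}| = O(K^{d})$ gives $C_{K} \lesssim \sqrt{|\{\|k\|\le K\}|}\cdot \|g_{\le K}\|_2 = O(K^{d/2})$, which yields the $K^{d/2}\,e^{-cm}$ term.

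Combining the two estimates produces $\|e\|_2 = O(K^{d/2-\alpha}) + O(K^{d/2}e^{-cm})$; given $\varepsilon>0$, one first chooses $K$ so that the truncation piece falls below $\varepsilon/2$ (possible since $\alpha>d/2$), then chooses $m$ large enough that the network piece is also below $\varepsilon/2$. I expect the main obstacle to be the second step: obtaining an exponential-in-$m$ approximation rate for the band-limited target while tracking an explicit polynomial-in-$K$ multiplicative constant. This will require either a black-box analytic approximation theorem with explicit bandwidth dependence, or an explicit construction---for instance, approximating each complex exponential in the truncated expansion by a small number of activations of an analytic $\sigma$ via Taylor expansion and then summing contributions across the $O(K^{d})$ retained modes. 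The $K^{d/2}$ factor is the unavoidable cost of passing from an $\ell^{2}$ bound on Fourier coefficients back to a signal-space norm via Cauchy--Schwarz.
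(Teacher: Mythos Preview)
Your proposal is correct and follows essentially the same skeleton as the paper: Parseval, a high-frequency/low-frequency split, integral comparison for the tail, an exponential MLP rate for the retained modes, and then successive choice of $K$ and $m$.

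The one notable difference is in how the $K^{d/2}$ prefactor on the network term arises. The paper works \emph{componentwise}: for each retained frequency $k$ it posits a per-mode adapter error $|g_k-\widehat g_k|\le \varepsilon_m e^{-cm}$ with constants independent of $K$, and then simply sums over the $O(K^d)$ retained modes to get $\sum_{\langle k\rangle\le K}|g_k-\widehat g_k|^2\le K^d\varepsilon_m^2 e^{-2cm}$. This sidesteps precisely the ``main obstacle'' you flagged---tracking how the analytic-approximation constant depends on the bandwidth of a single global target---because each scalar $g_k$ is approximated in a fixed low-dimensional domain. Your route via a global MLP and Cauchy--Schwarz on the coefficient expansion reaches the same bound but leaves you having to justify that the exponential rate $c$ does not itself degrade with $K$; the paper's per-mode viewpoint makes that issue disappear at the cost of a slightly stronger (but in context natural) architectural assumption.
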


\noindent\textbf{Implications.}
All proofs are given in Appendix~\ref{sec:Mathematical_Proofs}. Frequency-agnostic linear \emph{Block-wise LoRA} is bottle-necked by the
depth-dependent lower bound in Proposition \ref{prop:block}, whereas \emph{Adapters} in Fourier blocks concentrate parameters on the
energy-dominating low-frequency subspace and exploit the exponential
accuracy of Proposition \ref{prop:fourier-min}.  
This frequency-selective compression accounts for the superior balance between predictive accuracy and parameter count observed in all the experiments on DPOT~\citep{hao2024dpot}, a kind of FNO-based LOM.

\noindent\textbf{Experimental Verification.} We explore whether the performance degradation caused by truncating full-rank updates in the \emph{parameter space} can be effectively mitigated by employing a lightweight, low-rank, and non-linear adapter that operates directly in the \emph{representation space}. Specifically, we aim to determine if this adapter can accurately recover the functional shift that occurs due to such truncation. For more detailed experimental information, please refer to the Appendix~\ref{subsec:full_rank_diagnostics}.

\begin{wrapfigure}{r}{0.55\textwidth}
 \vspace{-1.7em}
    \centering
    \begin{subfigure}[t]{0.5\linewidth}
        \includegraphics[width=\linewidth]{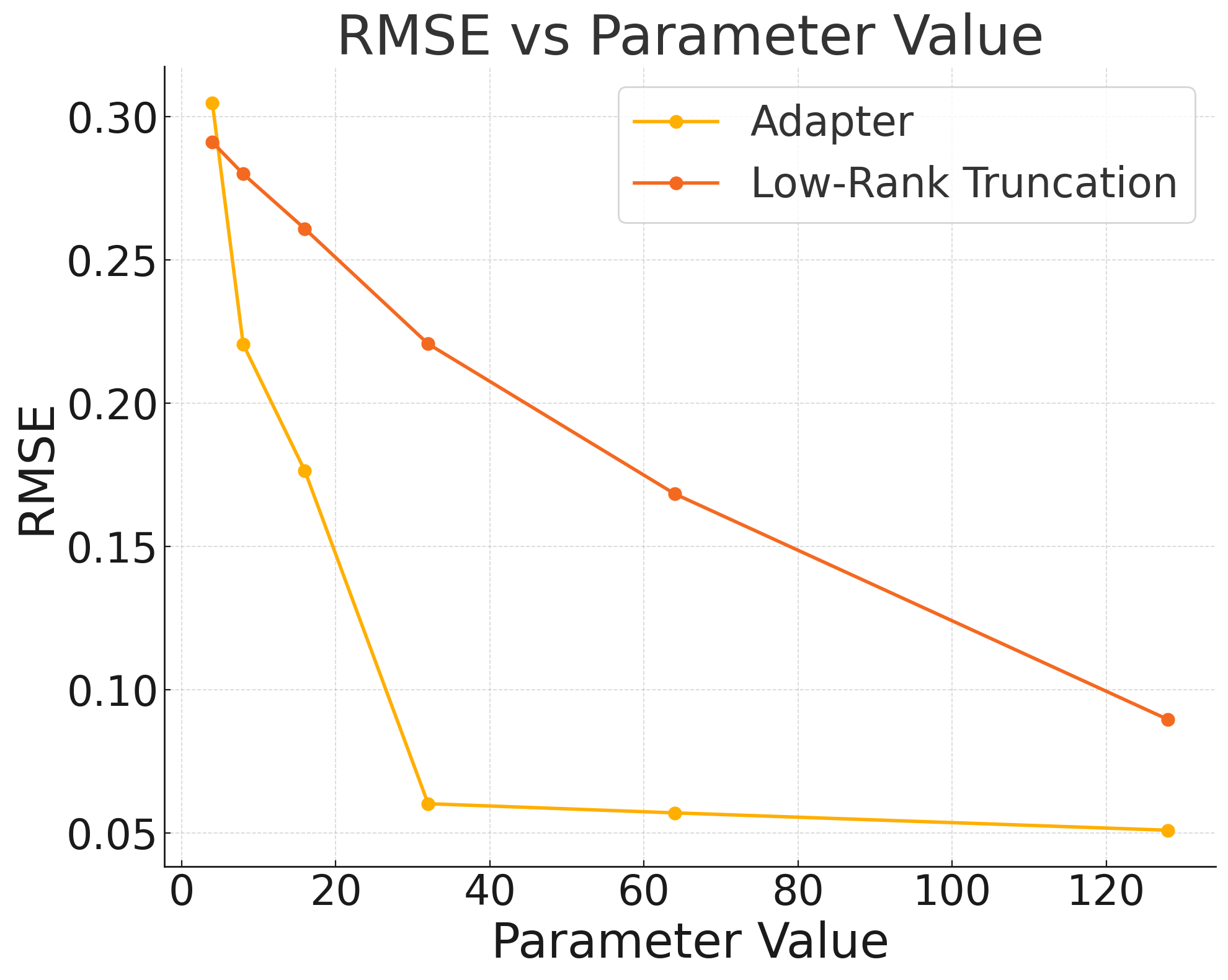}
    \end{subfigure}\hfill
    \begin{subfigure}[t]{0.5\linewidth}
        \includegraphics[width=\linewidth]{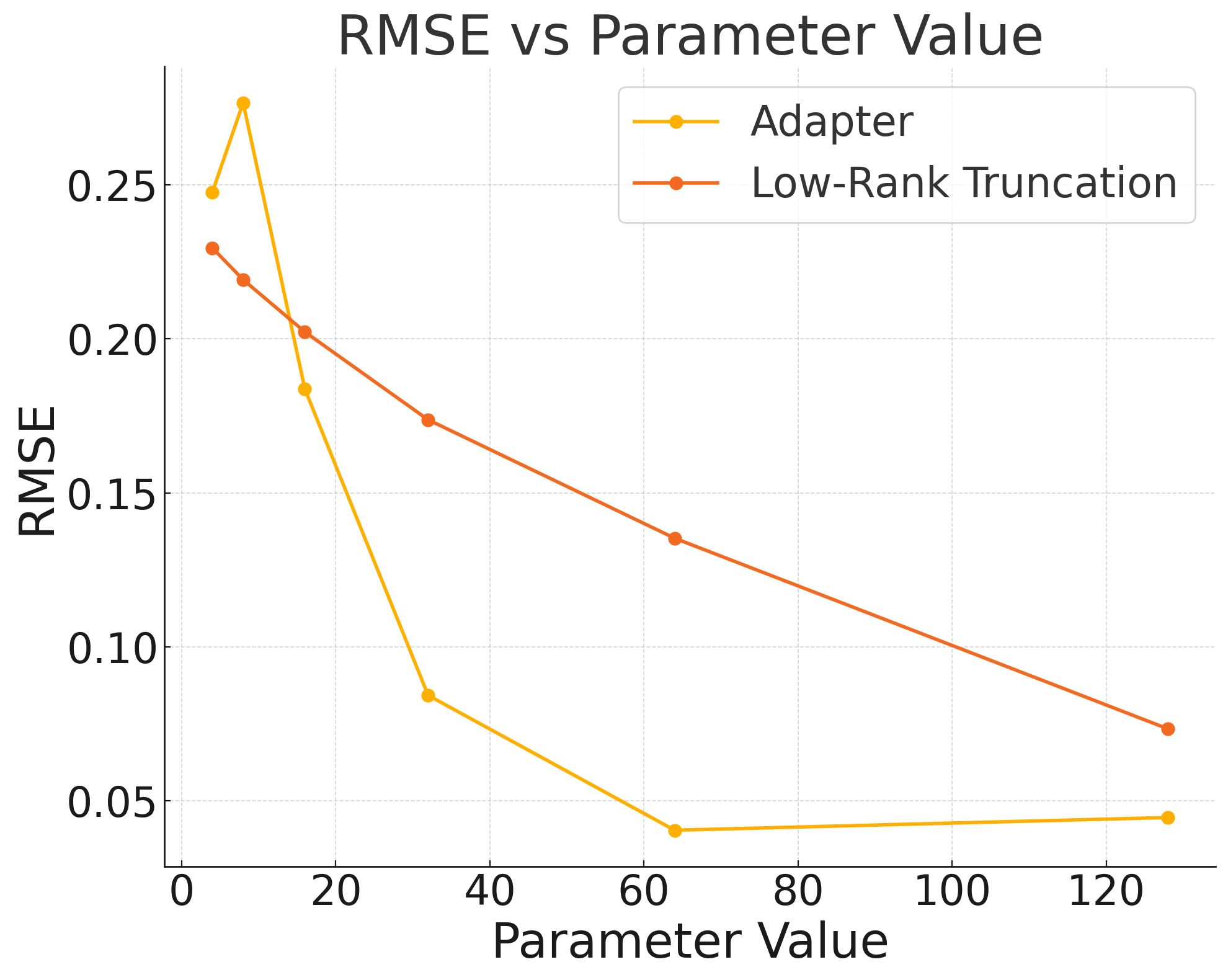}
    \end{subfigure}
    \caption{RMSE versus parameter (bottleneck dimension $m$ for Adapter and rank $r$ for Truncation) budget for the two–layer MLP adapter (yellow) and the low-rank truncation baseline (orange).  
    Left: transonic dataset ($M=1.0$).  
    Right: low-Mach dataset ($M=0.1$).}
    \label{fig:mlp_vs_lora_both}
     \vspace{-2.0em}
\end{wrapfigure}

For the first Fourier-
Attention  block in DPOT we harvest the real Fourier activations $H\!\in\!\mathbb{R}^{N\times d}$ and their target outputs $Y = H\,\Delta W^{\top}$, where $\Delta W$ denotes the exact full-rank weight update.  After a 90/10\,\% train–validation split we benchmark two surrogates: (i) a \textbf{two-layer MLP adapter} $f_{\mathrm{MLP}}\!: H\!\mapsto\!Y$ whose hidden width is $m$, and (ii) a \textbf{low-rank truncation} baseline obtained by replacing $\Delta W$ with its optimal rank-$r$ SVD approximation, which corresponds to an idealized LoRA module. Both models are evaluated by the root-mean-square error \mbox{$\text{RMSE}=\sqrt{|Y|^{-1}\lVert f(H)-Y\rVert_2^{\,2}}$} on the held-out set.

Figure~\ref{fig:mlp_vs_lora_both} shows the RMSE achieved by a two-layer MLP adapter against the best rank-$r$ SVD truncation of $\Delta W$ (idealized LoRA) under identical parameter budgets and supplementary experimental results can be found at Appendix~\ref{app:mlp_vs_lora_table}.  
The advantage for Adapters is already noticeable at extremely small budgets ($m,r\!\le\!16$) and widens as capacity grows.  
In the transonic case ($M=1.0$) the adapter with only $m\!=\!64$ hidden units \emph{halves} the error of a rank-128 truncation.

\subsection{Spectral Energy Concentration in Low-Frequency Bands}


\begin{wrapfigure}{r}{0.58\textwidth}
 \vspace{-2.0em}
    \centering
    \includegraphics[width=\linewidth]{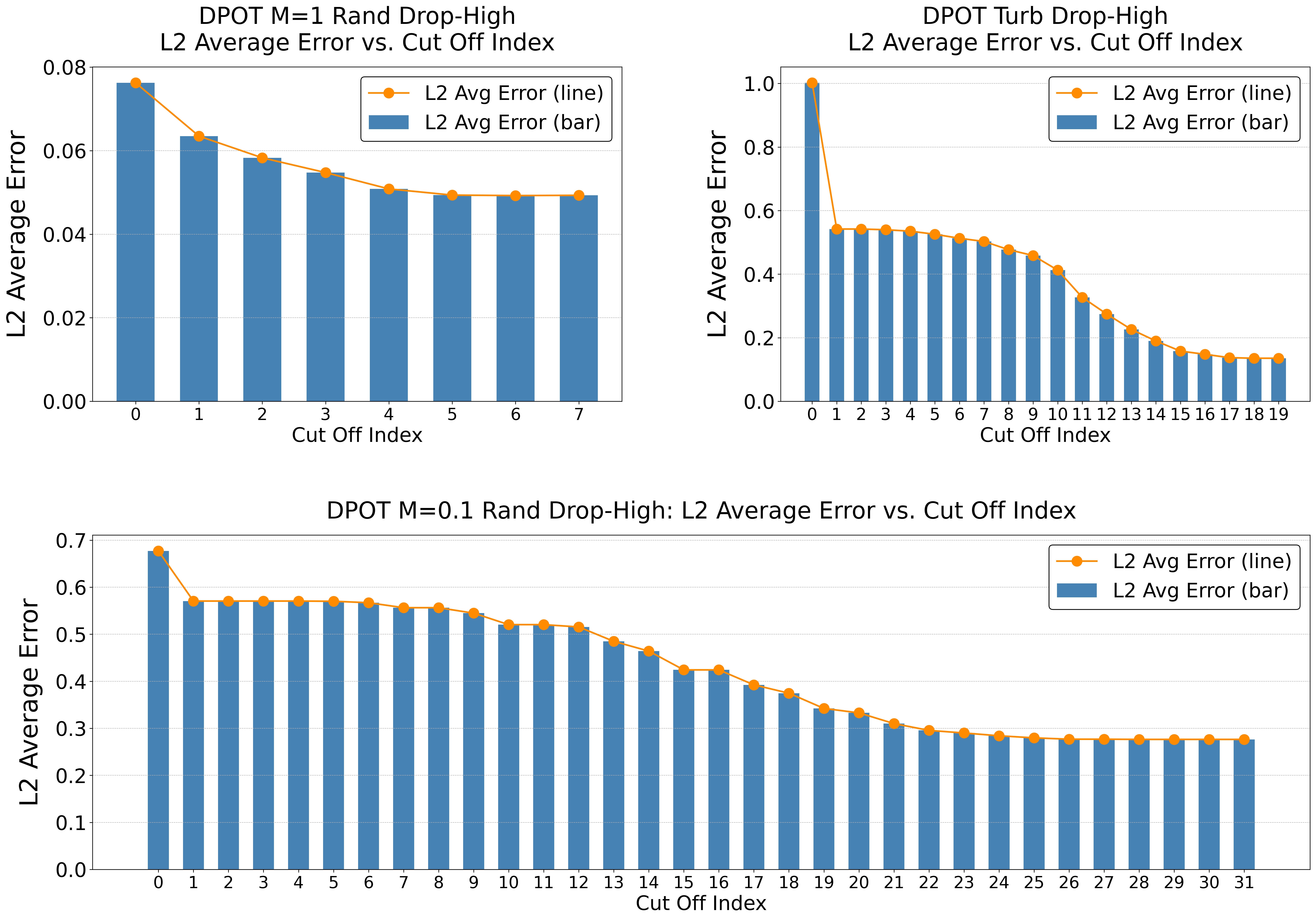}
    \caption{Mean \(L_2\) error versus the cut-off band index \(k\) for \(\text{Rand}\,M=1.0\) (upper left), \(\text{Turb}\,M=1.0\) (upper right), and Rand \(M=0.1\) (bottom).}
    \label{fig:drop_high}
    \vspace{-1.8em}
\end{wrapfigure}

To systematically gauge the relative importance of different spectral bands to predictive accuracy, we conduct a \emph{spectral drop-high} experiment on a fully fine-tuned DPOT model evaluated on the three-dimensional Navier–Stokes benchmarks with random initial conditions at $M = 0.1$ and $M = 1.0$ and Turbulence initial conditions at $M = 1.0$.

A fixed mini-batch taken from the test set is transformed to Fourier space and its spectrum is uniformly partitioned into $N_b$ non-overlapping frequency bands.
For each cut-off index $k \in \{0,\dots ,N_b-1\}$ we zero out every coefficient that lies in a band whose index is $\geq k$, perform an inverse FFT, and pass the filtered representation through the network.
The mean $L_2$ relative error between the network output and the ground-truth target is then recorded.
Sweeping $k$ from low to high therefore reveals how progressively discarding higher-frequency content influences the model’s accuracy.

The resulting error curves for all datasets are presented in Figure \ref{fig:drop_high}. As the cut-off index $k$ increases, meaning progressively higher spectral bands are excised, the average $L_2$ error falls steeply at first and then flattens, eventually becoming almost insensitive to the removal of additional bands. The error curves suggest that most predictive power is captured once a relatively small set of low-frequency bands is retained; beyond this point, excising additional high-frequency bands yields only marginal further degradation. This behaviour is consistent with the spectral-truncation heuristics commonly used in FNO-style models~\citep{qin2024toward, xu2025understanding, schaeffer2013sparse}.

To place the empirical finding on firmer ground, we extend classical harmonic analysis to time-continuous PDE solutions and, using Proposition \ref{prop:spectral-split}, show that the cumulative energy of high-frequency modes decays polynomially with the cut-off radius, which implies that low-frequency modes carry the dominant share of the energy. The proof is provided in Appendix \ref{sec:Mathematical_Proofs}.

\begin{restatable}[Quantitative Low–/High–Frequency Energy Split for PDE Solution]{proposition}{QuantitativeLowHighFrequencyEnergySplit}\label{prop:spectral-split}
Let \(s>\tfrac d2\) and suppose
\(
f\in C\bigl([0,T];H^s(\mathbb T^d)\bigr),\) \( 
\sup_{t\in[0,T]}\|f(t)\|_{H^s}\le M.
\)
Let
\(
f(t,x)=\sum_{k\in\mathbb Z^d}\widehat f(t,k)\,e^{i\,k\cdot x},
\)
then for each \(k\neq0\),
\begin{equation}
|\widehat f(t,k)|\le M\,(1+\|k\|^2)^{-s/2},
\end{equation}
and for every integer \(K\ge1\) there exists \(C=C(d,s)\) such that
\begin{equation}
\sum_{\|k\|>K}|\widehat f(t,k)|^2\le C\,M^2\,K^{\,d-2s},
\qquad
\sum_{\|k\|\le K}|\widehat f(t,k)|^2
=\|f(t)\|_{L^2}^2-O\bigl(K^{\,d-2s}\bigr).
\end{equation}
\end{restatable}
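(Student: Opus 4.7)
The plan is to derive both displays from the Parseval-type characterization of the Sobolev norm on the torus, namely $\|f(t)\|_{H^s}^2 = \sum_{k \in \mathbb{Z}^d} (1+\|k\|^2)^s |\widehat{f}(t,k)|^2 \le M^2$, combined with a standard lattice-counting estimate; all other steps are elementary once this identity is in hand. For the pointwise bound, I would observe that every summand is nonnegative, so truncating the series to a single index gives $(1+\|k\|^2)^s |\widehat{f}(t,k)|^2 \le \|f(t)\|_{H^s}^2 \le M^2$, and rearrangement yields $|\widehat{f}(t,k)| \le M(1+\|k\|^2)^{-s/2}$ uniformly in $t \in [0,T]$. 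Note this is in fact valid for \emph{every} $k$, so the restriction $k \neq 0$ in the statement is only needed to prevent the bound from being vacuous.

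For the tail estimate I would insert the pointwise bound into the high-frequency sum, $\sum_{\|k\|>K} |\widehat{f}(t,k)|^2 \le M^2 \sum_{\|k\|>K} (1+\|k\|^2)^{-s}$, and then convert the lattice sum to a spherical integral. Using the standard count $|\{k \in \mathbb{Z}^d : r < \|k\| \le r+1\}| \le C_d\, r^{d-1}$ for integer points in dyadic shells, I would bound $\sum_{\|k\|>K}(1+\|k\|^2)^{-s} \le C_d \int_K^\infty r^{d-1}(1+r^2)^{-s}\,dr \le \frac{C(d,s)}{2s-d}\,K^{d-2s}$, where convergence of the integral rests on exactly the hypothesis $s > d/2$. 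The complementary identity then follows immediately from Plancherel's theorem $\sum_{k \in \mathbb{Z}^d} |\widehat{f}(t,k)|^2 = \|f(t)\|_{L^2}^2$, by writing $\sum_{\|k\| \le K}|\widehat{f}(t,k)|^2 = \|f(t)\|_{L^2}^2 - \sum_{\|k\|>K}|\widehat{f}(t,k)|^2$ and substituting the tail bound just established.

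I do not anticipate a serious obstacle: the argument is a classical Sobolev embedding-style computation. The single point requiring care is the lattice-counting step, where the dimensional factor $r^{d-1}$ from counting integer points in spherical shells is what produces the exponent $d - 2s$ (as opposed to the cleaner $-2s$ one would obtain by simply factoring out $(1+K^2)^{-s}$ from the weighted Sobolev sum). Tracking this factor is what gives the sharp dimensional dependence $C(d,s)$ advertised in the statement and transparently exhibits why the regularity threshold $s > d/2$ is exactly the condition ensuring a summable high-frequency tail, which in turn underpins the spectral concentration phenomenon motivating the F-Adapter design.
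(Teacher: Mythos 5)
Your proposal is correct and follows essentially the same route as the paper's proof: extracting the pointwise decay $|\widehat f(t,k)|\le M(1+\|k\|^2)^{-s/2}$ from the weighted Parseval characterization of $\|f(t)\|_{H^s}$, inserting it into the tail sum, controlling the lattice sum by counting $O(r^{d-1})$ integer points per spherical shell and comparing to the integral $\int_K^\infty r^{d-1-2s}\,dr$ (convergent precisely when $s>d/2$), and finishing with Plancherel for the complementary low-frequency identity. The only cosmetic difference is that the paper partitions into discrete shells $m-1<\|k\|\le m$ before passing to the integral, whereas you go to the integral directly; your side remark that the pointwise bound in fact holds for all $k$ including $k=0$ is accurate, though it is not vacuous there either since $(1+0)^{-s/2}=1$ gives the nontrivial bound $|\widehat f(t,0)|\le M$.
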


Proposition \ref{prop:spectral-split} shows that, in Fourier space, PDE solutions concentrate most of their energy in relatively low-frequency modes while the high-frequency tail decays as \(O(K^{d-2s})\).

Taken together, these findings indicate that low-frequency bands predominantly convey the global, energy-rich structure of fluid flows, arguing for higher-capacity adapters in that regime. Conversely, high-frequency bands are comparatively sparse and noise-prone; representing them with lightweight, low-rank transformations not only suffices for detail reconstruction but also serves as an effective spectral regularizer. Leveraging the complementary information carried by different frequency bands more effectively therefore represents a critical avenue for carrying out PEFT.

\section{Methodology}
\label{sec:method}

For parameter-efficient fine-tuning, we retrofit each Fourier-domain mixing layer of the LOM
with \textbf{Frequency-Adaptive Adapters (F-Adapters)}—bottleneck MLPs whose width varies
per frequency band according to a governed formula (Eq.~\eqref{eq:rb}).
The design is \emph{model-agnostic}: it can be plugged into any
FFT-based layer without altering the host architecture’s training recipe. The overall F-Adapter pipeline is illustrated in Figure~\ref{fig:fadapter_pipeline}.

\noindent\textbf{Fourier Representation.}
Given an input tensor
$\mathbf{x}\!\in\!\mathbb{R}^{B\times C\times H\times W\times L}$\,
,
we first perform a real $3$-D FFT:
\(
\hat{\mathbf{x}}
     =\operatorname{rFFTN}_{(2,3,4)}(\mathbf{x})
     \in\mathbb{C}^{B\times M_1\times M_2\times M_3\times C},
\)
where $M_i\!=\!\lfloor\tfrac{H}{2}\rfloor{+}1$ for real transforms.
The last dimension ($C$ channels) is split into $K$ non-overlapping
\emph{blocks} with equal width $d=C/K$.

\noindent\textbf{Band Partitioning.}
We retain only the lowest $M\!=\!\min(M_1,M_2,M_3)$ spatial modes
and partition them into $B$ contiguous \emph{frequency bands}
\(
0=b_0 < b_1 < \dots < b_B = M ,\)
\(
\mathcal{B}_b \;=\; \{\,b_{b-1},\dots,b_b-1\},
\)
so that every block–band slice can be processed independently.
For most cases we set $B=4$ and choose
$b_b = \bigl\lfloor\tfrac{b}{B}M\bigr\rfloor$.

\noindent\textbf{Band-Specific Bottleneck Allocation.}
Let $f_b = \tfrac12\!\bigl(b_{b-1}{+}b_b\bigr)$ be the
centre frequency of band $b$.
We allocate a bottleneck width $r_b$ according to
\begin{equation}
\label{eq:rb}
r_b = \Bigl\lfloor\, r_{\min} + (r_{\max}-r_{\min})
        \bigl(1-\tfrac{f_b}{M}\bigr)^{p}\Bigr\rfloor ,
\end{equation}
where $r_{\min}$, $r_{\max}$ and $p$ are hyper-parameters controling the
curvature.  Lower bands receive wider $r_b$,
while higher bands shrink toward $r_{\min}$.

\noindent\textbf{F-Adapter Micro-Architecture}
In the DPOT-H \citep{hao2024dpot} backbone, for each \emph{block} $k\!\in\![K]$ and \emph{band} $b\!\in\![B]$ in Fourier Attention Layer we attach
three tiny adapters:
\begin{equation}
A^{\text{in}}_{k,b}\!: \mathbb{R}^{d}\!\to\!\mathbb{R}^{d},\quad
A^{\text{mid}}_{k,b}\!: \mathbb{R}^{d\,h_t}\!\to\!\mathbb{R}^{d\,h_t},\quad
A^{\text{out}}_{k,b}\!: \mathbb{R}^{d}\!\to\!\mathbb{R}^{d},
\end{equation}
where $h_t$ is the number of retained temporal modes.
Each adapter implements the canonical bottleneck residuum
\begin{align}
\mathbf{z}_{\text{down}} &=\mathbf{W}^{\text{down}}_{b}\mathbf{z}+\mathbf{b}^{\text{down}}_{b},&
\mathbf{z}_{\text{act}} &=\sigma(\mathbf{z}_{\text{down}}),\nonumber\\
\mathbf{z}_{\text{up}}   &=\mathbf{W}^{\text{up}}_{b}\mathbf{z}_{\text{act}}+\mathbf{b}^{\text{up}}_{b},&
\tilde{\mathbf{z}} &= \mathbf{z} + s_b\,\mathbf{z}_{\text{up}},
\label{eq:fadapter-forward}
\end{align}
with $\sigma=\mathrm{GELU}$ and $s_b$ is a scalar.  The matrices have shapes
$\mathbf{W}^{\text{down}}_{b}\!\in\!\mathbb{R}^{r_b\times D}$,
$\mathbf{W}^{\text{up}}_{b}\!\in\!\mathbb{R}^{D\times r_b}$,
where $D\!\in\!\{d,\,d\,h_t\}$ depending on the adapter stage. The Parameter-Efficient
Fine-Tuning forward pass of the F-Adapter within the DPOT backbone is outlined in Algorithm \ref{alg}.

\begin{algorithm}[!t]
  
\caption{F-Adapter PEFT Forward Pass in DPOT's \citep{hao2024dpot} Fourier Attention Layer} 
\label{alg}
\begin{algorithmic}
  \State $\hat{\mathbf{x}} \leftarrow \mathbf{rFFTN}(\mathbf{x})$ \hfill \Comment{Step 1: FFT}
  \State $\hat{\mathbf{x}} \leftarrow \mathrm{reshape}\bigl(\hat{\mathbf{x}},(K,d),(M,M,M_t)\bigr)$ \hfill \Comment{Step 2: reshape channels/modes}

  \For{$k \gets 1$ \TO $K$} \hfill \Comment{Step 3: band loop}
    \For{$b \gets 1$ \TO $B$}
      \State $(i{:}j) \leftarrow \mathrm{band\_indices}(b)$ \hfill \Comment{3-a: compute slice indices}
      \State $\mathbf{z} \leftarrow \hat{\mathbf{x}}[:,\,i{:}j,\,i{:}j,\,0{:}M_t,\,k]$ \hfill \Comment{3-b: extract complex slice}
      \State $\mathbf{z_R} \leftarrow \Re(\mathbf{z})$ ;\; $\mathbf{z_I} \leftarrow \Im(\mathbf{z})$ \hfill \Comment{3-c: split real/imag}
      \State $\mathbf{z_R} \leftarrow A^{\mathrm{in}}_{k,b}(\mathbf{z_R})$ ;\; $\mathbf{z_I} \leftarrow A^{\mathrm{in}}_{k,b}(\mathbf{z_I})$ \hfill \Comment{3-d: input adapter}
      \State $(\mathbf{u_R},\mathbf{u_I}) \leftarrow \mathrm{fourier\_mix}(\mathbf{z_R},\mathbf{z_I},\mathbf{W}^{(1)}_{k})$ \hfill \Comment{3-e: first Fourier mixing}
      \State $\mathbf{u_R} \leftarrow \mathrm{GELU}(\mathbf{u_R})$ ;\; $\mathbf{u_I} \leftarrow \mathrm{GELU}(\mathbf{u_I})$ \hfill \Comment{3-f: activation}
      \State $\mathbf{u_R} \leftarrow A^{\mathrm{mid}}_{k,b}(\mathbf{u_R})$ ;\; $\mathbf{u_I} \leftarrow A^{\mathrm{mid}}_{k,b}(\mathbf{u_I})$ \hfill \Comment{3-g: mid adapter}
      \State $(\mathbf{v_R},\mathbf{v_I}) \leftarrow \mathrm{fourier\_mix}(\mathbf{u_R},\mathbf{u_I},\mathbf{W}^{(2)}_{k})$ \hfill \Comment{3-h: second Fourier mixing}
      \State $\mathbf{v_R} \leftarrow A^{\mathrm{out}}_{k,b}(\mathbf{v_R})$ ;\; $\mathbf{v_I} \leftarrow A^{\mathrm{out}}_{k,b}(\mathbf{v_I})$ \hfill \Comment{3-i: output adapter}
      \State $\hat{\mathbf{x}}[:,\,i{:}j,\,i{:}j,\,0{:}M_t,\,k] \leftarrow \mathbf{v_R} + i\,\mathbf{v_I}$ \hfill \Comment{3-j: scatter back}
 \EndFor
  \EndFor

  \State $\mathbf{x}' \leftarrow \mathrm{iRFFTN}(\hat{\mathbf{x}})$ \hfill \Comment{Step 4: IFFT}
  \State \textbf{return} $\mathbf{x}' + \mathbf{x}$ \hfill \Comment{Step 5: residual}
\end{algorithmic}

\end{algorithm}

\noindent\textbf{Initialization and Training Details.}
We adopt \emph{zero-initialization} for every $\mathbf{W}^{\text{up}}_{b}$
and $\mathbf{b}^{\text{up}}_{b}$ so that, at the start of fine-tuning,
the adapted path is an exact identity and does not perturb the
pre-trained backbone.  Down-projection weights are initialized with Kaiming-uniform initialization. All spectral kernels $\mathbf{W}^{(1)}$,
$\mathbf{W}^{(2)}$ follow the scale definition
$\tfrac{1}{d^2 h_t}$ from the Fourier Attention Later in DPOT.

\begin{figure}[t]
 \vspace{-1.0em}    
    \centering
    \includegraphics[width=\linewidth,height=0.2\textheight,keepaspectratio]{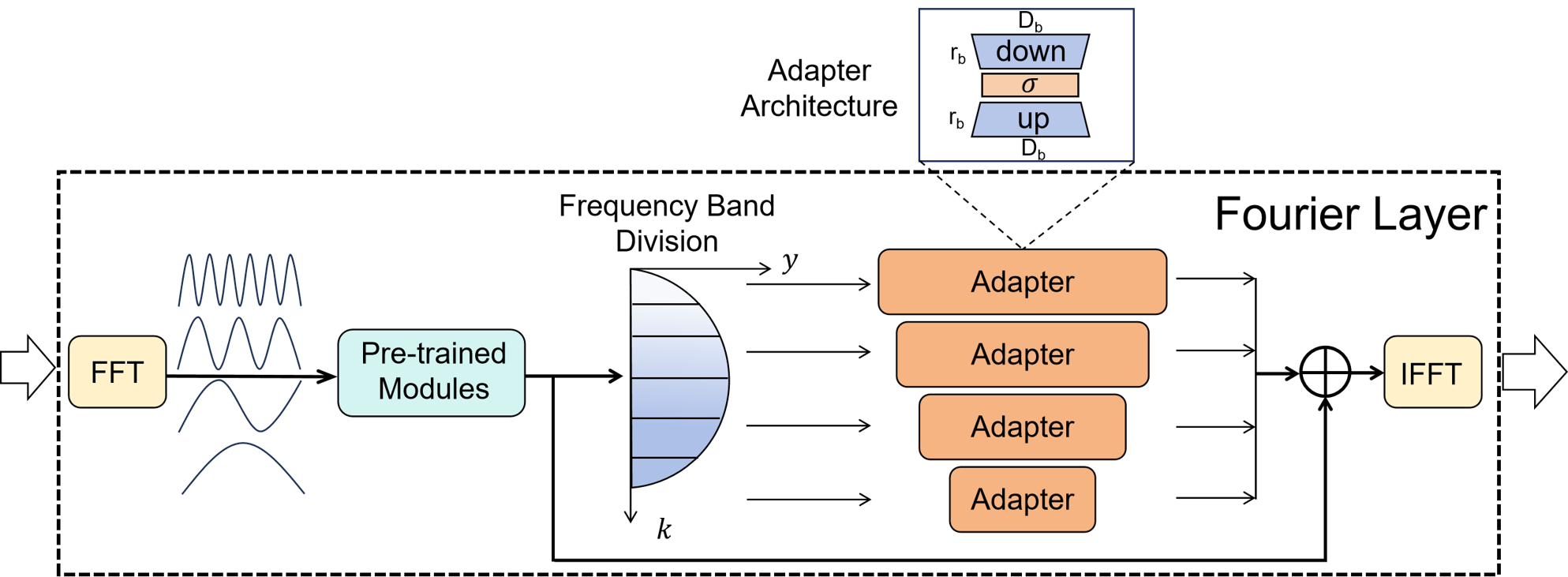}
    \caption{Pipeline for inserting Frequency-Adaptive Adapters (F-Adapters) between consecutive pre-trained Fourier sub-modules in a Fourier layer in LOMs.}
    \label{fig:fadapter_pipeline}
    \vspace{-1.7em}    
\end{figure}

\noindent\textbf{Computational Cost.}
Let $N_s = B K$ denote the total number of block–band slices in one Fourier‐Attention layer.  The additional parameters introduced by the F-Adapters can be written as
\begin{equation}
    \bigl|\Theta_{\mathrm{F\text{-}Adapter}}\bigr|
= \sum_{b=1}^B \Bigl(2\,d\,r_b + r_b\Bigr)\times K\,(2 + h_t)
= (2d + 1)\,K\,(2 + h_t)\sum_{b=1}^B r_b.
\end{equation}
Defining the average bottleneck width $\bar r = \frac{1}{B}\sum_{b=1}^B r_b$, this simplifies to
\begin{equation}
\bigl|\Theta_{\mathrm{F\text{-}Adapter}}\bigr|
= (2d + 1)\,B\,K\,(2 + h_t)\,\bar r
= O\bigl(d\,B\,K\,h_t\,\bar r\bigr).
\end{equation}
For typical settings, this overhead remains below $2\%$ of the host model’s total parameter count.  

\noindent\textbf{Plug\textendash{}and\textendash{}Play Deployment.} 
Figure~\ref{fig:fadapter_pipeline} depicts the \emph{drop-in} procedure that enables an
F-Adapter to be grafted onto \textbf{any} FFT-based Fourier layer appearing in
FNO-style Large Operator Models (LOMs) without disturbing the host training
recipe.  First, the input field
$\mathbf{x}\!\in\!\mathbb{R}^{B\times C\times H\times W\times L}$
is mapped to the spectral domain
$\hat{\mathbf{x}}=\operatorname{rFFTN}(\mathbf{x})$.
The resulting complex tensor is then passed through the \emph{pre-trained
and frozen} Fourier-mixing kernels of the backbone, upon which we retain the
lowest $M$ spatial modes and slice them into $B$ contiguous radial frequency
bands $\mathcal{B}_1,\dots,\mathcal{B}_B$ according to
$0=b_0<b_1<\dots<b_B=M$.
For every block $k\!\in\![K]$ and band $b\!\in\![B]$ we attach a bottleneck
MLP whose width $r_b$ is computed by Eq.~\eqref{eq:rb}.
Each adapter performs the down–activation–up transformation defined in
Eq.~\eqref{eq:fadapter-forward}, writes the adapted coefficients back to their
original spectral locations, and leaves all surrounding FFT logic untouched.
Finally, the spectrum is converted back to physical space via
$\mathbf{x}'=\operatorname{iRFFTN}(\hat{\mathbf{x}}_{\text{adapted}})$ and
added residually to the original signal.
Because the procedure relies solely on the presence of an
FFT/iFFT pair, it applies verbatim to every LOM that scales out of the FNO
family.
If a Fourier layer in an LOM contains \emph{multiple} pre-trained sub-modules that require fine-tuning, one can interleave F-Adapters between those sub-modules following the workflow depicted in Figure~\ref{fig:fadapter_pipeline}. Algorithm~\ref{alg} offers a representative instantiation of such a multi-module F-Adapter deployment.

\begin{table}[!t]
    \centering
    \small          
    \setlength\tabcolsep{5pt}  
    \begin{tabular}{@{}l c c c c c c@{}}
        \toprule
        \multirow{2}{*}{\textbf{Scheme}} & \multirow{2}{*}{\textbf{\% Params}} &
        \multicolumn{3}{c}{\textbf{Rand}} &
        \multicolumn{2}{c}{\textbf{Turbulence}} \\ \cmidrule(lr){3-5} \cmidrule(lr){6-7}
        & & \textbf{Mem (GB)} & \textbf{L2RE ($M{=}1.0$)} & \textbf{L2RE ($M{=}0.1$)} &
        \textbf{Mem (GB)} & \textbf{L2RE} \\
        \midrule
        AdaLoRA~\citep{zhang2023adalora}          & 0.69\%  & 15.83 & 0.6726 & 0.6275 & 27.08 & 0.6795 \\
        HydraLoRA~\citep{tian2024hydralora}        & 0.85\%  & 22.14 & 0.6333 & 0.6164 & 33.85 & 0.6888 \\
        Prompt Tuning~\citep{lester2021power}      & 1.03\%  & 19.82 & 0.6378 & 0.6127 & 23.37 & 0.6651 \\
        Vanilla Adapter~\citep{houlsby2019parameter}       & 1.16\%  & 15.85 & 0.5496 & 0.4893 & 25.41 & 0.4696 \\
        FiLM Adapter~\citep{shysheya2022fit}       & 1.30\%  & 15.85 & 0.5655 & 0.5054 & 26.76 & 0.4987 \\
        RandLoRA~\citep{albert2025randlora}        & 1.36\%  & 15.86 & 0.6370 & 0.6125 & 24.69 & 0.6893 \\
        LoRA~\citep{hu2022lora}                    & 1.37\%  & 15.85 & 0.6395 & 0.6211 & 25.03 & 0.6842 \\
        \textbf{F-Adapter (Ours)}                  & 1.91\%  & 15.88 & \textbf{0.5329} & \textbf{0.4639} & 26.90 & \textbf{0.4523} \\
        SVFT~\citep{lingam2024svft}                & 2.31\%  & 15.91 & 0.6375 & 0.5984 & 23.36 & 0.6655 \\
        \midrule
        Full Fine-Tuning                           & 100.00\% & 25.27 & 0.5391 & 0.4002 & 37.06 & 0.2382 \\
        \bottomrule
    \end{tabular}
    \caption{PEFT results on the 1B-parameter DPOT-H backbone for 3D Navier–Stokes forecasting.}
    \label{tab:main_exp_results}
    \vspace{-1.0em}    
\end{table}

\section{Experiments}
\label{sec:experiments}
\subsection{Main Experiments}
\label{sec:main_experiments}

Our principal comparison of the proposed F-Adapter against a suite of PEFT approaches widely used for LLMs follows the experimental protocol described in Section \ref{empirical_com}.  
In the 3D Navier–Stokes forecasting task in \Cref{tab:main_exp_results}, our method lowers L2RE by from 16.7 \% to 25.4 \% while using nearly the same GPU memory as standard LoRA and its variants. Relative to the strongest baseline (Vanilla Adapter), F-Adapter improves accuracy by from 3.0 \% to 5.2 \% with a comparable parameter budget, which supports the value of frequency-adaptive capacity allocation. 
\begin{figure*}[t]
 \vspace{-1.0em}
  \centering
  \begin{minipage}[b]{0.32\textwidth}
    \centering
    \includegraphics[width=\linewidth]{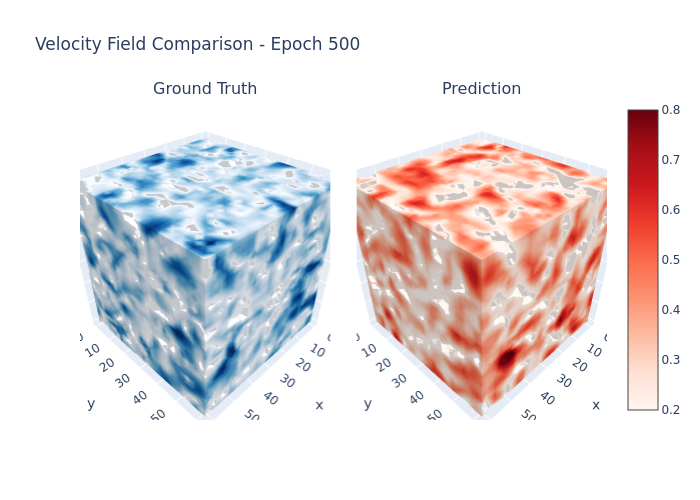}
  \end{minipage}\hfill
  \begin{minipage}[b]{0.32\textwidth}
    \centering
    \includegraphics[width=\linewidth]{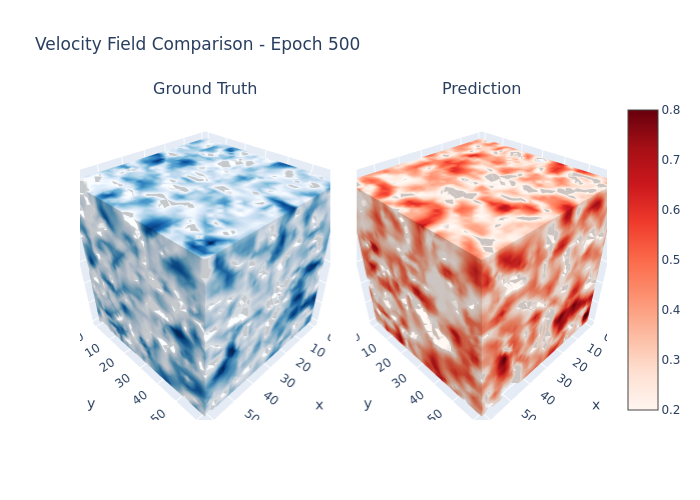}
  \end{minipage}\hfill
  \begin{minipage}[b]{0.32\textwidth}
    \centering
    \includegraphics[width=\linewidth]{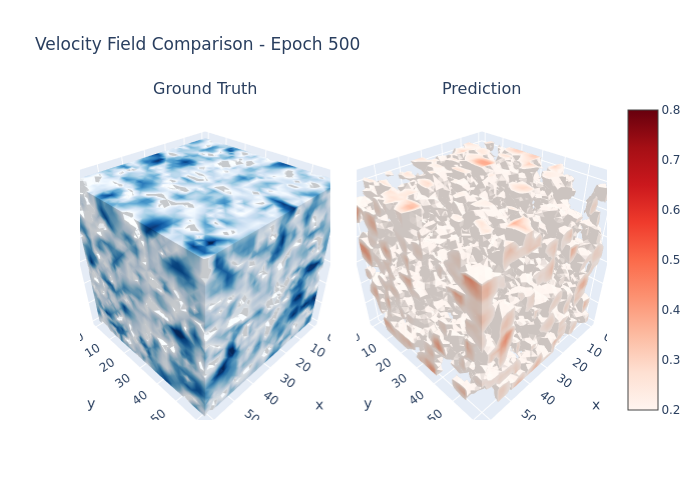}
  \end{minipage}

  \caption{Side-by-side velocity field comparisons for Turbulence at epoch 500. From left to right: Vanilla Adapter, F-Adapter (Ours), and LoRA. Each panel shows ground-truth compared with prediction.}
  \label{fig:velocity_comparison}
   \vspace{-1.7em}
\end{figure*}
Side–by–side slice plots in \Cref{fig:velocity_comparison} of velocity magnitude show that F–Adapter retains filamentary vortical structures and reproduces both low– and high–energy regions with small local amplitude deviations. LoRA yields coarse block–like patches with muted intensities and loses fine–scale features, which is consistent with its collapsed spectrum. 

\Cref{tab:swe_mhd_results} further reports results for PEFT methods on the 2D shallow-water equations (SWE-2D) and the 3D magnetohydrodynamics task (MHD-3D), two settings that demand high-frequency fidelity and broad spectral coverage. For MHD-3D, we follow the data processing protocol of \citet{du2024global} but train on only 24 trajectories, creating a severe data-scarcity scenario. The results in \Cref{tab:swe_mhd_results} show that even under high-frequency regimes and limited data, F-Adapter maintains substantially higher accuracy, whereas alternative PEFT schemes struggle to adapt. These findings indicate that, instead of overlooking high-frequency content, F-Adapter partitions the spectrum so that all frequency bands are handled more efficiently and more effectively.

Full fine-tuning unsurprisingly achieves the best raw accuracy in most cases, but it requires \textasciitilde{}50$\times$ more trainable parameters and \textasciitilde{}1.4$\times$ the memory during training.  Hence, F-Adapter strikes a favourable accuracy--efficiency trade-off and represents a practical alternative when computational resources or deployment budgets are constrained.

These findings substantiate our central claim: explicitly matching adapter capacity to the spectral characteristics of scientific operators is critical for effective and economical adaptation, whereas LoRA or prompt-based methods designed for language do not readily transfer to the SciML regime. For more Spectral analysis, please refer to \Cref{spectral_analysis}.

\subsection{Ablation Studies}
\label{sec:ablation_f_adapter}
\noindent\textbf{Ablation on the Effect of the Dimension–Frequency Schedule.}
To isolate the effect of the dimension–frequency schedule, we introduce an \emph{F-Inverse-Adapter} for ablation study.  
In the F-Inverse-Adapter, we reverse the capacity allocation, giving higher-frequency bands larger bottleneck dimensions while matching F-Adapter’s overall parameter count and memory consumption. Concretely, its bottleneck size is  
\(
\Bigl\lfloor\, r_b \;=\; r_{\min}\;+\;(r_{\max}-r_{\min})\,\Bigl(\tfrac{f_b}{M}\Bigr)^{p}\,\Bigr\rfloor ,
\)
where $M$ is the total number of modes and $p$ is the 
hyperparameter exponent. 

\begin{wraptable}{r}{0.60\textwidth}
 \vspace{-1.5em}
    \centering
    \small
    \setlength\tabcolsep{6pt}
    \begin{tabular}{@{}l cc cc@{}}
        \toprule
        \multirow{2}{*}{\textbf{Scheme on DPOT}} &
        \multicolumn{2}{c}{\textbf{SWE-2D}} &
        \multicolumn{2}{c}{\textbf{MHD-3D}} \\
        \cmidrule(lr){2-3}\cmidrule(lr){4-5}
        & \textbf{L2RE} & \textbf{\% Param} & \textbf{L2RE} & \textbf{\% Param} \\
        \midrule
        AdaLoRA~\citep{zhang2023adalora}          & 0.1061 & 0.70\% & 1.0022 & 0.69\% \\
        HydraLoRA~\citep{tian2024hydralora}       & 0.0956 & 0.88\% & 0.9440 & 0.85\% \\
        Prompt Tuning~\citep{lester2021power}     & 0.1050 & 0.11\% & 0.9950 & 1.03\% \\
        Vanilla Adapter~\citep{houlsby2019parameter} & 0.0902 & 0.48\% & 0.7226 & 1.16\% \\
        FiLM Adapter~\citep{shysheya2022fit}      & 0.0162 & 0.57\% & 0.7593 & 1.30\% \\
        RandLoRA~\citep{albert2025randlora}       & 0.1568 & 1.05\% & 0.9800 & 1.36\% \\
        LoRA~\citep{hu2022lora}                   & 0.1081 & 1.40\% & 0.9845 & 1.37\% \\
        \textbf{F-Adapter (Ours)}                 & \textbf{0.0116} & 1.24\% & \textbf{0.6341} & 1.91\% \\
        SVFT~\citep{lingam2024svft}               & 0.0975 & 0.84\% & 1.0004 & 2.31\% \\
        \midrule
        Full Fine-Tuning                           & 0.0023 & 100\% & 0.4190 & 100\% \\
        \bottomrule
    \end{tabular}
    \caption{PEFT results on the 1B-parameter DPOT-H backbone for 2D shallow‑water equations (SWE-2D) and the 3D magnetohydrodynamic (MHD-3D) in data scarcity conditions.}
    \label{tab:swe_mhd_results}
    \vspace{-2.0em}
\end{wraptable}

The F-Adapter, which assigns larger dimensions to low-frequency bands and smaller ones to high-frequency bands, 
outperforms both the vanilla Adapter and the reversed F-Inverse-Adapter in Table~\ref{tab:freq_adapter_comparison}.  
This confirms that our dimension–frequency schedule is both reasonable and effective.

\noindent\textbf{Ablation on Different Types of Adapters for Fourier Domain.} We explore alternative strategies to address the challenges of applying Adapters in the Fourier domain. Motivated by \citet{xiao2024amortized}, which amortizes Fourier–kernel parameters with a KAN \citep{liu2024kan} to accommodate arbitrary high-frequency modes, we propose the \textit{Fourier Adapter}. We substitute the adapter’s down-projection and up-projection layers with a FourierKAN layer \citep{xu2024fourierkan}, which parameterizes edge functions as truncated Fourier series, and integrate this module directly into the Fourier domain. Motivated by \citet{zhang2025ac}, who applied Chebyshev polynomial bases within KANs to enhance PINNs for PDE solution, we substitute the adapter’s linear layers with Chebyshev‐based KAN modules to form the Chebyshev Adapter. Motivated by \citet{tripura2022wavelet}, who incorporate wavelet transforms into operator learning, and by \citet{zhao2023pinnsformer}, who employ the learnable wavelet-based activation function WaveAct for solving PDEs, we introduce the WaveAct Adapter—an adapter module that uses WaveAct as its nonlinear activation throughout the down- and up-projection layers. Architectural details for all the aforementioned Fourier-domain adapter improvements can be found in Appendix \ref{discussion}.

\begin{table}[!t]
\vspace{-1.0em}    
  \centering
  \small
  \setlength\tabcolsep{5pt}
  \begin{tabular}{l c c c c c c}
    \toprule
    \textbf{Scheme} & \textbf{\% Params} & \textbf{Mem (GB)} & \textbf{FLOPs (G)} & \textbf{Time (ms)} & \textbf{L2RE ($M{=}1$)} & \textbf{L2RE ($M{=}0.1$)} \\
    \midrule
    \textbf{F-Adapter (Ours)} & 1.91\% & 15.88 & 548.53 & \textbf{90.38} & \textbf{0.5329} & \textbf{0.4639} \\
    Chebyshev Adapter         & 2.18\% & 16.19 & 554.80 & 268.02 & 0.5409 & 0.4757 \\
    Fourier Adapter           & 1.93\% & 20.57 & 546.85 & 1449.54 & 0.6584 & 0.6053 \\
    WaveAct Adapter           & 1.16\% & 15.85 & 547.47 &  92.69 & 0.5566 & 0.4691 \\
    \bottomrule
  \end{tabular}
   \caption{Computational cost and accuracy of different types of frequency-domain adapters on the 1B-parameter DPOT-H backbone. }
  \label{tab:freq_adapter_costs}
  \vspace{-3.0em}    
\end{table}

Comparing the results in Table~\ref{tab:freq_adapter_costs} with Table \ref{tab:main_exp_results}, we can conclude that most of the aforementioned adapters specifically designed for the Fourier domain do indeed outperform the Vanilla Adapter and LoRA.
But \textit{Chebyshev Adapter} slightly lags behind F-Adapter in accuracy and increases latency by many times, reflecting the overhead of the dense Chebyshev polynomial expansion within the KAN architecture. The \textit{Fourier Adapter} ’s costly FourierKAN edge‐function evaluations increase memory use by 29\% and slow runtime tenfold, and its accuracy falls sharply, illustrating that naïvely adding high‐order Fourier series exacerbates spectral aliasing.  \textit{WaveAct Adapter} equals F-Adapter in memory and nearly in speed, yet its accuracy lags, implying that learnable wavelet activations alone cannot fully capture the high-frequency dynamics of PDE solutions in Fourier domain.

\begin{wraptable}{r}{0.48\textwidth}
 \vspace{-1.7em}
  \centering
  \small
  \begin{tabular}{@{}lccc@{}}
    \toprule
    \textbf{Scheme} & \textbf{Rand} & \textbf{Rand} & \textbf{Turb} \\
    \midrule
    F-Inverse-Adapter    & 0.5664 & 0.4983 & 0.4747 \\
    Vanilla Adapter~\citep{houlsby2019parameter}       & 0.5496 & 0.4893 & 0.4696 \\
    \textbf{F-Adapter (Ours)}   & \textbf{0.5329} & \textbf{0.4639} & \textbf{0.4523} \\
    \bottomrule
  \end{tabular}
  \caption{Ablation on adapter dimension schedules. Columns report L2 relative error (L2RE) on the Rand dataset at \(M=1.0\) (first column), the Rand dataset at \(M=0.1\) (second column), and the Turb dataset at \(M=1.0\) (third column).}
  \label{tab:freq_adapter_comparison}
   \vspace{-4.0em}
\end{wraptable}

This ablation study demonstrates that adaptively allocating the lightweight adapter’s low-rank dimension according to spectral content (F-Adapter) is more effective than replacing the projection layers with heavier functional bases. Other ablation studies that assess how hyperparameter choices affect F-Adapter performance are reported in \Cref{ablation_parameter}.

\subsection{Discussions}
In this section, we discuss the application of frequency-based capacity allocation to other non-FNO based LOMs. Although an FNO backbone provides direct access to frequency features, our main focus is on assigning each frequency band its own proper bottleneck dimension rather than strictly performing convolution in the frequency domain. This insight allows our F‑Adapter to extend naturally to non‑FFT architectures. On the pure transformer‑based Poseidon~\citep{herde2024poseidon} model, we estimate frequency energy for each Linear layer from adjacent‑token differences, and for each Conv2d layer we perform a local real 2‑D FFT on the convolution output to obtain an energy spectrum that guides the adapter’s weight generation. The PEFT adapter itself still operates in the native spatial domain. Capacities are allocated to bands according to their energy following \Cref{eq:rb}, which equips the model with frequency awareness. \Cref{tab:peft_poseidon_swe2d} reports the resulting performance gains on SWE-2D.

\begin{wraptable}{r}{0.40\textwidth}
 \vspace{-1.5em}
\centering
\small
\setlength{\tabcolsep}{6pt}
\begin{tabular}{@{}lcc@{}}
\toprule
\textbf{Scheme on Poseidon} & \textbf{L2RE} & \textbf{\% Param} \\
\midrule
Prompt Tuning~\citep{lester2021power}        & $>1.0$  & 0.07\% \\
LoRA~\citep{hu2022lora}                      & 0.4010  & 2.07\% \\
RandLoRA~\citep{albert2025randlora}          & 0.3134  & 2.07\% \\
Vanilla Adapter~\citep{houlsby2019parameter} & 0.6231  & 2.18\% \\
AdaLoRA~\citep{zhang2023adalora}             & 0.3756  & 2.32\% \\
HydraLoRA~\citep{tian2024hydralora}          & 0.3474  & 2.57\% \\
FiLM Adapter~\citep{shysheya2022fit}         & 0.4567  & 3.19\% \\
SVFT~\citep{lingam2024svft}                  & 0.6742  & 4.22\% \\
F-Adapter (Ours)                    & 0.4311 & 4.17\% \\
\textbf{F-LoRA (Ours)}                                       & \textbf{0.2746}  & 4.78\% \\
\midrule
Full Fine-Tuning                             & 0.1534  & 100\% \\
\bottomrule
\end{tabular}
\caption{PEFT results on the Poseidon backbone for 2D shallow-water equations (SWE-2D).}
\label{tab:peft_poseidon_swe2d}
 \vspace{-5.5em}
\end{wraptable}

We observe that adapters deliver strong results when the base model is an FNO, yet their effectiveness declines sharply on a transformer backbone. In contrast, LoRA and its variants demonstrate robust performance on transformer backbones, reflecting established best practices in fine-tuning LLMs. But our F‑Adapter still narrows this gap by significantly improving adapter performance on transformers. Building on this insight, we introduce F‑LoRA: it preserves the frequency‑based capacity allocation of F‑Adapter while replacing the bottleneck MLP with LoRA‑style low‑rank linear updates. For detailed design, please refer to \Cref{f-mechanism_in_transformer}. F‑LoRA achieves SOTA performance across a broad suite of PEFT methods in this setting.

\section{Conclusion}
We provide the first systematic PEFT study for pretrained LOMs, exposing a depth‐amplified spectral error floor in LoRA. We prove that adapters avoid this limit and, guided by Fourier energy analysis, design F-Adapters that match capacity to modal energy. Updating at most 2\% of weights, they set SOTA records in L2REs on several challenging partial differential equation tasks, validating a principled and efficient route for fine-tuning LOMs.

\section*{Acknowledgements}
We would like to thank the anonymous reviewers and area chairs for their helpful comments. This work is supported by NSFC 62306252, Hong Kong ECS award 27309624, Guangdong NSF
2024A1515012444, and the central fund from HKU IDS.

\bibliography{main}

\begin{thebibliography}{61}
\providecommand{\natexlab}[1]{#1}
\providecommand{\url}[1]{\texttt{#1}}
\expandafter\ifx\csname urlstyle\endcsname\relax
  \providecommand{\doi}[1]{doi: #1}\else
  \providecommand{\doi}{doi: \begingroup \urlstyle{rm}\Url}\fi

\bibitem[Albert et~al.(2025)Albert, Zhang, Saratchandran, Rodriguez-Opazo, Hengel, and Abbasnejad]{albert2025randlora}
Paul Albert, Frederic~Z Zhang, Hemanth Saratchandran, Cristian Rodriguez-Opazo, Anton van~den Hengel, and Ehsan Abbasnejad.
\newblock Randlora: Full-rank parameter-efficient fine-tuning of large models.
\newblock \emph{arXiv preprint arXiv:2502.00987}, 2025.

\bibitem[Azizzadenesheli et~al.(2024)Azizzadenesheli, Kovachki, Li, Liu-Schiaffini, Kossaifi, and Anandkumar]{azizzadenesheli2024neural}
Kamyar Azizzadenesheli, Nikola Kovachki, Zongyi Li, Miguel Liu-Schiaffini, Jean Kossaifi, and Anima Anandkumar.
\newblock Neural operators for accelerating scientific simulations and design.
\newblock \emph{Nature Reviews Physics}, 6\penalty0 (5):\penalty0 320--328, 2024.

\bibitem[Bonev et~al.(2023)Bonev, Kurth, Hundt, Pathak, Baust, Kashinath, and Anandkumar]{bonev2023spherical}
Boris Bonev, Thorsten Kurth, Christian Hundt, Jaideep Pathak, Maximilian Baust, Karthik Kashinath, and Anima Anandkumar.
\newblock Spherical fourier neural operators: Learning stable dynamics on the sphere.
\newblock In \emph{International conference on machine learning}, pages 2806--2823. PMLR, 2023.

\bibitem[Carlsson et~al.(2016)Carlsson, Hansteen, Gudiksen, Leenaarts, and De~Pontieu]{carlsson2016publicly}
Mats Carlsson, Viggo~H Hansteen, Boris~V Gudiksen, Jorrit Leenaarts, and Bart De~Pontieu.
\newblock A publicly available simulation of an enhanced network region of the sun.
\newblock \emph{Astronomy \& Astrophysics}, 585:\penalty0 A4, 2016.

\bibitem[Chen et~al.(2022)Chen, Ge, Tong, Wang, Song, Wang, and Luo]{chen2022adaptformer}
Shoufa Chen, Chongjian Ge, Zhan Tong, Jiangliu Wang, Yibing Song, Jue Wang, and Ping Luo.
\newblock Adaptformer: Adapting vision transformers for scalable visual recognition.
\newblock \emph{Advances in Neural Information Processing Systems}, 35:\penalty0 16664--16678, 2022.

\bibitem[Chen et~al.(2024)Chen, Zhou, Li, Wang, Gao, Shi, Zhang, and Li]{chen2024omniarch}
Tianyu Chen, Haoyi Zhou, Ying Li, Hao Wang, Chonghan Gao, Rongye Shi, Shanghang Zhang, and Jianxin Li.
\newblock Omniarch: Building foundation model for scientific computing.
\newblock \emph{arXiv preprint arXiv:2402.16014}, 2024.

\bibitem[Cho et~al.(2023)Cho, Lee, Rim, and Park]{cho2023hypernetwork}
Woojin Cho, Kookjin Lee, Donsub Rim, and Noseong Park.
\newblock Hypernetwork-based meta-learning for low-rank physics-informed neural networks.
\newblock \emph{Advances in Neural Information Processing Systems}, 36:\penalty0 11219--11231, 2023.

\bibitem[Cram{\'e}r and Wold(1936)]{cramer1936some}
Harald Cram{\'e}r and Herman Wold.
\newblock Some theorems on distribution functions.
\newblock \emph{Journal of the London Mathematical Society}, 1\penalty0 (4):\penalty0 290--294, 1936.

\bibitem[Du et~al.(2024)Du, Li, Gnanasambandam, Du, Wang, and Shen]{du2024global}
Yutao Du, Qin Li, Raghav Gnanasambandam, Mengnan Du, Haimin Wang, and Bo~Shen.
\newblock Global-local fourier neural operator for accelerating coronal magnetic field model.
\newblock In \emph{2024 IEEE International Conference on Big Data (BigData)}, pages 1964--1971. IEEE, 2024.

\bibitem[Eckart and Young(1936)]{eckart1936approximation}
Carl Eckart and Gale Young.
\newblock The approximation of one matrix by another of lower rank.
\newblock \emph{Psychometrika}, 1\penalty0 (3):\penalty0 211--218, 1936.

\bibitem[George et~al.(2022)George, Zhao, Kossaifi, Li, and Anandkumar]{george2022incremental}
Robert~Joseph George, Jiawei Zhao, Jean Kossaifi, Zongyi Li, and Anima Anandkumar.
\newblock Incremental spatial and spectral learning of neural operators for solving large-scale pdes.
\newblock \emph{arXiv preprint arXiv:2211.15188}, 2022.

\bibitem[Guibas et~al.(2021)Guibas, Mardani, Li, Tao, Anandkumar, and Catanzaro]{guibas2021adaptive}
John Guibas, Morteza Mardani, Zongyi Li, Andrew Tao, Anima Anandkumar, and Bryan Catanzaro.
\newblock Adaptive fourier neural operators: Efficient token mixers for transformers.
\newblock \emph{arXiv preprint arXiv:2111.13587}, 2021.

\bibitem[Guo et~al.(2020)Guo, Rush, and Kim]{guo2020parameter}
Demi Guo, Alexander~M Rush, and Yoon Kim.
\newblock Parameter-efficient transfer learning with diff pruning.
\newblock \emph{arXiv preprint arXiv:2012.07463}, 2020.

\bibitem[Hao et~al.(2024)Hao, Su, Liu, Berner, Ying, Su, Anandkumar, Song, and Zhu]{hao2024dpot}
Zhongkai Hao, Chang Su, Songming Liu, Julius Berner, Chengyang Ying, Hang Su, Anima Anandkumar, Jian Song, and Jun Zhu.
\newblock Dpot: Auto-regressive denoising operator transformer for large-scale pde pre-training.
\newblock \emph{arXiv preprint arXiv:2403.03542}, 2024.

\bibitem[Hemmasian and Farimani(2024)]{hemmasian2024pretraining}
AmirPouya Hemmasian and Amir~Barati Farimani.
\newblock Pretraining a neural operator in lower dimensions.
\newblock \emph{arXiv preprint arXiv:2407.17616}, 2024.

\bibitem[Herde et~al.(2024)Herde, Raonic, Rohner, K{\"a}ppeli, Molinaro, de~B{\'e}zenac, and Mishra]{herde2024poseidon}
Maximilian Herde, Bogdan Raonic, Tobias Rohner, Roger K{\"a}ppeli, Roberto Molinaro, Emmanuel de~B{\'e}zenac, and Siddhartha Mishra.
\newblock Poseidon: Efficient foundation models for pdes.
\newblock \emph{Advances in Neural Information Processing Systems}, 37:\penalty0 72525--72624, 2024.

\bibitem[Houlsby et~al.(2019)Houlsby, Giurgiu, Jastrzebski, Morrone, De~Laroussilhe, Gesmundo, Attariyan, and Gelly]{houlsby2019parameter}
Neil Houlsby, Andrei Giurgiu, Stanislaw Jastrzebski, Bruna Morrone, Quentin De~Laroussilhe, Andrea Gesmundo, Mona Attariyan, and Sylvain Gelly.
\newblock Parameter-efficient transfer learning for nlp.
\newblock In \emph{International conference on machine learning}, pages 2790--2799. PMLR, 2019.

\bibitem[Hu et~al.(2022)Hu, Shen, Wallis, Allen-Zhu, Li, Wang, Wang, Chen, et~al.]{hu2022lora}
Edward~J Hu, Yelong Shen, Phillip Wallis, Zeyuan Allen-Zhu, Yuanzhi Li, Shean Wang, Lu~Wang, Weizhu Chen, et~al.
\newblock Lora: Low-rank adaptation of large language models.
\newblock \emph{ICLR}, 1\penalty0 (2):\penalty0 3, 2022.

\bibitem[Huang et~al.(2025)Huang, Xu, Liu, Liu, Han, Yuan, and Li]{huang2025densely}
Jiaqi Huang, Zunnan Xu, Ting Liu, Yong Liu, Haonan Han, Kehong Yuan, and Xiu Li.
\newblock Densely connected parameter-efficient tuning for referring image segmentation.
\newblock \emph{arXiv preprint arXiv:2501.08580}, 2025.

\bibitem[Lester et~al.(2021)Lester, Al-Rfou, and Constant]{lester2021power}
Brian Lester, Rami Al-Rfou, and Noah Constant.
\newblock The power of scale for parameter-efficient prompt tuning.
\newblock \emph{arXiv preprint arXiv:2104.08691}, 2021.

\bibitem[Li and Liang(2021)]{li2021prefix}
Xiang~Lisa Li and Percy Liang.
\newblock Prefix-tuning: Optimizing continuous prompts for generation.
\newblock \emph{arXiv preprint arXiv:2101.00190}, 2021.

\bibitem[Li et~al.(2023{\natexlab{a}})Li, Shu, and Barati~Farimani]{li2023scalable}
Zijie Li, Dule Shu, and Amir Barati~Farimani.
\newblock Scalable transformer for pde surrogate modeling.
\newblock \emph{Advances in Neural Information Processing Systems}, 36:\penalty0 28010--28039, 2023{\natexlab{a}}.

\bibitem[Li et~al.(2020)Li, Kovachki, Azizzadenesheli, Liu, Bhattacharya, Stuart, and Anandkumar]{li2020fourier}
Zongyi Li, Nikola Kovachki, Kamyar Azizzadenesheli, Burigede Liu, Kaushik Bhattacharya, Andrew Stuart, and Anima Anandkumar.
\newblock Fourier neural operator for parametric partial differential equations.
\newblock \emph{arXiv preprint arXiv:2010.08895}, 2020.

\bibitem[Li et~al.(2023{\natexlab{b}})Li, Huang, Liu, and Anandkumar]{li2023fourier}
Zongyi Li, Daniel~Zhengyu Huang, Burigede Liu, and Anima Anandkumar.
\newblock Fourier neural operator with learned deformations for pdes on general geometries.
\newblock \emph{Journal of Machine Learning Research}, 24\penalty0 (388):\penalty0 1--26, 2023{\natexlab{b}}.

\bibitem[Lingam et~al.(2024)Lingam, Neerkaje, Vavre, Shetty, Gudur, Ghosh, Choi, Dimakis, Bojchevski, and Sanghavi]{lingam2024svft}
Vijay~Chandra Lingam, Atula Neerkaje, Aditya Vavre, Aneesh Shetty, Gautham~Krishna Gudur, Joydeep Ghosh, Eunsol Choi, Alex Dimakis, Aleksandar Bojchevski, and Sujay Sanghavi.
\newblock Svft: Parameter-efficient fine-tuning with singular vectors.
\newblock \emph{Advances in Neural Information Processing Systems}, 37:\penalty0 41425--41446, 2024.

\bibitem[Liu and Demberg(2024)]{liu2024rst}
Dongqi Liu and Vera Demberg.
\newblock Rst-lora: A discourse-aware low-rank adaptation for long document abstractive summarization.
\newblock \emph{arXiv preprint arXiv:2405.00657}, 2024.

\bibitem[Liu et~al.(2022{\natexlab{a}})Liu, Tam, Muqeeth, Mohta, Huang, Bansal, and Raffel]{liu2022few}
Haokun Liu, Derek Tam, Mohammed Muqeeth, Jay Mohta, Tenghao Huang, Mohit Bansal, and Colin~A Raffel.
\newblock Few-shot parameter-efficient fine-tuning is better and cheaper than in-context learning.
\newblock \emph{Advances in Neural Information Processing Systems}, 35:\penalty0 1950--1965, 2022{\natexlab{a}}.

\bibitem[Liu et~al.(2023)Liu, Jafarzadeh, and Yu]{liu2023domain}
Ning Liu, Siavash Jafarzadeh, and Yue Yu.
\newblock Domain agnostic fourier neural operators.
\newblock \emph{Advances in Neural Information Processing Systems}, 36:\penalty0 47438--47450, 2023.

\bibitem[Liu et~al.(2022{\natexlab{b}})Liu, Ma, Tian, He, and Kira]{liu2022polyhistor}
Yen-Cheng Liu, Chih-Yao Ma, Junjiao Tian, Zijian He, and Zsolt Kira.
\newblock Polyhistor: Parameter-efficient multi-task adaptation for dense vision tasks.
\newblock \emph{Advances in neural information processing systems}, 35:\penalty0 36889--36901, 2022{\natexlab{b}}.

\bibitem[Liu et~al.(2024)Liu, Wang, Vaidya, Ruehle, Halverson, Solja{\v{c}}i{\'c}, Hou, and Tegmark]{liu2024kan}
Ziming Liu, Yixuan Wang, Sachin Vaidya, Fabian Ruehle, James Halverson, Marin Solja{\v{c}}i{\'c}, Thomas~Y Hou, and Max Tegmark.
\newblock Kan: Kolmogorov-arnold networks.
\newblock \emph{arXiv preprint arXiv:2404.19756}, 2024.

\bibitem[Loeschcke et~al.(2025)Loeschcke, Pitt, George, Zhao, Luo, Tian, Kossaifi, and Anandkumar]{loeschcke2025tensorgrad}
Sebastian Loeschcke, David Pitt, Robert~Joseph George, Jiawei Zhao, Cheng Luo, Yuandong Tian, Jean Kossaifi, and Anima Anandkumar.
\newblock Tensorgrad: Tensor gradient robust decomposition for memory-efficient neural operator training.
\newblock \emph{arXiv preprint arXiv:2501.02379}, 2025.

\bibitem[Marouf et~al.(2024)Marouf, Tartaglione, and Lathuili{\`e}re]{marouf2024mini}
Imad~Eddine Marouf, Enzo Tartaglione, and St{\'e}phane Lathuili{\`e}re.
\newblock Mini but mighty: Finetuning vits with mini adapters.
\newblock In \emph{Proceedings of the IEEE/CVF Winter Conference on Applications of Computer Vision}, pages 1732--1741, 2024.

\bibitem[McCabe et~al.(2023)McCabe, Blancard, Parker, Ohana, Cranmer, Bietti, Eickenberg, Golkar, Krawezik, Lanusse, et~al.]{mccabe2023multiple}
Michael McCabe, Bruno R{\'e}galdo-Saint Blancard, Liam~Holden Parker, Ruben Ohana, Miles Cranmer, Alberto Bietti, Michael Eickenberg, Siavash Golkar, Geraud Krawezik, Francois Lanusse, et~al.
\newblock Multiple physics pretraining for physical surrogate models.
\newblock \emph{arXiv preprint arXiv:2310.02994}, 2023.

\bibitem[Menon and Jagtap(2025)]{menon2025anant}
Sidharth~S Menon and Ameya~D Jagtap.
\newblock Anant-net: Breaking the curse of dimensionality with scalable and interpretable neural surrogate for high-dimensional pdes.
\newblock \emph{arXiv preprint arXiv:2505.03595}, 2025.

\bibitem[O'Leary-Roseberry et~al.(2024)O'Leary-Roseberry, Chen, Villa, and Ghattas]{o2024derivative}
Thomas O'Leary-Roseberry, Peng Chen, Umberto Villa, and Omar Ghattas.
\newblock Derivative-informed neural operator: an efficient framework for high-dimensional parametric derivative learning.
\newblock \emph{Journal of Computational Physics}, 496:\penalty0 112555, 2024.

\bibitem[Parovi{\'c} et~al.(2023)Parovi{\'c}, Ansell, Vuli{\'c}, and Korhonen]{parovic2023cross}
Marinela Parovi{\'c}, Alan Ansell, Ivan Vuli{\'c}, and Anna Korhonen.
\newblock Cross-lingual transfer with target language-ready task adapters.
\newblock \emph{arXiv preprint arXiv:2306.02767}, 2023.

\bibitem[Pathak et~al.(2022)Pathak, Subramanian, Harrington, Raja, Chattopadhyay, Mardani, Kurth, Hall, Li, Azizzadenesheli, et~al.]{pathak2022fourcastnet}
Jaideep Pathak, Shashank Subramanian, Peter Harrington, Sanjeev Raja, Ashesh Chattopadhyay, Morteza Mardani, Thorsten Kurth, David Hall, Zongyi Li, Kamyar Azizzadenesheli, et~al.
\newblock Fourcastnet: A global data-driven high-resolution weather model using adaptive fourier neural operators.
\newblock \emph{arXiv preprint arXiv:2202.11214}, 2022.

\bibitem[Qin et~al.(2024)Qin, Lyu, Peng, Geng, Wang, Gao, Liu, and Wang]{qin2024toward}
Shaoxiang Qin, Fuyuan Lyu, Wenhui Peng, Dingyang Geng, Ju~Wang, Naiping Gao, Xue Liu, and Liangzhu~Leon Wang.
\newblock Toward a better understanding of fourier neural operators: Analysis and improvement from a spectral perspective.
\newblock \emph{arXiv e-prints}, pages arXiv--2404, 2024.

\bibitem[Rahman et~al.(2024)Rahman, George, Elleithy, Leibovici, Li, Bonev, White, Berner, Yeh, Kossaifi, et~al.]{rahman2024pretraining}
Md~Ashiqur Rahman, Robert~Joseph George, Mogab Elleithy, Daniel Leibovici, Zongyi Li, Boris Bonev, Colin White, Julius Berner, Raymond~A Yeh, Jean Kossaifi, et~al.
\newblock Pretraining codomain attention neural operators for solving multiphysics pdes.
\newblock \emph{Advances in Neural Information Processing Systems}, 37:\penalty0 104035--104064, 2024.

\bibitem[Raissi et~al.(2019)Raissi, Perdikaris, and Karniadakis]{raissi2019physics}
Maziar Raissi, Paris Perdikaris, and George~E Karniadakis.
\newblock Physics-informed neural networks: A deep learning framework for solving forward and inverse problems involving nonlinear partial differential equations.
\newblock \emph{Journal of Computational physics}, 378:\penalty0 686--707, 2019.

\bibitem[Schaeffer et~al.(2013)Schaeffer, Caflisch, Hauck, and Osher]{schaeffer2013sparse}
Hayden Schaeffer, Russel Caflisch, Cory~D Hauck, and Stanley Osher.
\newblock Sparse dynamics for partial differential equations.
\newblock \emph{Proceedings of the National Academy of Sciences}, 110\penalty0 (17):\penalty0 6634--6639, 2013.

\bibitem[Shen et~al.(2024)Shen, Marwah, and Talwalkar]{shen2024ups}
Junhong Shen, Tanya Marwah, and Ameet Talwalkar.
\newblock Ups: Efficiently building foundation models for pde solving via cross-modal adaptation.
\newblock \emph{arXiv preprint arXiv:2403.07187}, 2024.

\bibitem[Shysheya et~al.(2022)Shysheya, Bronskill, Patacchiola, Nowozin, and Turner]{shysheya2022fit}
Aliaksandra Shysheya, John Bronskill, Massimiliano Patacchiola, Sebastian Nowozin, and Richard~E Turner.
\newblock Fit: Parameter efficient few-shot transfer learning for personalized and federated image classification.
\newblock \emph{arXiv preprint arXiv:2206.08671}, 2022.

\bibitem[SS et~al.(2024)SS, AR, KP, et~al.]{ss2024chebyshev}
Sidharth SS, Keerthana AR, Anas KP, et~al.
\newblock Chebyshev polynomial-based kolmogorov-arnold networks: An efficient architecture for nonlinear function approximation.
\newblock \emph{arXiv preprint arXiv:2405.07200}, 2024.

\bibitem[Subramanian et~al.(2023)Subramanian, Harrington, Keutzer, Bhimji, Morozov, Mahoney, and Gholami]{subramanian2023towards}
Shashank Subramanian, Peter Harrington, Kurt Keutzer, Wahid Bhimji, Dmitriy Morozov, Michael~W Mahoney, and Amir Gholami.
\newblock Towards foundation models for scientific machine learning: Characterizing scaling and transfer behavior.
\newblock \emph{Advances in Neural Information Processing Systems}, 36:\penalty0 71242--71262, 2023.

\bibitem[Takamoto et~al.(2022{\natexlab{a}})Takamoto, Praditia, Leiteritz, MacKinlay, Alesiani, Pflüger, and Niepert]{DARUS-2986_2022}
Makoto Takamoto, Timothy Praditia, Raphael Leiteritz, Dan MacKinlay, Francesco Alesiani, Dirk Pflüger, and Mathias Niepert.
\newblock {PDEBench Datasets}, 2022{\natexlab{a}}.
\newblock URL \url{https://doi.org/10.18419/DARUS-2986}.

\bibitem[Takamoto et~al.(2022{\natexlab{b}})Takamoto, Praditia, Leiteritz, MacKinlay, Alesiani, Pfl{\"u}ger, and Niepert]{takamoto2022pdebench}
Makoto Takamoto, Timothy Praditia, Raphael Leiteritz, Daniel MacKinlay, Francesco Alesiani, Dirk Pfl{\"u}ger, and Mathias Niepert.
\newblock Pdebench: An extensive benchmark for scientific machine learning.
\newblock \emph{Advances in Neural Information Processing Systems}, 35:\penalty0 1596--1611, 2022{\natexlab{b}}.

\bibitem[Tian et~al.(2024)Tian, Shi, Guo, Li, and Xu]{tian2024hydralora}
Chunlin Tian, Zhan Shi, Zhijiang Guo, Li~Li, and Cheng-Zhong Xu.
\newblock Hydralora: An asymmetric lora architecture for efficient fine-tuning.
\newblock \emph{Advances in Neural Information Processing Systems}, 37:\penalty0 9565--9584, 2024.

\bibitem[Tran et~al.(2021)Tran, Mathews, Xie, and Ong]{tran2021factorized}
Alasdair Tran, Alexander Mathews, Lexing Xie, and Cheng~Soon Ong.
\newblock Factorized fourier neural operators.
\newblock \emph{arXiv preprint arXiv:2111.13802}, 2021.

\bibitem[Tripura and Chakraborty(2022)]{tripura2022wavelet}
Tapas Tripura and Souvik Chakraborty.
\newblock Wavelet neural operator: a neural operator for parametric partial differential equations.
\newblock \emph{arXiv preprint arXiv:2205.02191}, 2022.

\bibitem[Wang et~al.(2025)Wang, Bai, Eshaghi, Anitescu, Zhuang, Rabczuk, and Liu]{wang2025transfer}
Yizheng Wang, Jinshuai Bai, Mohammad~Sadegh Eshaghi, Cosmin Anitescu, Xiaoying Zhuang, Timon Rabczuk, and Yinghua Liu.
\newblock Transfer learning in physics-informed neural networks: Full fine-tuning, lightweight fine-tuning, and low-rank adaptation.
\newblock \emph{arXiv preprint arXiv:2502.00782}, 2025.

\bibitem[Xiao et~al.(2024)Xiao, Kou, Zhongkai, Lin, and Deng]{xiao2024amortized}
Zipeng Xiao, Siqi Kou, Hao Zhongkai, Bokai Lin, and Zhijie Deng.
\newblock Amortized fourier neural operators.
\newblock \emph{Advances in Neural Information Processing Systems}, 37:\penalty0 115001--115020, 2024.

\bibitem[Xu et~al.(2024)Xu, Chen, Li, Yang, Wang, Hu, and Ngai]{xu2024fourierkan}
Jinfeng Xu, Zheyu Chen, Jinze Li, Shuo Yang, Wei Wang, Xiping Hu, and Edith C-H Ngai.
\newblock Fourierkan-gcf: Fourier kolmogorov-arnold network--an effective and efficient feature transformation for graph collaborative filtering.
\newblock \emph{arXiv preprint arXiv:2406.01034}, 2024.

\bibitem[Xu et~al.(2025)Xu, Zhang, and Cai]{xu2025understanding}
Zhi-Qin~John Xu, Lulu Zhang, and Wei Cai.
\newblock On understanding and overcoming spectral biases of deep neural network learning methods for solving pdes.
\newblock \emph{arXiv preprint arXiv:2501.09987}, 2025.

\bibitem[You et~al.(2024)You, Xu, and Cai]{you2024mscalefno}
Zhilin You, Zhenli Xu, and Wei Cai.
\newblock Mscalefno: Multi-scale fourier neural operator learning for oscillatory function spaces.
\newblock \emph{arXiv preprint arXiv:2412.20183}, 2024.

\bibitem[Zaken et~al.(2021)Zaken, Ravfogel, and Goldberg]{zaken2021bitfit}
Elad~Ben Zaken, Shauli Ravfogel, and Yoav Goldberg.
\newblock Bitfit: Simple parameter-efficient fine-tuning for transformer-based masked language-models.
\newblock \emph{arXiv preprint arXiv:2106.10199}, 2021.

\bibitem[Zhang et~al.(2025)Zhang, Huang, and Wang]{zhang2025ac}
Hangwei Zhang, Zhimu Huang, and Yan Wang.
\newblock Ac-pkan: Attention-enhanced and chebyshev polynomial-based physics-informed kolmogorov-arnold networks.
\newblock \emph{arXiv preprint arXiv:2505.08687}, 2025.

\bibitem[Zhang et~al.(2023{\natexlab{a}})Zhang, Chen, Bukharin, Karampatziakis, He, Cheng, Chen, and Zhao]{zhang2023adalora}
Qingru Zhang, Minshuo Chen, Alexander Bukharin, Nikos Karampatziakis, Pengcheng He, Yu~Cheng, Weizhu Chen, and Tuo Zhao.
\newblock Adalora: Adaptive budget allocation for parameter-efficient fine-tuning.
\newblock \emph{arXiv preprint arXiv:2303.10512}, 2023{\natexlab{a}}.

\bibitem[Zhang et~al.(2023{\natexlab{b}})Zhang, Rajabi, Duh, and Koehn]{zhang2023machine}
Xuan Zhang, Navid Rajabi, Kevin Duh, and Philipp Koehn.
\newblock Machine translation with large language models: Prompting, few-shot learning, and fine-tuning with qlora.
\newblock In \emph{Proceedings of the Eighth Conference on Machine Translation}, pages 468--481, 2023{\natexlab{b}}.

\bibitem[Zhao et~al.(2023)Zhao, Ding, and Prakash]{zhao2023pinnsformer}
Zhiyuan Zhao, Xueying Ding, and B~Aditya Prakash.
\newblock Pinnsformer: A transformer-based framework for physics-informed neural networks.
\newblock \emph{arXiv preprint arXiv:2307.11833}, 2023.

\bibitem[Zhou et~al.(2024)Zhou, Lorsung, Hemmasian, and Farimani]{zhou2024strategies}
Anthony Zhou, Cooper Lorsung, AmirPouya Hemmasian, and Amir~Barati Farimani.
\newblock Strategies for pretraining neural operators.
\newblock \emph{arXiv preprint arXiv:2406.08473}, 2024.

\end{thebibliography}
\bibliographystyle{plainnat}

\newpage
\appendix

\section{Preliminary}
\label{sec:prelim}

Let $\mathbf{h}\in\mathbb{R}^{d}$ denote the hidden activation of a transformer sub-layer and let $\mathbf{W}_0\in\mathbb{R}^{d\times d}$ be the frozen projection learned during pre-training.

\paragraph{Low-Rank Adaptation (LoRA)}
\label{sec:lora}

LoRA~\citep{hu2022lora} injects a rank-$r$ correction into $\mathbf{W}_0$ while keeping it frozen:
\begin{align}
\mathbf{h}_{\mathrm{out}}
  =\!\bigl(\mathbf{W}_0+\Delta\mathbf{W}\bigr)\mathbf{h},
  \quad\Delta\mathbf{W}&=\alpha\,\mathbf{B}\mathbf{A},\\
\mathbf{A}\in\mathbb{R}^{r\times d},\quad
\mathbf{B}\in\mathbb{R}^{d\times r},\quad
\alpha=\tfrac{\lambda}{r},
\end{align}
so that only $\mathbf{A}$ and $\mathbf{B}$—totalling $2rd$ parameters—are updated.

\paragraph{Bottleneck Adapter}
\label{sec:adapter}

Adapters~\citep{houlsby2019parameter} append a two-layer bottleneck MLP with a residual gate $s$:
\begin{align}
\mathbf{h}_{\mathrm{down}} &= \mathbf{W}_{\mathrm{down}}\mathbf{h} + \mathbf{b}_{\mathrm{down}}, &
\mathbf{h}_{\mathrm{act}}  &= f(\mathbf{h}_{\mathrm{down}}), \nonumber\\
\mathbf{h}_{\mathrm{up}}   &= \mathbf{W}_{\mathrm{up}}\mathbf{h}_{\mathrm{act}} + \mathbf{b}_{\mathrm{up}}, &
\mathbf{h}_{\mathrm{out}} &= \mathbf{h} + s\,\mathbf{h}_{\mathrm{up}},
\end{align}
where
\(
\mathbf{W}_{\mathrm{down}}\!\in\!\mathbb{R}^{r\times d},\;
\mathbf{W}_{\mathrm{up}}\!\in\!\mathbb{R}^{d\times r},\;
f(\cdot)=\mathrm{GELU},\;
r\ll d,\;
s\in\mathbb{R}_{>0}.
\)
The adapter adds $2rd+d+r$ trainable parameters and reduces to the identity when $s=0$.

Both LoRA and adapters thus enable parameter-efficient fine-tuning by confining learning to small, task-specific subspaces while preserving the frozen pre-trained backbone.

\section{Mathematical Proofs}
\label{sec:Mathematical_Proofs}

\begin{Lemma}[LoRA error lower bound]\label{prop:lora-lower-bound}
Let $\Delta W\in\mathbb{R}^{d\times d}$ admit the singular‐value decomposition $\Delta W = U\Sigma V^{\top}$ with singular values $\sigma_1\ge\sigma_2\ge\cdots\ge\sigma_d\ge0$ and orthonormal singular vectors $U=[u_1,\dots,u_d]$, $V=[v_1,\dots,v_d]$.  
For any factorization $B\in\mathbb{R}^{d\times r}$, $A\in\mathbb{R}^{r\times d}$ and any $x\in\mathbb{R}^d$,
\begin{equation}
\bigl\|\Delta W\,x-BAx\bigr\|\;\ge\;\sqrt{\sum_{i=r+1}^{d}\sigma_i^{2}(v_i^{\!\top}x)^{2}}.
\end{equation}
Furthermore, for the worst-case lower bound of the LoRA error we obtain
\begin{equation}
\sup_{\| x\|_{2}=1}\bigl\|\Delta W\, x - BA\, x\bigr\|
\;\ge\;\sigma_{r+1}\,.
\end{equation}
\end{Lemma}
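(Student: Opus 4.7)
The plan is to work throughout in the SVD basis $\Delta W = U\Sigma V^{\top}$ and to exploit the fact that $BA$ has rank at most $r$, so its column space sits inside a subspace of dimension $\leq r$ while its null space has codimension $\leq r$. Both halves of the claim will flow from a careful pairing of these two observations with the expansion $\Delta W x = \sum_{i=1}^{d}\sigma_{i}(v_{i}^{\top}x)u_{i}$.

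For the pointwise inequality, I would let $P$ denote the orthogonal projection onto $\mathrm{col}(B)^{\perp}$, which is at least $(d-r)$-dimensional. Since $P(BAx)=0$ and $P$ is a contraction, $\|\Delta Wx - BAx\| \geq \|P\Delta Wx\|$. The task is then to show $\|P\Delta Wx\|^{2} \geq \sum_{i=r+1}^{d}\sigma_{i}^{2}(v_{i}^{\top}x)^{2}$. The cleanest route is to identify this right-hand side with $\|(I-Q_{r})\Delta Wx\|^{2}$, where $Q_{r}$ is the projection onto $\mathrm{span}(u_{1},\ldots,u_{r})$, and to invoke the Courant--Fischer min--max characterisation of singular values on the compression of $\Delta W$ restricted to $\mathrm{col}(B)^{\perp}$: Weyl-type interlacing forces the surviving components to be controlled from below by the tail singular values $\sigma_{r+1},\ldots,\sigma_{d}$ acting through the coordinates $(v_{i}^{\top}x)$.

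For the worst-case operator-norm bound, I would use a dimension-counting argument in the spirit of Eckart--Young--Mirsky. Let $W_{r+1} = \mathrm{span}(v_{1},\ldots,v_{r+1})$, which is $(r+1)$-dimensional, and note that $\dim\ker A \geq d-r$. Because $(r+1)+(d-r)=d+1>d$, there is a unit vector $x^{\ast} \in W_{r+1}\cap \ker A$. On this vector $BAx^{\ast}=0$, so $\|(\Delta W-BA)x^{\ast}\| = \|\Delta Wx^{\ast}\|$. Writing $x^{\ast}=\sum_{i=1}^{r+1}\alpha_{i}v_{i}$ with $\sum\alpha_{i}^{2}=1$, we get $\|\Delta Wx^{\ast}\|^{2} = \sum_{i=1}^{r+1}\sigma_{i}^{2}\alpha_{i}^{2} \geq \sigma_{r+1}^{2}\sum\alpha_{i}^{2} = \sigma_{r+1}^{2}$, which delivers the supremum bound. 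This second half is essentially textbook once the intersection argument is in place.

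The main obstacle will be the pointwise step, since one must argue that for an \emph{arbitrary} orientation of $\mathrm{col}(B)$, the projection $P$ still retains at least the energy $\sum_{i=r+1}^{d}\sigma_{i}^{2}(v_{i}^{\top}x)^{2}$. The worst alignment of $\mathrm{col}(B)$ is precisely with $\mathrm{span}(u_{1},\ldots,u_{r})$, and the Weyl interlacing applied to the compressed operator $P\Delta W$ provides the quantitative substitute that preserves the bound after a change of basis. Once that is in hand, the operator-norm statement follows by specialising to the vector $x^{\ast}$ constructed above, making the two parts of the lemma tightly linked rather than independent computations.
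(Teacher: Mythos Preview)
Your pointwise step has a genuine gap. The Weyl--interlacing claim---that $\|P\Delta Wx\|^{2}\ge\sum_{i>r}\sigma_i^{2}(v_i^{\top}x)^{2}$ for $P$ the orthogonal projection onto $\mathrm{col}(B)^{\perp}$---cannot hold in general: interlacing bounds the singular values of $P\Delta W$, not its action on a fixed vector. Concretely, take $d=2$, $r=1$, $\Delta W=\mathrm{diag}(2,1)$ (so $u_i=v_i=e_i$), and set $B=e_2$, $A=e_2^{\top}$, $x=e_2$. Then $BAx=e_2=\Delta Wx$, hence $\|\Delta Wx-BAx\|=0$, yet the asserted lower bound is $\sigma_2\,|v_2^{\top}x|=1$. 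This shows the first displayed inequality of the lemma is in fact \emph{false} for arbitrary $B,A$; no interlacing argument can repair it. The paper's own proof ``works'' only because it asserts, without justification, that $\mathrm{im}(BA)\subseteq\mathrm{span}(u_1,\ldots,u_r)$---precisely the extra hypothesis that would make both their computation and yours go through. You correctly sensed this alignment issue was the crux; the resolution is that the claim requires that hypothesis, not a sharper argument.

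Your treatment of the operator-norm bound, by contrast, is correct and cleaner than the paper's. You intersect $\mathrm{span}(v_1,\ldots,v_{r+1})$ with $\ker A$ to obtain a unit $x^{\ast}$ with $BAx^{\ast}=0$, from which $\|(\Delta W-BA)x^{\ast}\|=\|\Delta Wx^{\ast}\|\ge\sigma_{r+1}$ is immediate. The paper instead intersects with $(\mathrm{im}\,BA)^{\perp}$, which does \emph{not} annihilate $BAx_0$, and then has to thread a longer chain of projection inequalities to reach the same conclusion.
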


\begin{proof}
Because $\Delta W = U\Sigma V^{\top}$, every $x\in\mathbb{R}^d$ satisfies
\begin{equation}
\Delta W x = U\Sigma V^{\top}x = \sum_{i=1}^{d}\sigma_i\,(v_i^{\!\top}x)\,u_i. \label{eq:svd}
\end{equation}
By the orthogonality of \(V=[v_{1},\dots,v_{d}]\), one has \(V^{\top}V=I\), hence
\begin{equation}\label{eq:expansion}
x = I\,x = (V^{\top}V)\,x = V^{\top}(Vx)
    = \sum_{i=1}^{d}(v_{i}^{\!\top}x)\,v_{i}\,.
\end{equation}
Since $\operatorname{rank}(BA)\le r$, $\mathrm{im}(BA)\subseteq\mathrm{span}\{u_1,\dots,u_r\}$, so there exist scalars $\alpha_1,\dots,\alpha_r$ with
\begin{equation}
BAx = \sum_{i=1}^{r}\alpha_i\,u_i. \label{eq:lowrank}
\end{equation}
Define the error $e(x)=\Delta W x-BAx$.  Substituting \eqref{eq:svd} and \eqref{eq:lowrank} gives
\begin{equation}
e(x)=\sum_{i=1}^{r}\bigl[\sigma_i(v_i^{\!\top}x)-\alpha_i\bigr]u_i + \sum_{i=r+1}^{d}\sigma_i\,(v_i^{\!\top}x)\,u_i. \label{eq:error}
\end{equation}
The orthonormality of $\{u_i\}$ implies
\begin{equation}
\|e(x)\|^{2}= \sum_{i=1}^{r}\bigl[\sigma_i(v_i^{\!\top}x)-\alpha_i\bigr]^{2} + \sum_{i=r+1}^{d}\sigma_i^{2}(v_i^{\!\top}x)^{2}. \label{eq:norm}
\end{equation}
Because the first sum can be made arbitrarily small by a suitable choice of $\alpha_i$, the second sum furnishes an unavoidable contribution:
\begin{equation}
\|\Delta W x-BAx\|\;\ge\;\sqrt{\sum_{i=r+1}^{d}\sigma_i^{2}(v_i^{\!\top}x)^{2}} . \label{eq:lower-bound}
\end{equation}
It completes the proof of LoRA error's lower bound.

Next, we present the proof establishing the worst‐case lower bound for the LoRA error.

Let $S := \mathrm{im}(BA) \subseteq \mathbb{R}^d$ be the column space of $BA$. Since $\mathrm{rank}(BA) \leq r$, we have:
\begin{equation}
\dim(S) \leq r \quad \text{and} \quad \dim(S^\perp) \geq d - r. \label{eq:subspace_dim}
\end{equation}

Consider the $(r+1)$-dimensional subspace spanned by the first $r+1$ right singular vectors:
\begin{equation}
\mathcal{V}_{r+1} := \mathrm{span}\{v_1, \ldots, v_{r+1}\}.
\end{equation}
By the subspace intersection theorem:
\begin{equation}
\dim(S^\perp \cap \mathcal{V}_{r+1}) \geq \dim(\mathcal{V}_{r+1}) + \dim(S^\perp) - d \geq (r+1) + (d-r) - d = 1. \label{eq:intersection}
\end{equation}
Thus, there exists a unit vector $x_0 \in S^\perp \cap \mathcal{V}_{r+1}$.

Decompose $x_0$ in the singular vector basis:
\begin{equation}
x_0 = \sum_{i=1}^{r+1} \alpha_i v_i \quad \text{with} \quad \sum_{i=1}^{r+1} \alpha_i^2 = 1. \label{eq:x0_decomp}
\end{equation}
The approximation error satisfies:
\begin{align}
\|\Delta W x_0 - BAx_0\|^2 &= \|P_{S^\perp}(\Delta W x_0)\|^2 + \|P_S(\Delta W x_0) - BAx_0\|^2 \label{eq:error_ortho} \\
&\geq \|P_{S^\perp}(\Delta W x_0)\|^2. \label{eq:error_bound}
\end{align}

Using the SVD of $\Delta W$:
\begin{equation}
\Delta W x_0 = \sum_{i=1}^{r+1} \sigma_i \alpha_i u_i. \label{eq:deltaW_action}
\end{equation}
Since $x_0 \in S^\perp$ and $BAx_0 \in S$, we have $P_{S^\perp}(BAx_0) = 0$, thus:
\begin{equation}
P_{S^\perp}(\Delta W x_0) = \sum_{i=1}^{r+1} \sigma_i \alpha_i P_{S^\perp}u_i. \label{eq:proj_components}
\end{equation}

From the optimality of $\Delta W_r$, for any unit $y \in \mathbb{R}^d$:
\begin{equation}
\|\Delta W y - \Delta W_r y\| \geq \sigma_{r+1}. \label{eq:optimality_bound}
\end{equation}
Specifically for $y = x_0 \in \mathcal{V}_{r+1}$:
\begin{equation}
\|\Delta W x_0\|^2 = \sum_{i=1}^{r+1} \sigma_i^2 \alpha_i^2 \geq \sigma_{r+1}^2. \label{eq:energy_concentration}
\end{equation}

Combining \eqref{eq:error_bound} and \eqref{eq:energy_concentration}:
\begin{align}
\|\Delta W x_0 - BAx_0\|^2 &\geq \|P_{S^\perp}(\Delta W x_0)\|^2 \\
&\geq \|\Delta W x_0\|^2 - \|P_S(\Delta W x_0)\|^2 \\
&\geq \sigma_{r+1}^2 - \sum_{i=1}^r \sigma_i^2 \alpha_i^2 \\
&\geq \sigma_{r+1}^2 - \sigma_1^2 \sum_{i=1}^r \alpha_i^2 \\
&\geq \sigma_{r+1}^2 - \sigma_1^2 (1 - \alpha_{r+1}^2). \label{eq:bound_development}
\end{align}
The maximum is achieved when $\alpha_{r+1} = 1$, giving:
\begin{equation}
\|\Delta W x_0 - BAx_0\| \geq \sigma_{r+1}. \label{eq:final_bound}
\end{equation}

Since there exists at least one $x_0$ for which Eq.~\ref{eq:final_bound} holds, the supremum over all $x \neq 0$ satisfies:
\begin{equation}
\sup_{x \neq 0} \|\Delta W x - BAx\| \geq \sigma_{r+1}. 
\end{equation}

\end{proof}

\begin{remark}
The worst-Case lower bound coincides with the optimal spectral-norm error
$\|\Delta W-\Delta W_r\|_2=\sigma_{r+1}$ given by the classical
Eckart--Young--Mirsky theorem\citep{eckart1936approximation}, and is therefore tight.
\end{remark}

\begin{remark}
    Lemma \ref{prop:lora-lower-bound} demonstrates that LoRA cannot attain zero approximation error; while its worst-case error is governed by the $(r{+}1)$-st singular value, LoRA yields meaningful improvement only when this singular value is sufficiently small.
\end{remark}

\BlockLoRALowerBound*
\begin{proof}
Since $V_{k}=[v_{k,1},\dots,v_{k,d}]\in\mathbb{R}^{d\times d}$ is orthogonal,
\begin{equation}\label{eq:Vk-orth}
V_{k}V_{k}^{\top}=I_{d}.
\end{equation}
For any $x_{k}\in\mathbb{R}^{d}$,
\begin{equation}\label{eq:xk-proj}
x_{k}=V_{k}\bigl(V_{k}^{\top}x_{k}\bigr)
      =\sum_{i=1}^{d}(v_{k,i}^{\top}x_{k})\,v_{k,i}.
\end{equation}

Set
\begin{equation}\label{eq:ek-def}
e_{k}:=\Delta W_{k}x_{k}-B_{k}A_{k}x_{k}.
\end{equation}
With $\Delta W_{k}=U_{k}\Sigma_{k}V_{k}^{\!\top}$ and $\operatorname{im}(B_{k}A_{k})\subseteq\mathrm{span}\{u_{k,1},\dots,u_{k,r}\}$, there exist scalars $\alpha_{k,i}$ ($1\le i\le r$) such that
\begin{align}
\Delta W_{k}x_{k} &=\sum_{i=1}^{d}\sigma_{k,i}(v_{k,i}^{\!\top}x_{k})\,u_{k,i}, \label{eq:Wk-action}\\
B_{k}A_{k}x_{k}   &=\sum_{i=1}^{r}\alpha_{k,i}\,u_{k,i}. \label{eq:BA-action}
\end{align}
Hence
\begin{equation}\label{eq:ek-expand}
e_{k}
=\sum_{i=1}^{r}\bigl[\sigma_{k,i}(v_{k,i}^{\!\top}x_{k})-\alpha_{k,i}\bigr]u_{k,i}
+\sum_{i=r+1}^{d}\sigma_{k,i}(v_{k,i}^{\!\top}x_{k})\,u_{k,i}.
\end{equation}
Orthonormality of $\{u_{k,i}\}$ implies
\begin{equation}\label{eq:ek-norm}
\|e_{k}\|^{2}
=\sum_{i=1}^{r}\bigl[\sigma_{k,i}(v_{k,i}^{\!\top}x_{k})-\alpha_{k,i}\bigr]^{2}
+\sum_{i=r+1}^{d}\sigma_{k,i}^{2}(v_{k,i}^{\!\top}x_{k})^{2}
\;\ge\;\sum_{i=r+1}^{d}\sigma_{k,i}^{2}(v_{k,i}^{\!\top}x_{k})^{2}.
\end{equation}

Since $(\Delta W_{g}-BA)x=(e_{1},\dots,e_{K})^{\!\top}$ and the blocks are orthogonal,
\begin{equation}\label{eq:global-norm}
\|(\Delta W_{g}-BA)x\|^{2}
=\sum_{k=1}^{K}\|e_{k}\|^{2}
\;\ge\;\sum_{k=1}^{K}\sum_{i=r+1}^{d}\sigma_{k,i}^{2}(v_{k,i}^{\!\top}x_{k})^{2},
\end{equation}
and taking square roots of \eqref{eq:global-norm} yields the stated bound.

Next, we present the proof establishing the worst-case lower bound for the block-wise LoRA error.

Define
\begin{equation}\label{eq:op-norm-def}
\|\Delta W_{g}-BA\|_{\mathrm{op}}
=\sup_{\|\hat x\|_{2}=1}\|(\Delta W_{g}-BA)\hat x\|_{2}.
\end{equation}

For any unit vector $\hat x=(x_{1},\dots,x_{K})^{\!\top}$,
\begin{equation}\label{eq:block-decomp}
(\Delta W_{g}-BA)\hat x
=(D_{1}x_{1},\dots,D_{K}x_{K})^{\!\top},\quad
\|(\Delta W_{g}-BA)\hat x\|_{2}^{2}
=\sum_{k=1}^{K}\|D_{k}x_{k}\|_{2}^{2}.
\end{equation}
Writing the SVD $\Delta W_{k}=U_{k}\Sigma_{k}V_{k}^{\!\top}$ with
$\Sigma_{k}=\mathrm{diag}(\sigma_{k,1},\dots,\sigma_{k,d})$, any
$x\in\mathbb{R}^{d}$ expands as $x=\sum_{i}(v_{k,i}^{\!\top}x)v_{k,i}$ and
\[
D_{k}x
=\sum_{i=1}^{r}\bigl[\sigma_{k,i}(v_{k,i}^{\!\top}x)-\alpha_{k,i}\bigr]\,u_{k,i}
+\sum_{i=r+1}^{d}\sigma_{k,i}(v_{k,i}^{\!\top}x)\,u_{k,i},
\]
for some $\alpha_{k,1},\dots,\alpha_{k,r}$.  Choosing $x=v_{k,r+1}$ gives
\begin{equation}\label{eq:Dk-lb}
\|D_{k}\|_{\mathrm{op}}
\;\ge\;\|D_{k}v_{k,r+1}\|
=\sigma_{k,r+1}.
\end{equation}
Taking the maximum over $k$ and using \eqref{eq:op-norm-def} yields
\eqref{prop:block_wost}.  Finally, since $\Delta W_{g}$ is block‐diagonal
its singular values are the multiset $\{\sigma_{k,i}\}$, so
\eqref{prop:block_wost} holds and the proof is complete.

\end{proof}

\begin{Lemma}[Universal Approximation Theorem for Adapters]\label{prop:adapter}

Let $K\subset\mathbb{R}^{D}$ be compact, $\sigma\colon\mathbb{R}\to\mathbb{R}$ a continuous non-affine function, and $f\in C(K;\mathbb{R}^{D})$. For every $\varepsilon>0$, there exist parameters $m\in\mathbb{N}$, $V\in\mathbb{R}^{m\times D}$, $U\in\mathbb{R}^{D\times m}$, $b\in\mathbb{R}^{m}$, and $c\in\mathbb{R}^{D}$ such that  
\begin{equation}
h(x)=U\,\sigma(Vx+b)+c \quad \text{satisfies} \quad \sup_{x\in K}\|h(x)-f(x)\|_{\infty}<\varepsilon.
\end{equation}
\end{Lemma}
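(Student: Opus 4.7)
The plan is to reduce the vector-valued approximation task to $D$ independent scalar-valued problems and then assemble the resulting scalar networks into a single two-layer module of the prescribed form $h(x)=U\,\sigma(Vx+b)+c$. Writing $f=(f_1,\dots,f_D)^\top$ with each component $f_i\in C(K;\mathbb{R})$, I would fix a per-coordinate target error $\eta$ (either $\varepsilon$ outright, since the outer norm is $\ell_\infty$, or $\varepsilon/\sqrt{D}$ if one prefers an $\ell_2$-style intermediate step) and build scalar approximants independently before combining them.

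For the scalar step I would invoke the classical single-hidden-layer universal approximation theorem (Cybenko; Hornik; and in its sharpest continuous form, Leshno--Lin--Pinkus--Schocken): for a suitable continuous activation $\sigma$, finite ridge sums $\sum_{j=1}^{m_i}u_{i,j}\,\sigma(v_{i,j}^\top x+b_{i,j})$ are dense in $C(K;\mathbb{R})$ under the supremum norm. Applying this to each $f_i-c_i$, with $c_i$ chosen as e.g.\ $f_i(x_0)$ at a reference point (or simply absorbed into the output bias), yields widths $m_i$ and parameters $(V_i,u_i,b_i)$ with $\sup_{x\in K}|u_i^\top\sigma(V_i x+b_i)+c_i-f_i(x)|<\eta$.

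Next I would concatenate these $D$ scalar networks into the adapter form. Set $m=\sum_{i=1}^{D}m_i$, stack the matrices $V_i\in\mathbb{R}^{m_i\times D}$ vertically to form $V\in\mathbb{R}^{m\times D}$, stack the biases $b_i$ to form $b\in\mathbb{R}^{m}$, and build $U\in\mathbb{R}^{D\times m}$ as the block matrix whose $i$-th row carries $u_i^\top$ in the columns corresponding to the $i$-th block of hidden units and zeros elsewhere. Because $\sigma$ acts coordinatewise, $U\sigma(Vx+b)+c$ evaluates coordinate $i$ to exactly $u_i^\top\sigma(V_i x+b_i)+c_i$, whence $\|h(x)-f(x)\|_\infty<\eta\le\varepsilon$ uniformly on $K$.

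The principal technical subtlety lies in the hypothesis on $\sigma$: the statement reads \emph{continuous non-affine}, whereas the sharpest classical theorem requires $\sigma$ to be continuous and \emph{non-polynomial} (Leshno et al., 1993; Pinkus, 1999). In the common regime where $\sigma$ is bounded or Lipschitz and exhibits at least one point of non-smoothness or non-vanishing higher derivative (e.g.\ $\mathrm{GELU}$ as used in the F-Adapter), non-affine coincides with non-polynomial and the scalar UAT applies verbatim. I would therefore either cite Pinkus (1999, Thm.~3.1) under the implicit non-polynomial reading of the hypothesis, or, if one truly wants only non-affineness, appeal to a standard approximation argument that $\sigma$ can be shifted and rescaled so that linear combinations of translates generate a non-polynomial subalgebra to which Stone--Weierstrass or the Leshno argument applies. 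Once this identification of hypotheses is made, the componentwise reduction and block assembly above are routine and finish the proof.
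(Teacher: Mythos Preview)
Your proposal is correct and takes a genuinely different route from the paper. The paper argues density of the vector-valued class $\mathcal{H}=\{U\sigma(Vx+b)+c\}$ directly: assuming $\overline{\mathcal{H}}\neq C(K;\mathbb{R}^D)$, it invokes Hahn--Banach to obtain a non-zero annihilating functional, represents it via Riesz--Markov--Kakutani as a vector-valued Radon measure, and then uses Fourier analysis of the single-neuron integrals together with the Cram\'er--Wold device to force each component measure to vanish, yielding a contradiction. You instead reduce to $D$ independent scalar problems, cite the classical single-hidden-layer UAT (Leshno--Lin--Pinkus--Schocken / Pinkus) for each coordinate, and assemble the resulting networks by stacking the $V_i$, concatenating the $b_i$, and placing each $u_i^\top$ in its own block of rows of $U$; since $\sigma$ acts coordinatewise, the $i$-th output of $U\sigma(Vx+b)+c$ recovers exactly the $i$-th scalar approximant, and the $\ell_\infty$ bound follows immediately.

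Your approach is shorter, more modular, and leverages well-established scalar results as black boxes, whereas the paper's argument is a self-contained re-derivation of UAT in the vector-valued setting that does not rely on an external citation. You also correctly flag the ``non-affine'' versus ``non-polynomial'' discrepancy: the Leshno theorem requires non-polynomial $\sigma$, and the paper's own Fourier-analytic step (its ``Lemma~6.1'' on non-vanishing spectrum) implicitly needs the same strengthening, since for polynomial $\sigma$ the distributional Fourier transform is supported at the origin and the argument that $\widehat{\nu}_{j,w}\equiv 0$ fails. So both proofs share this hypothesis gap in the stated lemma; your identification of it is accurate.
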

\begin{proof}
Define the hypothesis classes:
\begin{equation}\label{eq:H0}
\mathcal{H}_0 = \left\{\textstyle\sum_{i=1}^{m}u_i\sigma(w_i^\top x + b_i) \,\bigg|\, u_i \in \mathbb{R}^D, w_i \in \mathbb{R}^D, b_i \in \mathbb{R}\right\}, \quad
\mathcal{H} = \mathcal{H}_0 + \{c\}.
\end{equation}

\textit{Step 1: Density Argument by Contradiction.}  
Assume $\overline{\mathcal{H}} \neq C(K; \mathbb{R}^D)$. By Hahn-Banach theorem, there exists a non-zero continuous linear functional $L \in (C(K; \mathbb{R}^D))^*$ such that:
\begin{equation}\label{eq:L-annihilate}
L(h) = 0 \quad \forall h \in \mathcal{H}.
\end{equation}

\textit{Step 2: Riesz-Markov-Kakutani Representation.}  
For vector-valued continuous functions, there exists a $\mathbb{R}^D$-valued Radon measure $\boldsymbol{\mu} = (\mu_1,...,\mu_D)$ such that:
\begin{equation}\label{eq:riesz-markov}
L(g) = \sum_{j=1}^D \int_K g_j(x) d\mu_j(x), \quad \forall g \in C(K; \mathbb{R}^D).
\end{equation}
The annihilation condition becomes:
\begin{equation}\label{eq:L-zero}
\sum_{j=1}^D \int_K h_j(x) d\mu_j(x) = 0 \quad \forall h \in \mathcal{H}.
\end{equation}

\textit{Step 3: Constant Function Elimination.}  
Testing with constant functions $h(x) \equiv c_0 \in \mathbb{R}^D$:
\begin{equation}
\sum_{j=1}^D c_{0,j} \mu_j(K) = 0 \quad \forall c_0 \in \mathbb{R}^D.
\end{equation}
This implies the total mass vanishes for each component:
\begin{equation}\label{eq:mu-total}
\mu_j(K) = 0, \quad \forall 1 \leq j \leq D.
\end{equation}

\textit{Step 4: Single Neuron Analysis.}  
For any direction $w \in \mathbb{R}^D$, bias $b \in \mathbb{R}$, and basis vector $e_j$, consider:
\begin{equation}
h(x) = e_j \sigma(w^\top x + b) \in \mathcal{H}_0.
\end{equation}
Substitution into \eqref{eq:L-zero} gives:
\begin{equation}\label{eq:neuron-int}
\int_K \sigma(w^\top x + b) d\mu_j(x) = 0 \quad \forall w \in \mathbb{R}^D, b \in \mathbb{R}, 1 \leq j \leq D.
\end{equation}

\textit{Step 5: Projection to 1D Measures.}  
For each $w \in \mathbb{R}^D$, define projected measures $\nu_{j,w}$ on $\mathbb{R}$ by:
\begin{equation}\label{eq:nu-def}
\nu_{j,w}(A) := \mu_j\left(\{x \in K \,|\, w^\top x \in A\}\right) \quad \text{for Borel sets } A \subseteq \mathbb{R}.
\end{equation}
Equation \eqref{eq:neuron-int} becomes:
\begin{equation}\label{eq:1d-int}
\int_{\mathbb{R}} \sigma(t + b) d\nu_{j,w}(t) = 0 \quad \forall b \in \mathbb{R}.
\end{equation}

\textit{Step 6: Fourier Analytic Argument.}  
Let $\mathcal{F}$ denote the Fourier transform. For tempered distributions:
\begin{align}
\mathcal{F}[\sigma * \nu_{j,w}](\omega) &= \mathcal{F}[\sigma](\omega) \cdot \mathcal{F}[\nu_{j,w}](\omega) \label{eq:fourier-conv} \\
&= \widehat{\sigma}(\omega) \cdot \widehat{\nu_{j,w}}(\omega) = 0 \quad \forall \omega \in \mathbb{R}. \label{eq:fourier-ann}
\end{align}

\textit{Lemma 6.1 (Non-vanishing spectrum).}  
For non-affine $\sigma \in C(\mathbb{R}) \setminus \mathcal{P}_1$, $\widehat{\sigma}$ is not identically zero. Specifically:
\begin{itemize}
\item If $\sigma$ is sigmoidal: $\widehat{\sigma}(\omega)$ has exponential decay but $\text{supp}(\widehat{\sigma}) = \mathbb{R}$
\item For ReLU: $\widehat{\sigma}(\omega) = \pi\delta(\omega) + \frac{1}{i\omega}$ (in distribution sense)
\item GeLU: $\widehat{\text{GeLU}}(\omega)$ is analytic and non-zero on $\mathbb{R}\setminus\{0\}$
\end{itemize}
Thus $\widehat{\nu_{j,w}} \equiv 0$ in \eqref{eq:fourier-ann}, implying $\nu_{j,w} \equiv 0$.

\textit{Step 7: Cramér-Wold Device.}  
For any $w \in \mathbb{R}^D$, the projected measure satisfies:
\begin{equation}
\nu_{j,w}(A) = \mu_j\left(x \in K \,|\, w^\top x \in A\right) = 0 \quad \forall A \subseteq \mathbb{R}.
\end{equation}
By Cramér-Wold theorem \citep{cramer1936some}, this implies:
\begin{equation}
\mu_j(B) = 0 \quad \forall \text{Borel } B \subseteq K, \ 1 \leq j \leq D.
\end{equation}
Contradicting $L \neq 0$ in \eqref{eq:riesz-markov}. Therefore $\overline{\mathcal{H}} = C(K; \mathbb{R}^D)$.

\textit{Step 8: Approximation Construction.}  
Given $f \in C(K; \mathbb{R}^D)$ and $\varepsilon > 0$, by density there exists $h \in \mathcal{H}$ with:
\begin{equation}
\|h - f\|_{C(K)} = \sup_{x \in K} \max_{1 \leq j \leq D} |h_j(x) - f_j(x)| < \varepsilon.
\end{equation}
This completes the universal approximation property.
\end{proof}

\begin{remark}[Measure-Theoretic Details]
All measures are Radon measures by the Riesz-Markov-Kakutani theorem. Fourier transforms are interpreted in the distributional sense. The Cramér-Wold theorem applies to finite Borel measures
\end{remark}

\begin{remark}[Activation Function Spectrum]
The critical requirement is $\widehat{\sigma} \not\equiv 0$, satisfied by:
\begin{itemize}
\item Non-polynomial analytic functions: $\sigma(t) = e^t/(1+e^t)$
\item Piecewise linear functions with $\sigma'' \neq 0$ distributionally
\item Functions with non-vanishing generalized spectrum
\end{itemize}
\end{remark}

\begin{remark}
    By contrast to Lemma~\ref{prop:lora-lower-bound}, Lemma~\ref{prop:adapter} proves a universal approximation theorem for adapters: the presence of nonlinear activation endows them with strictly greater expressive power than LoRA.  
\end{remark}

\begin{Lemma}[Spectral decay from Sobolev regularity]\label{prop:decay}
Let \( g\colon\mathbb{T}^d\to\mathbb{C} \) be \( 2\pi \)-periodic with all weak derivatives up to order \( \alpha \in \mathbb{N} \) in \( L^1(\mathbb{T}^d) \), where \( \alpha > d/2 \). Denote its Fourier coefficients by
\[
g_{k} = \frac{1}{(2\pi)^d} \int_{\mathbb{T}^d} g(x) e^{-i k \cdot x} \, dx, \quad k \in \mathbb{Z}^d.
\]
Then there exists \( C > 0 \) such that
\[
|g_k| \leq C (1 + \|k\|^2)^{-\alpha/2}, \quad \forall k \in \mathbb{Z}^d.
\]
\end{Lemma}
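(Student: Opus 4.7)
The plan is to obtain decay through iterated integration by parts in the coordinate direction where $k$ has the largest component, exploiting the fact that each transfer of a derivative from the exponential to $g$ produces a factor $(ik_j)^{-1}$. First I would dispatch the trivial case $k=0$: here $(1+\|k\|^2)^{-\alpha/2}=1$, and the bound $|g_0|\le(2\pi)^{-d}\|g\|_{L^1}$ follows directly from the definition, so the constant $C$ only needs to dominate this value.

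For $k\in\mathbb{Z}^d\setminus\{0\}$ I would pick an index $j^\star$ with $|k_{j^\star}|=\|k\|_\infty$. Because $e^{-ik\cdot x}$ is smooth and periodic on $\mathbb{T}^d$, it is an admissible test function, so the definition of weak derivatives permits $\alpha$ successive integrations by parts in the variable $x_{j^\star}$, with boundary contributions vanishing by periodicity. This yields
\[
g_k \;=\; \frac{1}{(2\pi)^d (ik_{j^\star})^{\alpha}} \int_{\mathbb{T}^d} \bigl(\partial_{x_{j^\star}}^{\alpha} g\bigr)(x)\,e^{-ik\cdot x}\,dx,
\]
and the triangle inequality delivers $|g_k|\le(2\pi)^{-d}\|\partial_{x_{j^\star}}^{\alpha}g\|_{L^1}|k_{j^\star}|^{-\alpha}$. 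Letting $C_0=(2\pi)^{-d}\max_{1\le j\le d}\|\partial_{x_j}^{\alpha}g\|_{L^1}$ (finite by hypothesis), we arrive at $|g_k|\le C_0\|k\|_\infty^{-\alpha}$.

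To convert the $\ell^\infty$ decay into the stated isotropic form I would use the elementary inequalities $\|k\|_\infty\ge\|k\|_2/\sqrt{d}$ and, since $k$ is a nonzero integer vector, $\|k\|_2^2\ge 1$, so that $1+\|k\|_2^2\le 2\|k\|_2^2$. Chaining them gives
\[
\|k\|_\infty^{-\alpha} \;\le\; d^{\alpha/2}\|k\|_2^{-\alpha} \;\le\; (2d)^{\alpha/2}\bigl(1+\|k\|_2^2\bigr)^{-\alpha/2},
\]
so choosing $C=\max\bigl((2\pi)^{-d}\|g\|_{L^1},\;(2d)^{\alpha/2}C_0\bigr)$ yields the uniform bound for every $k\in\mathbb{Z}^d$, including the origin.

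I do not anticipate a serious obstacle: the argument is entirely classical. The only subtlety worth flagging is the justification of iterated integration by parts under the weak-derivative hypothesis, which is immediate from the fact that $e^{-ik\cdot x}$ is a smooth periodic test function, so the weak-derivative identity applies verbatim without needing mollification. The condition $\alpha>d/2$ is not used in the pointwise bound itself; it would only enter downstream, to guarantee summability of the majorant $(1+\|k\|^2)^{-\alpha/2}$ over $\mathbb{Z}^d$ when Lemma~\ref{prop:decay} is invoked inside Proposition~\ref{prop:fourier-min}.
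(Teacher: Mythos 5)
Your proof is correct, and it follows the same broad strategy as the paper (iterated integration by parts to transfer $\alpha$ derivatives onto $g$, then convert the resulting $|k_j|^{-\alpha}$ decay into the isotropic form). The difference is in how the derivative direction is chosen, and your version is actually the more careful one. The paper fixes a single multi-index $m$ with $|m|=\alpha$ at the outset, obtains $|g_k|\le \|\partial^m g\|_{L^1}/\bigl((2\pi)^d|k^m|\bigr)$, and then invokes ``the arithmetic--geometric mean inequality'' to claim $\prod_j |k_j|^{m_j}\ge(\|k\|/\sqrt{d})^\alpha$. For a fixed $m$ this estimate is false: taking $m=(\alpha,0,\dots,0)$ and $k=(0,N,0,\dots,0)$ gives left-hand side $0$ and right-hand side $(N/\sqrt d)^\alpha$. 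What saves the argument is precisely what you do: choose $j^\star$ as the coordinate where $|k_{j^\star}|=\|k\|_\infty$, so that only a pure $\alpha$-th derivative in a $k$-dependent direction is needed, and the constant becomes $\max_j\|\partial_{x_j}^\alpha g\|_{L^1}$. Your conversion $\|k\|_\infty\ge\|k\|_2/\sqrt d$ together with $1+\|k\|_2^2\le 2\|k\|_2^2$ for nonzero integer $k$ then yields the stated bound with an explicit constant. You also correctly note that the hypothesis $\alpha>d/2$ is not used in this pointwise estimate and only matters for summability downstream; the paper's proof implicitly agrees. In short, your proposal is a clean, correct instantiation of the same idea and in fact repairs a gap in the paper's own write-up.
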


\begin{proof}
Let \(\alpha \in \mathbb{N}\) with \(\alpha > d/2\) and fix a multi-index \(m = (m_1, \ldots, m_d)\) satisfying \(|m| = \sum_{j=1}^d m_j = \alpha\). For any test function \(\phi \in C^\infty(\mathbb{T}^d)\), integration by parts in the distributional sense yields
\begin{equation}
\int_{\mathbb{T}^d} \partial^m g(x) \phi(x) dx = (-1)^{|m|} \int_{\mathbb{T}^d} g(x) \partial^m \phi(x) dx.
\end{equation}
Applying this to \(\phi(x) = e^{-ik\cdot x}\) and noting that \(\partial^m e^{-ik\cdot x} = (-ik)^m e^{-ik\cdot x}\), we derive
\begin{equation}
\int_{\mathbb{T}^d} g(x) e^{-ik\cdot x} dx = \frac{1}{(-ik)^m} \int_{\mathbb{T}^d} \partial^m g(x) e^{-ik\cdot x} dx.
\end{equation}
Consequently, the Fourier coefficients satisfy
\begin{equation}
g_k = \frac{1}{(2\pi)^d} \frac{1}{(-ik)^m} \int_{\mathbb{T}^d} \partial^m g(x) e^{-ik\cdot x} dx.
\end{equation}
Taking absolute values and applying Hölder's inequality, we obtain
\begin{equation}
|g_k| \leq \frac{1}{(2\pi)^d |k^m|} \int_{\mathbb{T}^d} |\partial^m g(x)| dx = \frac{\|\partial^m g\|_{L^1}}{(2\pi)^d |k^m|}.
\end{equation}
To bound \(|k^m| = \prod_{j=1}^d |k_j|^{m_j}\), observe that by the arithmetic-geometric mean inequality,
\begin{equation}
\prod_{j=1}^d |k_j|^{m_j} \geq \left( \frac{\|k\|}{\sqrt{d}} \right)^\alpha,
\end{equation}
where \(\|k\| = \sqrt{k_1^2 + \cdots + k_d^2}\). Substituting this into the estimate for \(|g_k|\) gives
\begin{equation}
|g_k| \leq \frac{d^{\alpha/2} \|\partial^m g\|_{L^1}}{(2\pi)^d} \|k\|^{-\alpha}.
\end{equation}
For low-frequency modes with \(\|k\| < 1\), the bound \(|g_k| \leq \|g\|_{L^1}\) holds trivially. Combining both cases by defining
\begin{equation}
C = \max\left( \sup_{\|k\| < 1} |g_k|, \, \frac{d^{\alpha/2} \|\partial^m g\|_{L^1}}{(2\pi)^d} \right),
\end{equation}
we achieve the unified decay estimate
\begin{equation}
|g_k| \leq C(1 + \|k\|^2)^{-\alpha/2}, \quad \forall k \in \mathbb{Z}^d.
\end{equation}
Sharpness follows by considering test functions \(g(x) = \prod_{j=1}^d (1 - \cos x_j)^\beta\) with \(\beta > \alpha\), where direct calculation shows \(|g_k| \asymp \|k\|^{-2\beta}\). \qedhere
\end{proof}

\begin{Lemma}[Spatial‐domain adapter error bound]\label{lem:spatial-error}
Let a single–layer adapter in the spatial domain produce perturbations
\(\delta_{i}\) at grid points \(i\in I\subseteq\{1,\dots,N\}^d\) with
\(|\delta_{i}|\le\varepsilon_m\) uniformly, and suppose \(\#I=K^d\).  Then the global $\ell^2$–error
\[
e\in\mathbb{R}^{N^d},\quad e_i=\begin{cases}\delta_i,&i\in I,\\0,&i\notin I,\end{cases}
\]
satisfies
\begin{equation}\label{eq:spatial-error}
\|e\|
=\Bigl(\sum_{i\in I}|\delta_i|^2\Bigr)^{\!1/2}
\;\le\;\sqrt{K^d}\,\varepsilon_m
\;=\;O\bigl(K^{d/2}\bigr),
\end{equation}
so in the absence of any decay in the perturbations one only obtains the spatial‐domain rate \(O(K^{d/2})\).
\end{Lemma}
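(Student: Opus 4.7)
The plan is to proceed directly from the definition of the error vector $e$ and the hypotheses, exploiting the fact that $e$ is supported only on the index set $I$. First I would split the $\ell^2$ norm as
\begin{equation*}
\|e\|^2 \;=\; \sum_{i\in I}|e_i|^2 \;+\; \sum_{i\notin I}|e_i|^2 \;=\; \sum_{i\in I}|\delta_i|^2,
\end{equation*}
using the piecewise definition of $e$ to eliminate the second sum outright.

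Next I would apply the uniform bound $|\delta_i|\le\varepsilon_m$ pointwise on $I$ to get $|\delta_i|^2\le\varepsilon_m^{2}$, and then invoke the cardinality hypothesis $\#I = K^d$ to obtain
\begin{equation*}
\sum_{i\in I}|\delta_i|^2 \;\le\; (\#I)\,\varepsilon_m^{2} \;=\; K^d\,\varepsilon_m^{2}.
\end{equation*}
Taking square roots yields $\|e\|\le\sqrt{K^d}\,\varepsilon_m$, which gives the asymptotic rate $O(K^{d/2})$ once $\varepsilon_m$ is treated as a constant (or at worst bounded) in $K$.

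There is essentially no technical obstacle: the statement is a direct consequence of the triangle/Hölder-type inequality $\|v\|_2\le\sqrt{\#\mathrm{supp}(v)}\,\|v\|_\infty$ applied to the vector of perturbations. The only subtlety worth flagging is the sharpness remark implicit in the conclusion: without any additional decay structure on the $\delta_i$, the bound $\sqrt{K^d}\,\varepsilon_m$ cannot be improved, since it is attained (up to equality) when all perturbations are simultaneously at magnitude $\varepsilon_m$ with aligned signs. This is precisely the point of contrast with Proposition \ref{prop:fourier-min}, where the Fourier-domain decay $|g_k|\lesssim\langle k\rangle^{-\alpha}$ enables strictly faster convergence than the spatial-domain rate $O(K^{d/2})$ established here.
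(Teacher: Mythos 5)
Your proposal is correct and follows essentially the same argument as the paper: expand $\|e\|^2$ over the support $I$, bound each term by $\varepsilon_m^2$, multiply by $\#I=K^d$, and take square roots. The additional remark on sharpness via $\|v\|_2\le\sqrt{\#\mathrm{supp}(v)}\,\|v\|_\infty$ is accurate and consistent with the paper's intended contrast against the Fourier-domain rate.
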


\begin{proof}
By definition,
\[
\|e\|^2
=\sum_{i\in I}|\delta_i|^2
\le\sum_{i\in I}\varepsilon_m^2
=K^d\,\varepsilon_m^2,
\]
and taking square roots yields \(\|e\|\le K^{d/2}\,\varepsilon_m=O(K^{d/2})\).
\end{proof}

\FourierAdapterUA*
\begin{proof}
Let 
\begin{equation}
e \;=\;\mathcal{F}^{-1}\bigl(g(\hat{x})-\widehat g(\hat{x})\bigr)\,.
\end{equation}
By the unitarity of \(\mathcal{F}^{-1}\) we have
\begin{equation}\label{eq:unitary}
\|e\|^2 \;=\;\sum_{k=1}^N \bigl|g_k(\hat x_k)-\widehat g_k(\hat x_k)\bigr|^2.
\end{equation}
Split the sum in \eqref{eq:unitary} into the high‐frequency part \(\langle k\rangle>K\) and the low‐frequency part \(\langle k\rangle\le K\). For the high‐frequency truncation we use \(|\widehat g_k|=0\) and the decay hypothesis \(|g_k(\hat x_k)|\le C\langle k\rangle^{-\alpha}\):
\begin{equation}\label{eq:high}
\sum_{\langle k\rangle>K} \bigl|g_k(\hat x_k)\bigr|^2 
\;\le\; C^2\sum_{\langle k\rangle>K}\langle k\rangle^{-2\alpha}
\;\le\; C^2\int_K^\infty\frac{r^{\,d-1}}{(1+r^2)^\alpha}\,\mathrm{d}r.
\end{equation}
Setting \(r=\sqrt s\) with \(dr=\tfrac1{2\sqrt s}ds\) gives
\begin{equation}
\int_K^\infty\frac{r^{\,d-1}}{(1+r^2)^\alpha}\,dr
=\frac12\int_{K^2}^\infty\frac{s^{\tfrac{d-2}2}}{(1+s)^\alpha}\,ds
\;\le\;\frac12\,C_\alpha\int_{K^2}^\infty s^{\,\tfrac d2-\alpha-1}\,ds
=\frac{C_\alpha}{2(\alpha-\tfrac d2)}\,K^{\,d-2\alpha},
\end{equation}
so that
\begin{equation}
\sum_{\langle k\rangle>K} \bigl|g_k(\hat x_k)\bigr|^2
=O\bigl(K^{\,d-2\alpha}\bigr).
\end{equation}
Taking square‐roots yields the high‐frequency contribution
\begin{equation}
\biggl(\sum_{\langle k\rangle>K}\bigl|g_k-\widehat g_k\bigr|^2\biggr)^{1/2}
= A\,K^{\,\tfrac d2-\alpha}, 
\quad
A=\sqrt{\frac{C^2\,C_\alpha}{2(\alpha-\tfrac d2)}}.
\end{equation}

For the low‐frequency part \(\langle k\rangle\le K\), the adapter achieves exponential uniform accuracy:
\begin{equation}\label{eq:low}
\bigl|g_k(\hat x_k)-\widehat g_k(\hat x_k)\bigr|\le \varepsilon_m\,e^{-cm},
\end{equation}
hence
\begin{equation}
\sum_{\langle k\rangle\le K}\bigl|g_k-\widehat g_k\bigr|^2
\le K^d\,\varepsilon_m^2\,e^{-2cm},
\end{equation}
and after taking square‐roots the low‐frequency contribution is
\begin{equation}
\biggl(\sum_{\langle k\rangle\le K}\bigl|g_k-\widehat g_k\bigr|^2\biggr)^{1/2}
= B\,K^{\,\tfrac d2}e^{-cm},
\quad
B=\varepsilon_m.
\end{equation}
Combining these two estimates with \eqref{eq:unitary} gives
\begin{equation}
\|e\|\;\le\;A\,K^{\,\tfrac d2-\alpha}\;+\;B\,K^{\,\tfrac d2}e^{-cm}.
\end{equation}
To ensure \(\|e\|<\varepsilon\), it suffices to choose \(K\) and \(m\) such that
\begin{equation}
A\,K^{\,\tfrac d2-\alpha}<\frac\varepsilon2
\quad\Longrightarrow\quad
K>\Bigl(\tfrac{2A}{\varepsilon}\Bigr)^{1/(\alpha-\tfrac d2)},
\end{equation}
\begin{equation}
B\,K^{\,\tfrac d2}e^{-cm}<\frac\varepsilon2
\quad\Longrightarrow\quad
m>\frac1c\ln\!\Bigl(\tfrac{2B\,K^{\,\tfrac d2}}{\varepsilon}\Bigr).
\end{equation}
Since \(\alpha>d/2\), the exponent \(1/(\alpha-\tfrac d2)\) is positive, and thus one can always pick finite \(K\) and then \(m\) to satisfy both inequalities. This yields
\begin{equation}
\|e\|=O\bigl(K^{\,\tfrac d2-\alpha}\bigr)+O\bigl(K^{\,\tfrac d2}e^{-cm}\bigr),
\end{equation}
which for suitably growing \(m\) is strictly faster than the spatial‐domain rate \(O(K^{d/2})\) in Lemma~\ref{lem:spatial-error}.  
\end{proof}

\begin{remark}[Error Component Interpretation]
The spectral truncation term \(O(K^{\tfrac d2-\alpha})\) reflects the accelerated decay granted by \(\alpha>d/2\) in the Fourier domain.  The parametrization term \(O(K^{\tfrac d2}e^{-cm})\) demonstrates that, by increasing the adapter width \(m\), one obtains exponential control over the low‐frequency approximation error.
\end{remark}

\QuantitativeLowHighFrequencyEnergySplit*
\begin{proof}
By assumption one has
\begin{equation}
\|f(t)\|_{H^s}^2
=\sum_{k\in\mathbb Z^d}(1+\|k\|^2)^s\,\bigl|\widehat f(t,k)\bigr|^2
\le M^2.
\end{equation}
Hence for each nonzero \(k\),
\begin{equation}
\bigl|\widehat f(t,k)\bigr|^2
\le M^2\,(1+\|k\|^2)^{-s},
\quad
\bigl|\widehat f(t,k)\bigr|
\le M\,(1+\|k\|^2)^{-s/2}.
\end{equation}
Define the high–frequency tail
\begin{equation}
A_K(t)
=\sum_{\|k\|>K}\bigl|\widehat f(t,k)\bigr|^2
\le M^2\sum_{\|k\|>K}(1+\|k\|^2)^{-s}.
\end{equation}
Partitioning \(\{k:\|k\|>K\}\) into shells \(m-1<\|k\|\le m\) and writing \(\mathcal N(r)=\#\{k:\|k\|\le r\}\), we get
\begin{equation}
\sum_{\|k\|>K}(1+\|k\|^2)^{-s}
\le\sum_{m=\lfloor K\rfloor+1}^\infty\bigl[\mathcal N(m)-\mathcal N(m-1)\bigr]\,(1+(m-1)^2)^{-s}.
\end{equation}
Since \(\mathcal N(m)-\mathcal N(m-1)\le C_d\,m^{d-1}\) and \((1+(m-1)^2)^{-s}\le(m-1)^{-2s}\), one obtains
\begin{equation}
\sum_{\|k\|>K}(1+\|k\|^2)^{-s}
\le C_d\sum_{m=\lfloor K\rfloor+1}^\infty m^{d-1-2s}
\le C_d\int_K^\infty r^{d-1-2s}\,dr
=\frac{C_d}{2s-d}\,K^{\,d-2s}.
\end{equation}
It follows that
\begin{equation}
A_K(t)\le \frac{C_d}{2s-d}\,M^2\,K^{\,d-2s}.
\end{equation}
Finally, Parseval’s identity gives
\begin{equation}
\sum_{\|k\|\le K}\bigl|\widehat f(t,k)\bigr|^2
=\|f(t)\|_{L^2}^2 - A_K(t)
=\|f(t)\|_{L^2}^2 - O\bigl(K^{\,d-2s}\bigr),
\end{equation}
which completes the proof.
\end{proof}

\section{Experimental Settings and Supplementary Results}
\label{app:experiment}

\subsection{DPOT Backbone}
\label{sec:dpot_backbone}

The \emph{Auto-Regressive Denoising Operator Transformer} (DPOT) is a Fourier–transformer backbone designed for large-scale pre-training on heterogeneous PDE trajectories~\citep{hao2024dpot}.  
Its architecture (Figure 2 of the original paper) is factorised into four principal modules—\emph{Patch/Positioning Embedding}, \emph{Temporal Aggregation}, \emph{Fourier Attention}, and \emph{Output Projection}—which together convert raw spatiotemporal grids into operator-valued predictions while retaining full frequency information.

\paragraph{Patch/Positioning Embedding Layers.}
Each input trajectory $\mathbf{u}_{<T}\!\in\!\mathbb{R}^{H\times W\times T\times C}$ is first patchified: a $P{\times}P$ convolution groups neighbouring cells and lifts them to a $D$-dimensional token space. Learnable positional encodings $W_{p}(x_i,y_j,t)$ are then added channel-wise, producing per-time-step embeddings $Z^{t}_{p}\!\in\!\mathbb{R}^{\tfrac{H}{P}\times\tfrac{W}{P}\times D}$ that remain resolution-agnostic across datasets.

\paragraph{Temporal Aggregation Layers.}
To condense the temporal context, DPOT employs a \textbf{weighted temporal MLP} with complex Fourier features.  
For each spatial location $(i,j)$ and channel $c$, the layer forms a weighted sum
\begin{equation}  
z_{\mathrm{agg}}^{(i,j,c)} 
    = \sum_{t=1}^{T} 
      W_{t}^{(c)}\,z_{p}^{t,(i,j,c)}
      e^{-\,\mathrm{i}\gamma_{c}t},
\end{equation}
where $W_{t}^{(c)}$ and $\gamma_{c}$ are learnable and shared across datasets.  
This operation implicitly encodes time-frequency signatures that help the model infer PDE type and latent parameters from short sequences.

\paragraph{Fourier Attention Layers.}
The core stack consists of $L$ \textbf{Fourier–attention layers}.  
Each layer lifts its input to frequency space via an FFT, applies a two-layer multi-head MLP $K_{l}$ to the complex coefficients, and reverts to the spatial domain with an inverse FFT before a point-wise MLP $M_{l}$:  
\begin{equation}
    \mathbf{z}^{(l+1)}
  = \mathbf{z}^{(l)}
    + M_{l}\!\bigl(\mathcal{F}^{-1}\!
        \bigl[K_{l}\bigl(\mathcal{F}[\mathbf{z}^{(l)}]\bigr)\bigr]\bigr).
\end{equation}
This frequency-space mixing acts as an efficient global kernel integral transform and scales linearly with sequence length in practice:contentReference[oaicite:4]{index=4}.

\paragraph{Output Projection Layers.}
Finally, a point-wise projection $Q:\mathbb{R}^{D}\!\rightarrow\!\mathbb{R}^{C_{\mathrm{out}}}$ maps the latent field back to the physical variable space, optionally preceded by up-sampling or padding to match the desired resolution.  
Because $Q$ operates channel-wise, it is independent of grid size and can be re-used for variable-sized domains:contentReference[oaicite:5]{index=5}:contentReference[oaicite:6]{index=6}.

\paragraph{Discussion.}
The modular design makes DPOT both \emph{flexible}—handling irregular resolutions, channel counts, and temporal lengths—and \emph{scalable}: model width $D$ and depth $L$ can be increased to the 1 B-parameter regime with near-linear FLOP growth.  
Moreover, the FFT–IFFT symmetry of the Fourier attention stack enables lightweight fine-tuning strategies such as our Frequency-Adaptive Adapters (Section \ref{sec:method}), which can be inserted without modifying pre-trained weights or training schedules.

\subsection{Poseidon Backbone}
\label{sec:poseidon_backbone}

The \emph{scalable Operator Transformer} (scOT) is the backbone of Poseidon~\citep{herde2024poseidon}, designed to approximate solution operators $S(t,a)$ of time-dependent PDEs by jointly encoding the lead time $t$ and the input function $a$ in a hierarchical, multiscale vision-transformer architecture with shifted-window (SwinV2) attention. In contrast to next-step predictors, scOT directly learns the operator that maps initial data to the entire solution trajectory and supports \emph{continuous-in-time} evaluation through time-conditioned layer normalization.

\paragraph{Patch and Embedding Layers.}
Inputs $a \in C(D;\mathbb{R}^n)$ are first partitioned into non-overlapping $p{\times}p$ patches and linearly embedded into a $C$-dimensional latent field $v \in C(D;\mathbb{R}^{C})$. This discretizes a patching operator that averages within patches and lifts to token space; the embedding is immediately normalized by a (lead-time) conditioned layer norm (see below). The construction is resolution-agnostic and serves as the interface between function-space data and transformer tokens. 

\paragraph{Lead-Time–Conditioned Layer Norm.}
To enable real-valued time queries, scOT replaces standard layer norm $\mathrm{LN}$ with a \emph{lead-time conditioned} variant that modulates the affine parameters by $t$:

\begin{equation}
\mathrm{LN}_{\alpha(t),\beta(t)}(v)(x)=\alpha(t)\odot\frac{v(x)-\mu_v(x)}{\sigma_v(x)}+\beta(t),\quad
\alpha(t)=\alpha_t+\alpha,\ \beta(t)=\beta_t+\beta,
\end{equation}

with $\mu_v,\sigma_v$ the channel-wise mean and standard deviation. This simple conditioning yields continuous-in-time evaluations for $S(t,\cdot)$ within a single network.

\paragraph{Shifted-Window (SwinV2) Attention Blocks.}
At each scale, tokens pass through SwinV2 blocks that apply windowed multi-head self-attention within fixed spatial windows; windows are \emph{shifted across layers} to allow global information exchange with linear complexity in the number of tokens. Each block follows a residual “attention–MLP’’ stack with time-conditioned layer norms on both sublayers.

\paragraph{Hierarchical Encoder–Decoder with Skip Connections.}
SwinV2 stages are arranged in a U-Net–style hierarchy with \emph{patch merging} for down-scaling and \emph{patch expansion} for up-scaling. Encoder and decoder stages at matching resolutions are connected by lightweight ConvNeXt blocks that preserve multi-scale features while keeping the bottleneck convolution-free.

\paragraph{Output Recovery and Mixup.}
After decoding, a recovery head reassembles the latent tokens back to the physical domain, optionally with mixup in the output space; this step is independent of the grid size and thus compatible with variable-resolution domains.

\paragraph{Training Objective and all2all Supervision.}
Given trajectories $\{S(t_k,a_i)\}_{k=0}^{K}$, scOT can be trained with a standard operator loss

\begin{equation}
L(\theta)
=\frac{1}{M(K+1)}\sum_{i=1}^{M}\sum_{k=0}^{K}
\left\|\,S_{\theta}^{*}(t_k,a_i)-S(t_k,a_i)\,\right\|_{L^{p}(D)}
\quad (p=1).
\end{equation}

and, crucially, with an \emph{all2all} variant that exploits the semigroup property $S(t^\ast,a)=S(t^\ast{-}t,\,S(t,a))$ to form supervision from all intra-trajectory time pairs $(t_k,t_{\bar k})$ with $k\le\bar k$, yielding $O(K^2)$ training pairs per trajectory and markedly improved data efficiency. At inference, scOT supports direct $t$-queries or variable-step rollout via successive applications of $S_\theta^{*}$.

\paragraph{Discussion.}
The modular design—patch embeddings, time-conditioned normalization, shifted-window attention, and multi-scale encoder–decoder—makes scOT both \emph{flexible} (heterogeneous PDE inputs, resolutions, boundary conditions via masking) and \emph{scalable} (depth/width and token counts). In POSEIDON, this backbone underpins large gains in sample efficiency and accuracy across diverse downstream PDE operators after pretraining on a compact set of fluid-dynamics operators.

\subsection{Fine\mbox{-}tuning Protocol}
We begin with a DPOT backbone that was pre\mbox{-}trained on diverse two\mbox{-}dimensional PDE trajectories and adapt it to new datasets of arbitrary dimensionality.

\begin{itemize}
    \item \textbf{Dimensional adaptation. }%
    Only the Fourier--Attention layers and Patch\mbox{-}Embedding kernels are replaced with their one\mbox{-}, two\mbox{-}, or three\mbox{-}dimensional counterparts that match the target grid. %
    Positional embeddings are resized with trilinear interpolation, while all remaining weights are loaded unchanged.
    \item \textbf{Parameter\mbox{-}efficient updates. }%
    All newly added PEFT modules are initialized with (near)\mbox{-}zero up\mbox{-}projection weights, so the network initially behaves like the frozen backbone and gradually routes learning into the adapters as training proceeds.
\end{itemize}

Apart from these structural switches, the fine\mbox{-}tuning pipeline---optimizer, scheduler, and so forth---follows the same recipe used during pre\mbox{-}training, but updates only the lightweight adapter weights and a few normalisation parameters.

\subsection{Placement of PEFT Modules}
PEFT Modules are inserted at four important positions of the network:
\begin{enumerate}
    \item Patch/Positioning Embedding Layers---after each convolution that maps raw input patches into the latent space.
    \item Temporal Aggregation Layers---directly after the first patchifying layer.
    \item Fourier Attention Layers---before, between, and after the linear transforms operating in Fourier space.
    \item Output Projection Layers---parallel to the final transposed\mbox{-}convolution path that reconstructs the physical field.
\end{enumerate}

This arrangement grants every major transformation pathway a low\mbox{-}rank, trainable side route, enabling the model to specialise to new PDE systems with minimal additional parameters.

\subsection{Evaluation Metric}\label{subsec:metric}

The \emph{L2 Relative Error} (\textbf{L2RE}) is adopted as the sole evaluation metric.  
Given the test set $\mathcal{D}=\{(\mathbf{y}_i,\hat{\mathbf{y}}_i)\}_{i=1}^{N}$, L2RE is defined as
\begin{equation}
\mathrm{L2RE}
      =\frac{1}{N}\sum_{i=1}^{N}
        \frac{\lVert\hat{\mathbf{y}}_i-\mathbf{y}_i\rVert_2}
             {\lVert\mathbf{y}_i\rVert_2}.
\end{equation}
This ratio normalizes the prediction error by the energy of the ground-truth signal, yielding a dimension-free quantity whose smaller value indicates better performance.

\subsection{PDEBENCH 3D Compressible Navier--Stokes (CFD-3D) Dataset}
\label{subsec:3d_cns_dataset}

The CFD-3D benchmark released with \textsc{PDEBench}~\citep{takamoto2022pdebench}
targets the forward prediction of turbulent, compressible flows in \emph{three spatial
dimensions}. It now comprises \textbf{three} distinct subsets, each recorded on a
\textbf{$128^{3}$ Cartesian grid} and sharing identical solver parameters and output
format:

\begin{center}
\begin{tabular}{@{}lll@{}}
\toprule
Subset name & Initial condition & $(\eta,\varsigma,M)$  \\
\midrule
\texttt{NS-3D-turb} & divergence–free turbulence & $(10^{-8},10^{-8},1.0)$ \\
\texttt{NS-3D-rand} & random Gaussian field      & $(10^{-8},10^{-8},1.0)$ \\
\texttt{NS-3D-rand} & low–Mach random field      & $(10^{-8},10^{-8},0.1)$ \\
\bottomrule
\end{tabular}
\end{center}

Each subset contains \textbf{100} simulation trajectories of the full compressible
Navier–Stokes (CNS) equations~\eqref{eq:cns_mass}--\eqref{eq:cns_energy}.  
\begin{align}
\partial_t \rho + \nabla\!\cdot(\rho \mathbf{v}) &= 0, \label{eq:cns_mass}\\
\rho\!\left(\partial_t \mathbf{v} + \mathbf{v}\!\cdot\!\nabla \mathbf{v}\right) &=
    -\nabla p + \eta \nabla^{2}\mathbf{v}
    + \left(\varsigma + \frac{\eta}{3}\right)\nabla(\nabla\!\cdot\!\mathbf{v}), \label{eq:cns_mom}\\
\partial_t\!\Bigl(\varepsilon + \frac{1}{2}\rho\lVert\mathbf{v}\rVert^{2}\Bigr)
    + \nabla\!\cdot\!\Bigl[
        \left(p + \varepsilon + \frac{1}{2}\rho\lVert\mathbf{v}\rVert^{2}\right)\mathbf{v}
        - \mathbf{v}\!\cdot\!\boldsymbol{\sigma}'
    \Bigr] &= 0. \label{eq:cns_energy}
\end{align}

Every
trajectory provides \textbf{21 equally-spaced snapshots} ($t\!\in\![0,1]$) stored as
six-channel tensors
$\bigl[\rho,u,v,w,p,\varepsilon\bigr]\!\in\!\mathbb{R}^{128\times128\times128\times6}$.

\paragraph{Initial and boundary conditions.}
The three subsets differ only by their \emph{initial} velocity field and Mach number.
\texttt{NS-3D-turb} seeds a divergence-free Kolmogorov-type spectrum, whereas
\texttt{NS-3D-rand} and \texttt{NS-3D-rand} draw velocity, density and pressure
perturbations from isotropic Gaussian random fields (extended from
Equation~(8) in~\citealp{takamoto2022pdebench}) before adding a uniform background.
Periodic boundaries are enforced in all directions, mimicking homogeneous isotropic
turbulence and simplifying spectral learning methods. The random-field subsets include a fixed-Mach configuration at $M = 1.0$ and a nearly inviscid, low-Mach compressible configuration at $M = 0.1$; full specifications of these variants are provided in the official dataset card.

\paragraph{Scientific and ML challenges.}
Beyond the previously noted high dimensionality and shock-capturing difficulties,
the extended CFD-3D benchmark now stresses surrogate models along two additional
axes:
(i)~\emph{initial-condition diversity} (turbulent vs.\ random fields) and
(ii)~\emph{Mach-number variation} spanning an order of magnitude ($M\!=\!0.1$ to
$1.0$).  Successful models must therefore exhibit \emph{robust generalisation across
both flow regimes and acoustic compressibility scales}.

\paragraph{Splits.}
Following the original protocol, we reserve 90 trajectories for
training/validation and 10 for held-out testing \emph{within each subset}. We utilize stratified sampling
to preserve the subset ratios.

\begin{table}[htbp]
\centering
\begin{tabular}{lccc}
\toprule
\textbf{Quantity} & \textbf{Symbol} & \textbf{Value} & \textbf{Notes}\\
\midrule
Spatial resolution & $N_x\times N_y\times N_z$ & $128^3$ & Cartesian grid\\
Time steps per run & $N_t$ & 21 & $\Delta t\!=\!0.05$\\
Number of runs & $N_{\text{samples}}$ & 100 & 90/10 train/test split\\
Viscosity pairs & $(\eta,\varsigma)$ & $10^{-8}$,\;$10^{-2}$ & Two regimes\\
Mach number & $M$ & 1.0 & Isothermal EOS\\
Boundary condition & – & Periodic & All faces\\
Stored channels & – & 6 & $\rho,u,v,w,p,\varepsilon$\\
\bottomrule
\end{tabular}
\caption{Core statistics of the CFD-3D dataset.}
\label{tab:cfd3d_stats}
\end{table}

\subsection{PDEBench 2D Shallow Water Equations (SWE 2D) Dataset}
\label{subsec:swe2d_dataset}

The SWE 2D benchmark in \textsc{PDEBench} targets forward prediction of free-surface flows in \emph{two spatial dimensions}. Each trajectory is simulated on a \textbf{$128{\times}128$ Cartesian grid} with nonperiodic Neumann boundaries and is provided as an HDF5 array following the convention \mbox{$(N,T,X,Y,V)$}. The benchmark offers \textbf{1000} distinct runs and, for each run, \textbf{100} stored time steps that capture nonlinear wave fronts and shock-like features typical of shallow-water dynamics. Baseline solvers and dataset packaging follow the official \textsc{PDEBench} specification.

\paragraph{Governing equations.}
The two-dimensional shallow water system is supplied in conservative form with bed-slope source terms:
\begin{align}
\partial_t h + \partial_x(hu) + \partial_y(hv) &= 0, \label{eq:swe_mass}\\
\partial_t(hu) + \partial_x\!\Big(hu^{2} + \tfrac{1}{2} g h^{2}\Big) + \partial_y(huv) &= -\,g\,h\,\partial_x b, \label{eq:swe_momx}\\
\partial_t(hv) + \partial_y\!\Big(hv^{2} + \tfrac{1}{2} g h^{2}\Big) + \partial_x(huv) &= -\,g\,h\,\partial_y b. \label{eq:swe_momy}
\end{align}

where $h$ is water depth, $(u,v)$ are horizontal velocities, $b(x,y)$ is the bathymetry, and $g$ is gravitational acceleration. In this representation the prognostic variables are $(h,hu,hv)$, which makes conservation properties explicit and well defined even in the presence of discontinuities.

\paragraph{Problem setup and data generation.}
The dataset instantiates a radial dam-break scenario on a square domain $\Omega=[-2.5,2.5]^2$ with an initial water mound centered at the origin,

$$
h(0,x,y)=
\begin{cases}
2.0, & \sqrt{x^2+y^2}<r,\\[2pt]
1.0, & \text{otherwise},
\end{cases}
\quad u(0,x,y)=v(0,x,y)=0,
$$

where the radius $r$ is sampled uniformly per run from $[0.3,0.7]$ to diversify initial conditions. Simulations use a finite-volume solver from \texttt{PyClaw} to generate reference trajectories that are then downsampled to the released resolution and schedule.

\paragraph{Scientific and ML challenges.}
SWE 2D stresses emulators through sharp fronts, wetting and drying interfaces, and reflection at nonperiodic boundaries. Accurate surrogates must conserve mass and handle momentum coupling while remaining stable over multi-step rollouts. The benchmark exposes these difficulties in a controlled setting with standardized storage and splits.

\paragraph{Splits.}
Following the official protocol, we reserve $90 \%$ of runs for training and validation and $10 \%$ for held-out testing. The same split policy is used for baseline models reported in the dataset paper.

\begin{table}[t]
\centering
\begin{tabular}{lccc}
\toprule
\textbf{Quantity} & \textbf{Symbol} & \textbf{Value} & \textbf{Notes}\\
\midrule
Spatial resolution & $N_x\times N_y$ & $128^2$ & Cartesian grid\\
Time steps per run & $N_t$ & $100$ & Stored steps per trajectory\\
Number of runs & $N_{\text{samples}}$ & $1000$ & 90/10 train/test split\\
Domain & $\Omega$ & $[-2.5,2.5]^2$ & Square box\\
Boundary condition & – & Neumann & Nonperiodic\\
Initial condition & – & radial mound & $r\sim\mathcal{U}(0.3,0.7)$; $u=v=0$\\
Stored channels & – & $h,,hu,,hv$ & Conservative variables\\
Solver & – & \texttt{PyClaw} & Finite-volume reference\\
\bottomrule
\end{tabular}
\caption{Core statistics of the SWE 2D dataset in \textsc{PDEBench}.}
\label{tab:swe2d_stats}
\end{table}

DPOT uses this SWE 2D benchmark when reporting pre-training and transfer results, which motivates our choice to adopt the same data conventions and splits.

\subsection{Bifrost Chromosphere–Corona MHD-3D Dataset}
\label{subsec：mhd3d_dataset}

We use the publicly released \emph{Bifrost} enhanced-network simulation distributed by the Hinode Science Data Centre Europe (ID: \texttt{en024048\_hion}). It provides time-indexed 3D magnetic-field cubes $(B_x,B_y,B_z)$ on a Cartesian grid of $504 \times 504 \times 496$ that spans a physical volume of $\SI{24}{\mega\metre} \times \SI{24}{\mega\metre} \times \SIrange{-2.4}{14.4}{\mega\metre}$. The standard release contains $157$ snapshots at a cadence of \SI{10}{\second}, from $t=\SI{3850}{\second}$ to $t=\SI{5410}{\second}$. These specifications are consistent with the original description of the run and subsequent studies that use the same source.

\paragraph{Governing equations.}
Bifrost advances the full resistive MHD system on a staggered mesh with high-order finite differences and explicit time stepping. The code solves, in conservative form, mass continuity, momentum balance with Lorentz force, magnetic induction, and total-energy evolution:
\begin{align}
\frac{\partial \rho}{\partial t} + \nabla\cdot(\rho \mathbf{u}) &= 0, \label{eq:mhd_mass}\\
\frac{\partial (\rho \mathbf{u})}{\partial t} + \nabla\cdot\left(\rho\,\mathbf{u}\otimes\mathbf{u} - \boldsymbol{\tau}\right) &= -\nabla p + \mathbf{J}\times\mathbf{B} + \rho\,\mathbf{g}, \label{eq:mhd_mom}\\
\frac{\partial \mathbf{B}}{\partial t} &= \nabla\times(\mathbf{u}\times\mathbf{B}) - \nabla\times(\eta\,\mathbf{J}), \label{eq:mhd_ind}\\
\frac{\partial e}{\partial t} + \nabla\cdot(e\,\mathbf{u}) &= -p\;\nabla\cdot\mathbf{u} + Q. \label{eq:mhd_energy}
\end{align}

with $\mu_0 \mathbf{J}=\nabla\!\times\!\mathbf{B}$. Boundary treatment uses ghost zones with problem-dependent conditions; radiation, conduction, and non-equilibrium ionization are included through $Q$ and the equation of state. 

\paragraph{Initial and boundary conditions.}
The simulation represents an enhanced network with two opposite magnetic polarities separated by about $8\,\mathrm{Mm}$ at the photosphere. Convective driving shears and braids the field, producing realistic chromosphere–corona coupling. Lateral and top boundaries are nonperiodic in the production run used by \texttt{en024048\_hion}, implemented through ghost zones in Bifrost.

\paragraph{Scientific and ML challenges.}
MHD-3D stresses operator learners through strong anisotropy along field lines, steep gradients near the photosphere, and nonperiodic boundaries that complicate spectral assumptions. Accurate surrogates must reconstruct 3D structure from minimal boundary information and remain stable across height with respect to physically derived metrics such as $|\mathbf{B}|$ and $|\mathbf{J}|$. 

\paragraph{Splits and extreme-scarcity protocol.}
To probe learning under severe data scarcity, we treat each time index as a trajectory and select \textbf{24} snapshots for training. The remaining snapshots are held out for testing. Unless otherwise noted, inputs are the bottom boundary magnetogram $(504\times 504)$ and targets are the downsampled $(504\times 504\times 99)$ interior volume for the same time index, following the GL-FNO data interface. 

\begin{table}[t]
  \centering
  \footnotesize
  \caption{Core statistics of the MHD-3D dataset used in our study.}
  \label{tab:mhd3d_stats}
  \begin{tabular}{@{}l l l l@{}}
    \toprule
    \textbf{Quantity} & \textbf{Symbol} & \textbf{Value} & \textbf{Notes} \\
    \midrule
    Spatial resolution & $N_x \times N_y \times N_z$ & $504 \times 504 \times 496$ & Cartesian grid \\
    Physical domain & $\Omega$ & $24{\times}24{\times}16.8~\si{\mega\metre^{3}}$ & $z \in [\SI{-2.4}{\mega\metre},\, \SI{14.4}{\mega\metre}]$ \\
    Snapshots per run & $N_t$ & 157 & 10 s cadence;\; $t \in [\SI{3850}{\second},\, \SI{5410}{\second}]$ \\
    Stored channels & --- & $B_x, B_y, B_z$ & Magnetic field \\
    Boundary condition & --- & Nonperiodic & Ghost-zone implementation \\
    Downsampled target & --- & $504 \times 504 \times 99$ & Height subsampling \\
    Train set (ours) & $N_{\text{train}}$ & 24 & Extreme scarcity \\
    Source archive & --- & \texttt{en024048\_hion} & Hinode SDC Europe \\
    \bottomrule
  \end{tabular}
\end{table}

\paragraph{Provenance and access.}
The \texttt{en024048\_hion} cubes are curated at Hinode SDC Europe; the underlying simulation is documented by~\citet{carlsson2016publicly}. Recent ML work on coronal-field reconstruction from this source provides a consistent pre-processing recipe and confirms the 3D geometry and cadence figures listed above. 

\subsection{Spectral Diagnostics of \texorpdfstring{$\Delta W$}{ΔW} after Full‐Rank Fine‐Tuning}
\label{subsec:full_rank_diagnostics}

To assess the intrinsic rank of the updates obtained via unconstrained fine‐tuning, we perform \emph{full‐rank} adaptation of the 1 B‐parameter DPOT‐H backbone on the 3D‐NS random‐initial‐condition datasets at Mach numbers $M=1.0$ and $M=0.1$. After convergence, we extract the complex‐valued Fourier Attention Layer weights (real and imaginary concatenated) $\{\mathbf{W}_{k,p}^{\mathrm{ft}}\}$ for each block $k\in[0,26]$ and projection $p\in\{w_1,w_2\}$, compute the deltas
\begin{equation}
    \Delta\mathbf{W}_{k,p}
=
\mathbf{W}_{k,p}^{\mathrm{ft}}
-
\mathbf{W}_{k,p}^{\mathrm{pre}},
\end{equation}
flatten each $\Delta\mathbf{W}_{k,p}$ to a matrix, and perform singular‐value decomposition. We define the \emph{effective rank} as the number of singular values $\sigma_i$ satisfying $\sigma_i \ge 0.01\,\sigma_1$.

\begin{figure*}[t]
    \centering
    \begin{minipage}{0.75\textwidth}
        \centering
        \begin{subfigure}[t]{0.46\linewidth}
            \includegraphics[width=\linewidth,height=3cm]%
                {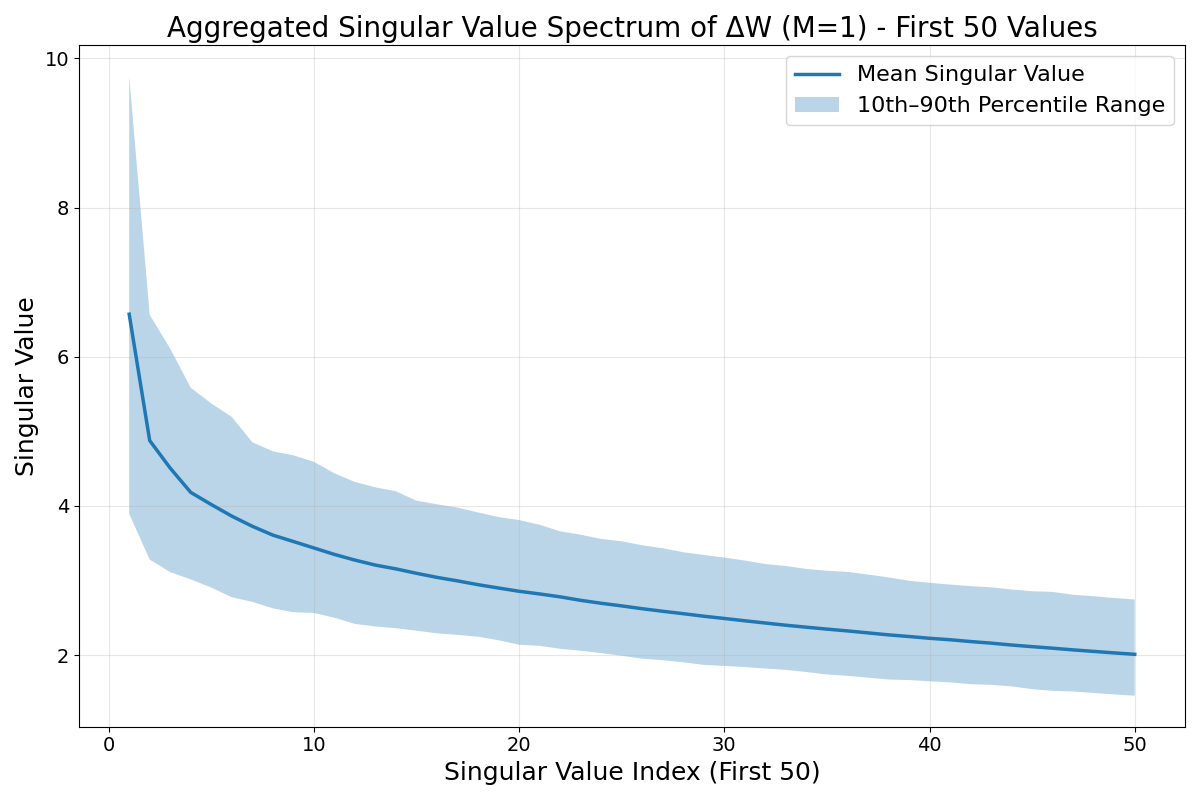}
        \end{subfigure}
        \hspace{0.04\linewidth} 
        \begin{subfigure}[t]{0.46\linewidth}
            \includegraphics[width=\linewidth,height=3cm]%
                {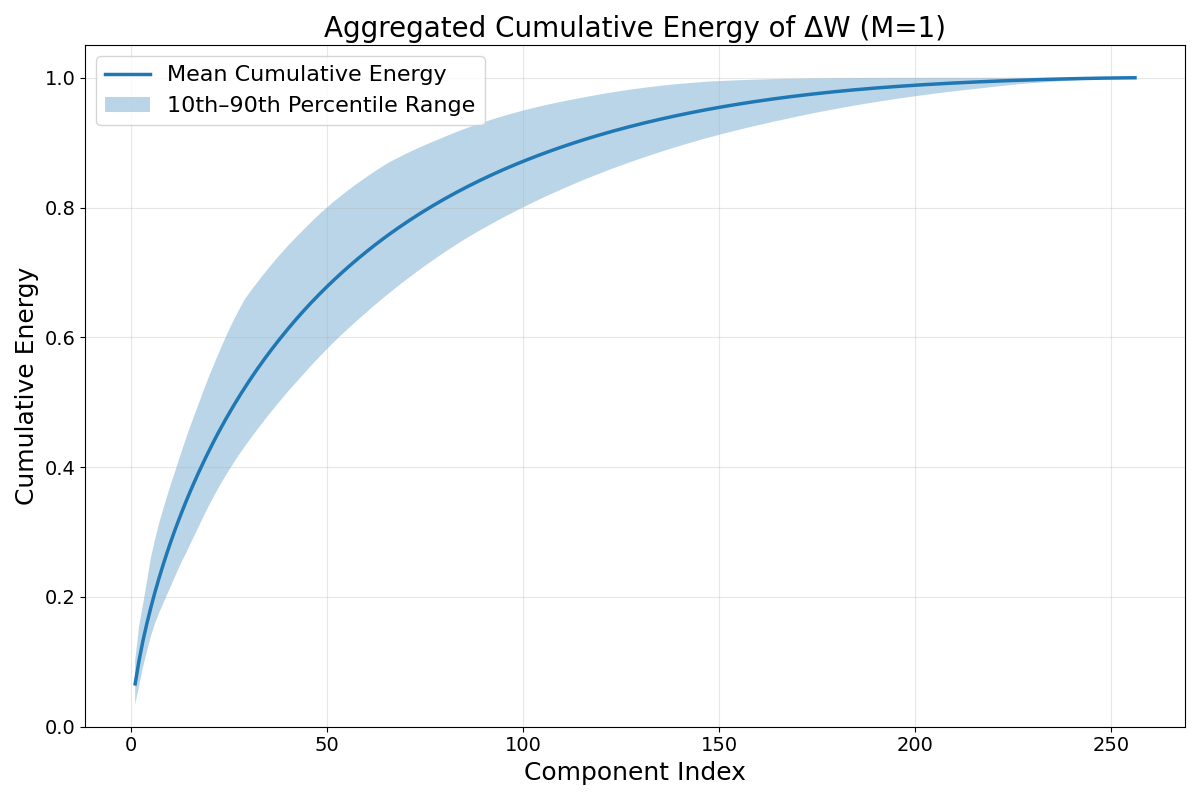}
        \end{subfigure}
    \end{minipage}

    \vspace{0.8em} 

    \begin{subfigure}[t]{0.65\textwidth}
        \centering
        \includegraphics[width=\linewidth,height=4cm]%
            {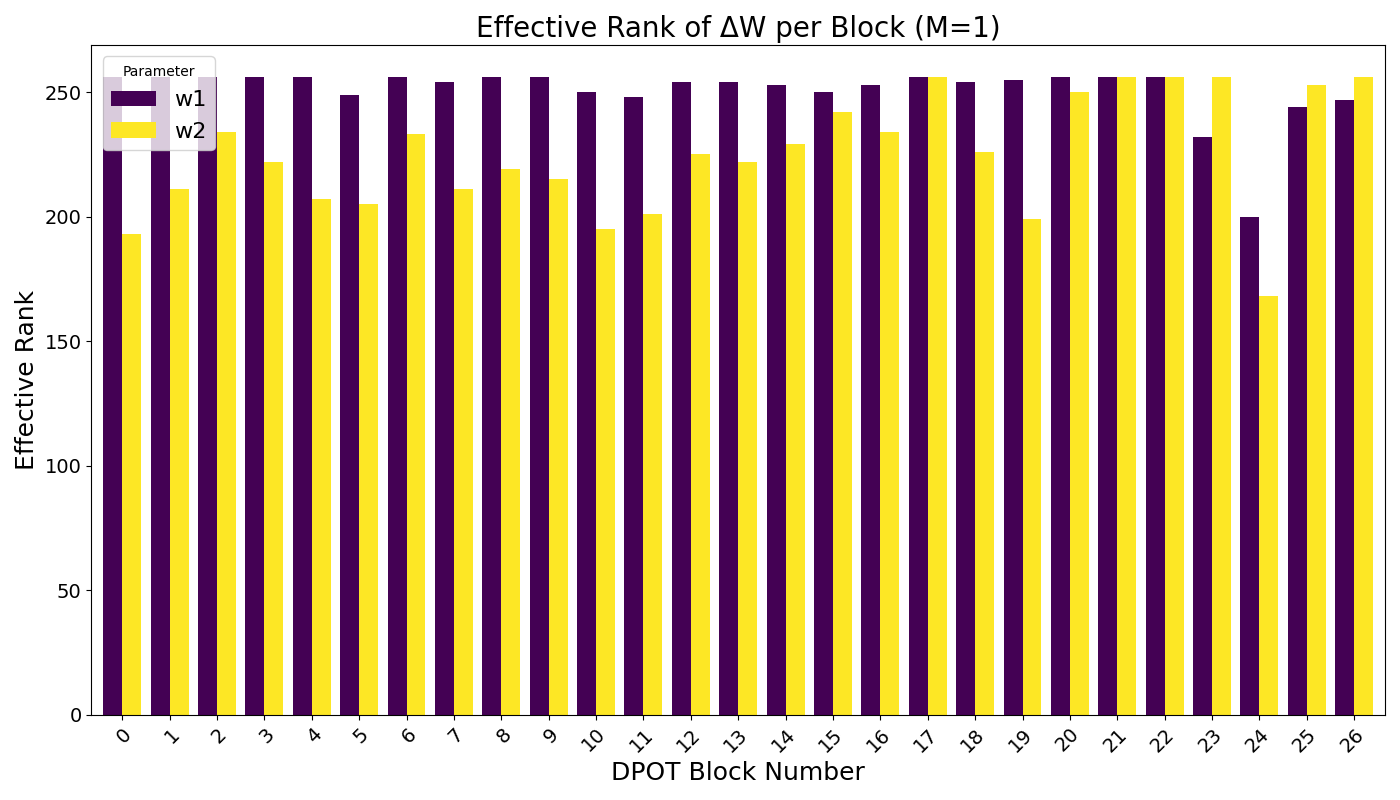}
    \end{subfigure}

    \caption{\textbf{Spectral diagnostics after full-rank fine-tuning at $M=1.0$.}
             Top row: aggregated singular-value spectrum (left) and cumulative-energy
             curve (right). Bottom: block-wise effective ranks of $\Delta W$.}
    \label{fig:full_rank_diag_m1}
\end{figure*}

\begin{figure*}[t]
    \centering
    \begin{minipage}{0.75\textwidth}
        \centering
        \begin{subfigure}[t]{0.46\linewidth}
            \includegraphics[width=\linewidth,height=3cm]%
                {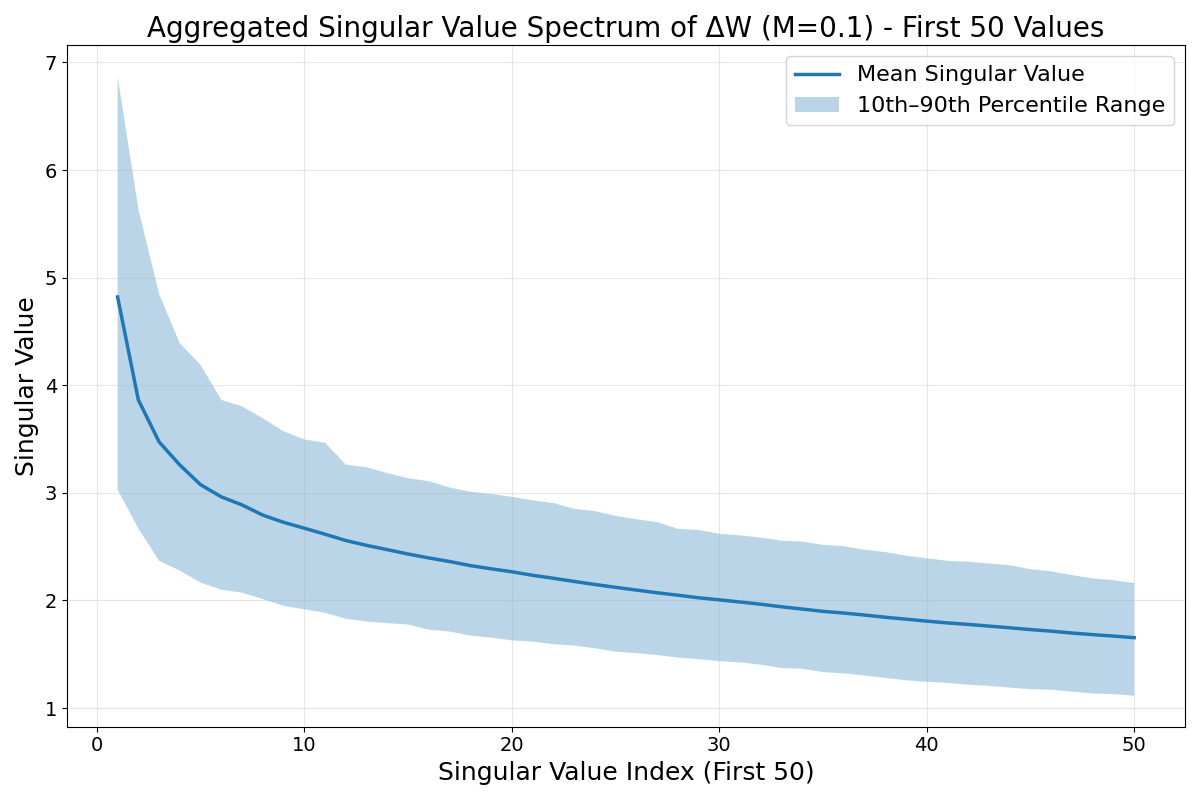}
        \end{subfigure}
        \hspace{0.04\linewidth}
        \begin{subfigure}[t]{0.46\linewidth}
            \includegraphics[width=\linewidth,height=3cm]%
                {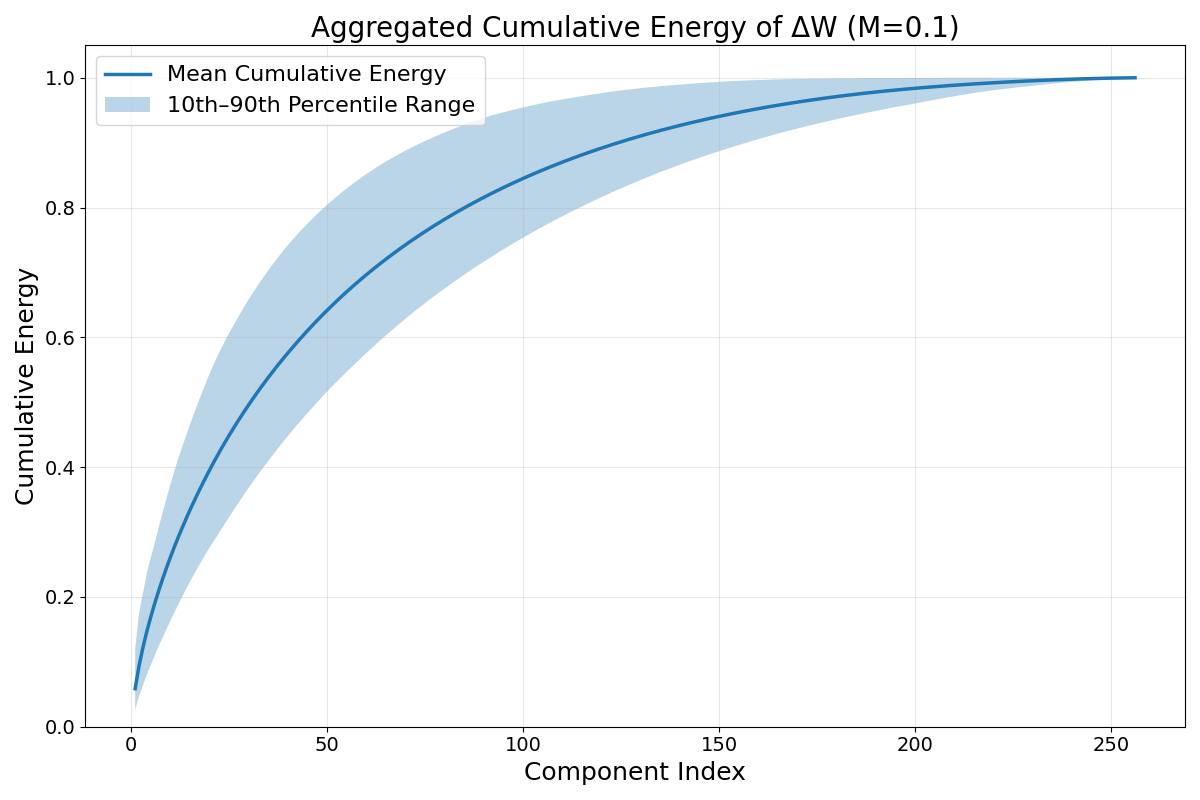}
        \end{subfigure}
    \end{minipage}

    \vspace{0.8em}

    \begin{subfigure}[t]{0.65\textwidth}
        \centering
        \includegraphics[width=\linewidth,height=4cm]%
            {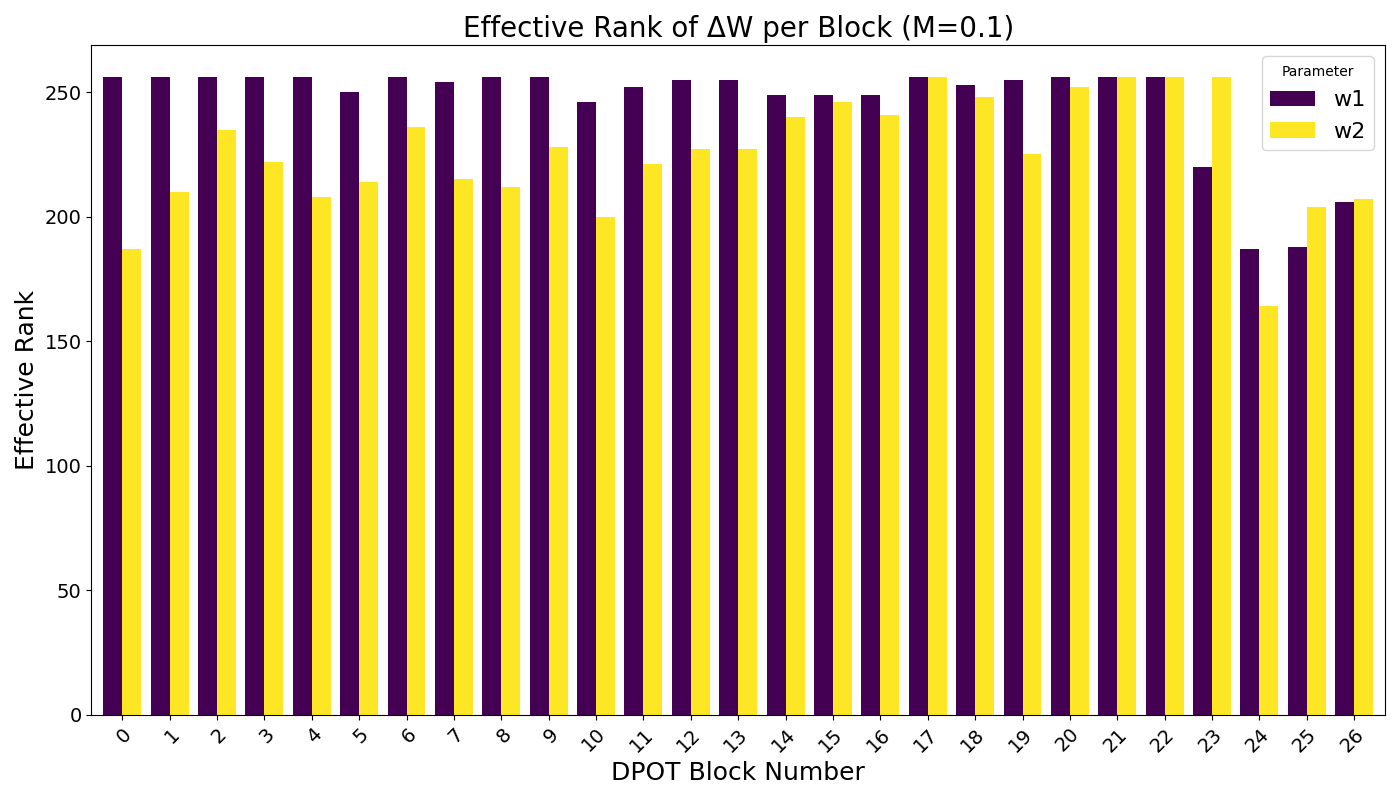}
    \end{subfigure}

    \caption{\textbf{Spectral diagnostics after full-rank fine-tuning at $M=0.1$.}
             Top row: aggregated singular-value spectrum (left) and cumulative-energy
             curve (right). Bottom: block-wise effective ranks of $\Delta W$.}
    \label{fig:full_rank_diag_m01}
\end{figure*}

\paragraph{Diagnostics at $M\!=\!1.0$.}
Upper left panel in Figure~\ref{fig:full_rank_diag_m1}  reveals a shallow spectral decay,
while upper right panel shows that only~$\sim80\%$ of the Frobenius norm is
captured by the first~100 modes.  
Bottom Panel further indicates that nearly every block requires
$\ge240$ modes (of~256), confirming the \emph{high-rank} nature of the
update and the large approximation gap faced by rank-constrained
adaptation such as LoRA (see Proposition~\ref{prop:block}).

\paragraph{Diagnostics at $M\!=\!0.1$.}
Although the low-Mach setting leads to a slightly steeper spectral decay
(upper left panel in Figure~\ref{fig:full_rank_diag_m01}) and a faster accumulation of
energy (upper right panel), the transformation remains far from low-rank: capturing
\SI{90}{\percent} of the energy still requires $\sim140$ modes for
$w_1$ and $\sim103$ modes for $w_2$.  
Bottom Panel shows that the down-projection pathway retains an average
effective rank of~$\approx246$, while the up-projection weights still
average~$\approx226$ modes, underscoring that the intrinsic update is
\emph{still} high-rank.  
Consequently, even at $M=0.1$ rank-constrained linear adapters suffer
from an irreducible spectral bias.

Table~\ref{tab:eff_rank_both} summarizes the effective‐rank statistics of $\Delta W$ at $M=1.0$ and $M=0.1$, confirming that both Mach regimes yield inherently high‐rank updates. Likewise, Table~\ref{tab:energy90_both} reports the number of singular components required to capture \SI{90}{\percent} of the total energy: at $M=1.0$, $\approx126.6$ (resp.\ $97.8$) modes are needed for $w_1$ (resp.\ $w_2$), compared to $\approx138.6$ and $\approx103.3$ at $M=0.1$. Although compressible flows at $M=0.1$ exhibit slightly steeper spectral decay, the updates remain far from low‐rank. These diagnostics substantiate that full‐rank fine‐tuning induces intrinsically high‐rank transformations, thereby imposing an irreducible spectral bias on rank‐constrained linear adapters.

\begin{table}[htbp]
    \centering
    \begin{tabular}{lrrrr|rrrr}
        \toprule
        & \multicolumn{4}{c|}{$M=1.0$} & \multicolumn{4}{c}{$M=0.1$} \\
        Parameter & Mean & Std.\ & Min & Max & Mean & Std.\ & Min & Max \\
        \midrule
        $w_1$ & 250.7 & 11.4 & 200 & 256 & 245.9 & 20.3 & 187 & 256 \\
        $w_2$ & 225.0 & 23.6 & 168 & 256 & 225.7 & 23.0 & 164 & 256 \\
        \bottomrule
    \end{tabular}
    \caption{Effective‐rank statistics of $\Delta W$ whose dimension is $256$ at $M=1.0$ and $M=0.1$ (across all Fourier‐Attention blocks).}
    \label{tab:eff_rank_both}
\end{table}

\begin{table}[htbp]
    \centering
    \begin{tabular}{lrrrr|rrrr}
        \toprule
        & \multicolumn{4}{c|}{$M=1.0$} & \multicolumn{4}{c}{$M=0.1$} \\
        Parameter & Mean & Std.\ & Min & Max & Mean & Std.\ & Min & Max \\
        \midrule
        $w_1$ & 126.6 & 20.4 &  67 & 155 & 138.6 & 27.6 &  65 & 170 \\
        $w_2$ &  97.8 & 20.8 &  52 & 138 & 103.3 & 26.5 &  52 & 154 \\
        \bottomrule
    \end{tabular}
    \caption{Number of singular components required to reach \SI{90}{\percent} cumulative energy at $M=1.0$ and $M=0.1$.}
    \label{tab:energy90_both}
\end{table}

\subsection{RMSE Comparison for MLP Adapter vs.\ Low-Rank Truncation}
\label{app:mlp_vs_lora_table}

The numerical results reported in the main text were obtained on a diagnostic
set built from the \emph{first Fourier-Attention block} of DPOT.  
Specifically, we collect \(\!N = 200\,000\) real Fourier activations
\(H\!\in\!\mathbb{R}^{N\times d}\) and compute their targets
\(Y = H\,\Delta W^{\top}\), where \(\Delta W\) is the exact full-rank weight
update after fine-tuning.  The data are split 90/10\,\% into training and
validation subsets before fitting either surrogate.  Table \ref{tab:mlp_lora_rmse}
lists the root-mean-square error (RMSE, reported in \(10^{-2}\) units) for all
budgets considered.  Consistent with the curves in
Figure~\ref{fig:mlp_vs_lora_both}, the two-layer MLP adapter dominates the
low-rank SVD baseline across the entire budget spectrum and for both Mach
numbers.

\begin{table}[htbp]
    \centering
    \setlength{\tabcolsep}{5.5pt}
    \begin{tabular}{llcccccc}
        \toprule
        \textbf{Mach} & \textbf{Method} & 4 & 8 & 16 & 32 & 64 & 128\\
        \midrule
        \multirow{2}{*}{$M = 1.0$}
            & Adapter (MLP)        & 30.48 & 22.05 & 17.64 &  6.02 &  5.70 &  \textbf{5.09}\\
            & Low-Rank Trunc.      & 29.10 & 27.99 & 26.08 & 22.06 & 16.83 &  8.96\\[0.4em]
        \multirow{2}{*}{$M = 0.1$}
            & Adapter (MLP)        & 24.76 & 27.67 & 18.38 &  8.43 &  \textbf{4.04} &  4.46\\
            & Low-Rank Trunc.      & 22.95 & 21.92 & 20.23 & 17.37 & 13.53 &  7.34\\
        \bottomrule
    \end{tabular}
    \caption{Held-out RMSE (\(\times10^{-2}\)) for each parameter budget.
             Adapter budgets correspond to hidden widths \(m\); low-rank budgets
             correspond to SVD ranks \(r\).}
    \label{tab:mlp_lora_rmse}
\end{table}

\paragraph{Implications.}
Because the MLP is trained \emph{directly} on the $(H,Y)$ mapping it can exploit
non-linear interactions in the representation space that any linear low-rank
approximation of $\Delta W$ must ignore.
The result corroborates our spectral analysis:
even aggressive rank truncation leaves a non-negligible error floor,
whereas a modest non-linear adapter is able to emulate the full-rank update
with far fewer tunable parameters. This finding further reinforces the case for Adapter-style PEFT in operator learning tasks characterized by pronounced physical complexity.

\subsection{FLOPs and Inference Time for Experiments in Section~\ref{sec:experiments}}

\begin{table*}[htbp]
    \centering
    \footnotesize                 
    \setlength{\tabcolsep}{7pt}   
    \resizebox{0.7\textwidth}{!}{ 
    \begin{tabular}{@{}lcc@{}}
        \toprule
        \textbf{Scheme} & \textbf{FLOPs (G)} & \textbf{Step Inference Time (ms)} \\
        \midrule
        AdaLoRA~\citep{zhang2023adalora}            & 543.384 & 75.631 \\
        HydraLoRA~\citep{tian2024hydralora}         & 547.039 & 155.784 \\
        Prompt Tuning~\citep{lester2021power}       & 540.838 & 28.649 \\
        Vanilla Adapter~\citep{houlsby2019parameter}& 547.469 & 81.823 \\
        FiLM Adapter~\citep{shysheya2022fit}        & 548.318 & 93.676 \\
        RandLoRA~\citep{albert2025randlora}         & 545.458 & 73.432 \\
        LoRA~\citep{hu2022lora}                     & 551.008 & 71.651 \\
        F\mbox{-}Adapter \textbf{(Ours)}            & 548.531 & 90.383 \\
        SVFT~\citep{lingam2024svft}                 & 630.880 & 93.026 \\
        Chebyshev Adapter                           & 554.797 & 268.022 \\
        Fourier Adapter                             & 546.849 & 1449.544 \\
        WaveAct Adapter                             & 547.469 & 92.694 \\
        Full Fine\mbox{-}Tuning                     & 540.838 & 27.427 \\
        \bottomrule
    \end{tabular}}
    \caption{Computational cost and single-step latency of parameter-efficient fine-tuning strategies.}
    \label{tab:flops}
\end{table*}

Table~\ref{tab:flops} presents a comparative overview of the computational overhead incurred by each PEFT scheme in the \textit{3D~NS Rand $M=1.0$} experiment. Our \textbf{F-Adapter} executes 548\,G~FLOPs—only about 0.2\% more than the Vanilla Adapter and comfortably within the typical LoRA budget—while yielding a single-step inference latency of 90\,ms, well below the 100\,ms threshold commonly regarded as interactive. Although Prompt Tuning and full fine-tuning achieve slightly lower latencies (28\,ms and 27\,ms, respectively), the runtime premium of F-Adapter is modest and offset by its richer representational capacity. Crucially, our approach is an order of magnitude faster than hydra-style LoRA or higher-order spectral adapters specifically designed for Fourier domain, underscoring the efficiency of its frequency-adaptive design. Overall, F-Adapter strikes a favorable balance between computational cost and adaptation power, making it a practical drop-in replacement for existing adapter families in latency-sensitive scenarios.

\subsection{Spectral Analysis of PEFT Methods on DPOT}
\label{spectral_analysis}

\paragraph{Setup.}
We evaluate how well different PEFT methods recover multi–scale \textbf{3D turbulence} in 3D NS experiment by comparing the predicted isotropic kinetic–energy spectrum against DNS on the test set, and by inspecting 3D visualizations of velocity magnitude. For spectra, we compute $E(k)$ from the three velocity components using a 3D FFT and shell averaging in wavenumber space; prediction and DNS are processed by the same pipeline.

\paragraph{Metrics.}
To quantify agreement across scales we report two spectrum–level metrics.

\textbf{(i) Root mean square logarithmic error (RMSLE) of the spectrum}
\begin{equation}
\operatorname{RMSLE}_{E}
= \sqrt{\frac{1}{N}\sum_{i=1}^{N}
\Bigl(\log_{10} E_{\mathrm{pred}}(k_i)-\log_{10} E_{\mathrm{DNS}}(k_i)\Bigr)^2},
\end{equation}
where the sum runs over wavenumber shells $\{k_i\}_{i=1}^{N}$. This measures shell–wise discrepancy on a logarithmic scale so that low and high $k$ bands contribute comparably.

\textbf{(ii) Relative error of the total kinetic energy}
\begin{equation}
E_{\mathrm{tot}}=\int_{0}^{k_{\max}} E(k)\,\mathrm{d}k,
\qquad
\operatorname{RelErr}_{E}
=\frac{\lvert E_{\mathrm{tot,pred}}-E_{\mathrm{tot,DNS}}\rvert}
{E_{\mathrm{tot,DNS}}}\times 100\%.
\end{equation}
The integral equals the domain–averaged turbulent kinetic energy up to a constant factor, so this metric captures conservation of total energy content.

\begin{table}[t]
\centering
\small
\setlength{\tabcolsep}{8pt}
\begin{tabular}{lccc}
\toprule
\textbf{Scheme} & \textbf{RMSLE\(_E\)} $\downarrow$ & \textbf{RelErr\(_E\)} $\downarrow$ & \textbf{\% Param} \\
\midrule
Vanilla Adapter      & 0.9186 & 10.55\% & 1.16\% \\
LoRA                 & 1.9356 & 435.09\% & 1.37\% \\
F\textendash Adapter (Ours) & \textbf{0.9095} & \textbf{6.12\%} & 1.91\% \\
Full Fine\textendash Tuning & \underline{0.3208} & \underline{0.21\%} & 100\% \\
\bottomrule
\end{tabular}
\caption{Spectrum–level accuracy and parameter efficiency. F–Adapter achieves the best spectral fidelity among PEFT methods while retaining a small parameter footprint.}
\label{tab:spectral_metrics}
\end{table}

\begin{figure*}[t]
  \centering
  \includegraphics[width=\textwidth]{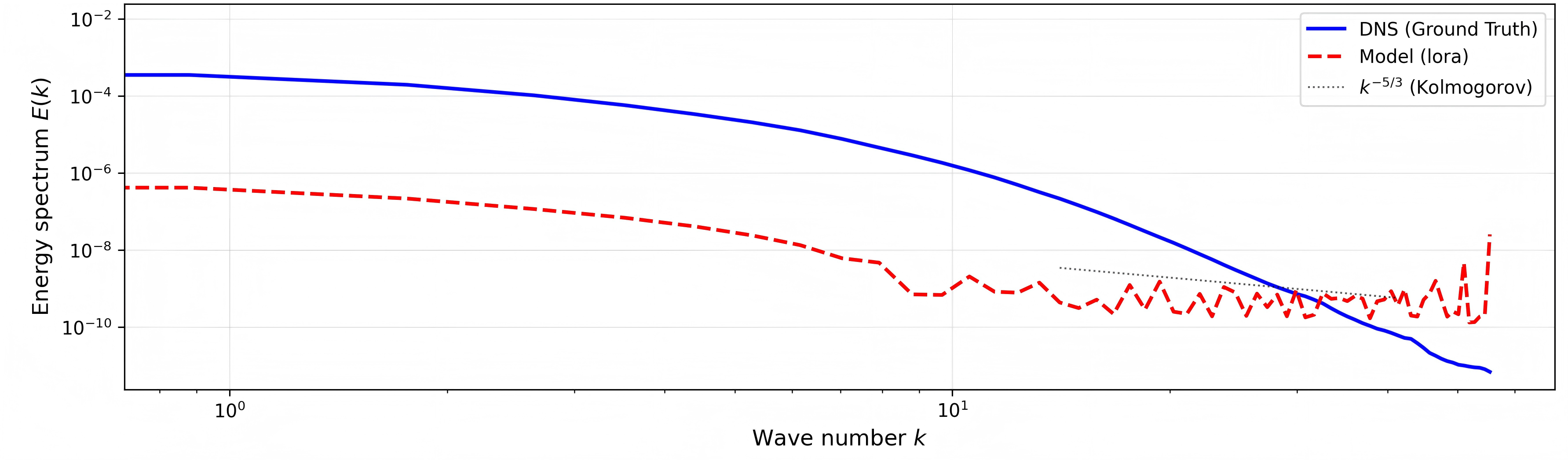}

  \medskip

  \includegraphics[width=\textwidth]{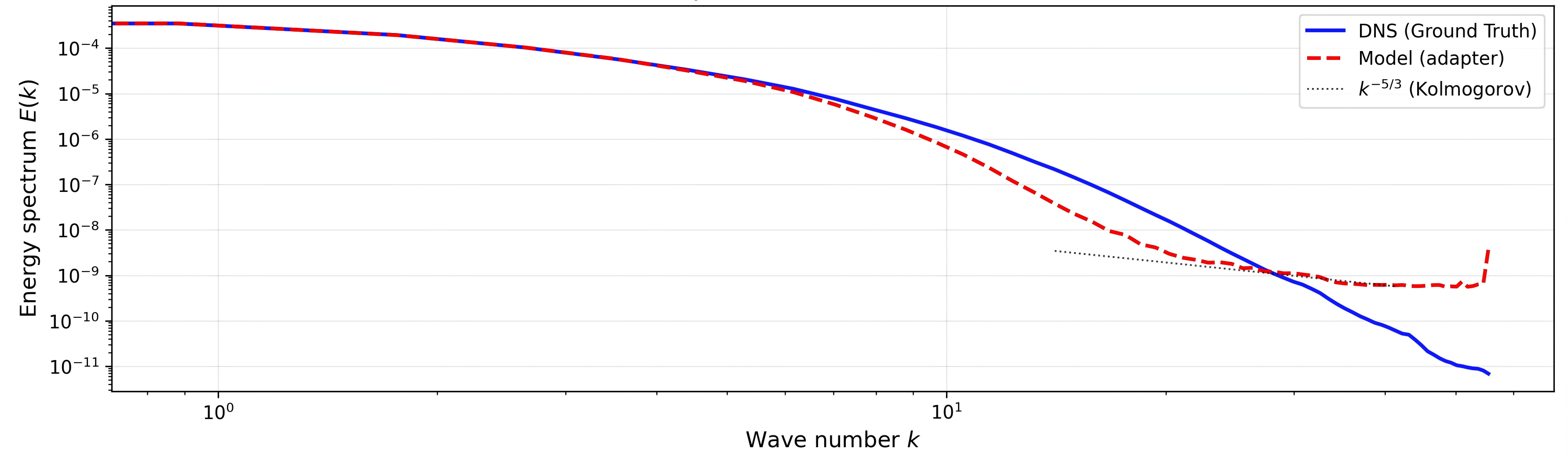}

  \medskip

  \includegraphics[width=\textwidth]{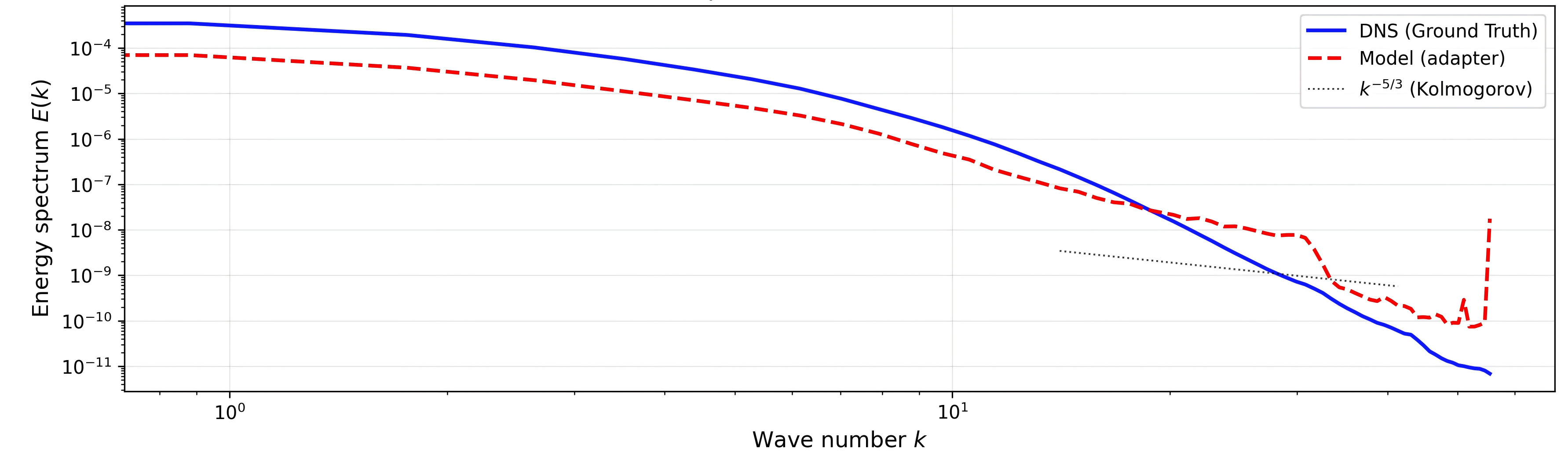}

  \caption{Energy spectra $E(k)$ on logarithmic axes for three PEFT methods, each compared to DNS (blue) and a $k^{-5/3}$ reference slope (gray). From top to bottom: LoRA, F-Adapter (Ours), and Vanilla Adapter. F-Adapter follows the DNS spectrum more closely across a broad range of wavenumbers, while Vanilla Adapter shows a deficit at mid-to-high $k$ and LoRA substantially underestimates energy with noisy behavior at large $k$.}
  \label{fig:spectrum_plot}
\end{figure*}

\paragraph{Findings.}
Figure~\ref{fig:spectrum_plot} presents $E(k)$ on logarithmic axes together with the DNS curve and a $k^{-5/3}$ reference slope for the inertial range. F–Adapter tracks the DNS spectrum closely over a broad band of $k$ and preserves the correct decay at higher wavenumbers. Vanilla Adapter exhibits a noticeable deficit in mid–to–high $k$. LoRA underestimates energy by orders of magnitude across most shells and shows noisy behavior at the largest $k$, consistent with the very large $\operatorname{RelErr}_{E}$.

\subsection{Discussion on Different Types of Adapters for Fourier Domain}
\label{discussion}

\paragraph{Chebyshev Adapter}
Motivated by prior work \cite{xiao2024amortized} which leverages the frequency‐domain expressivity of Chebyshev polynomials within FNO, we propose the Chebyshev KAN Adapter (Chebyshev Adapter). It utilizes the spectral expressivity of Chebyshev-based Kolmogorov–Arnold Networks\citep{ss2024chebyshev} by replacing the standard linear up-projection with a ChebyKAN layer. Given an input activation $\mathbf{x}\in\mathbb{R}^{d_{\mathrm{in}}}$, the Chebyshev Adapter computes
\begin{align}
\mathbf{z} &= \tanh\!\bigl(\mathbf{W}_{\mathrm{down}}\mathbf{x} + \mathbf{b}_{\mathrm{down}}\bigr),\\
y_k &= \sum_{i=1}^{d_{\mathrm{bottleneck}}} \sum_{n=0}^{N} C_{k,i,n}\,T_n\bigl(\tilde z_i\bigr),\quad k=1,\dots,d_{\mathrm{in}},
\end{align}
where $\mathbf{W}_{\mathrm{down}}\in\mathbb{R}^{d_{\mathrm{bottleneck}}\times d_{\mathrm{in}}}$ is the learnable down-projection, $\tilde z_i=\tanh(z_i)$ normalizes into $[-1,1]$, and
\(
T_n(x)=\cos\!\bigl(n\arccos(x)\bigr)
\)
is the $n$-th Chebyshev polynomial of the first kind. The coefficient tensor $C\in\mathbb{R}^{d_{\mathrm{in}}\times d_{\mathrm{bottleneck}}\times (N+1)}$ is learned end-to-end. Finally, a residual connection with learnable scalar $\alpha$ restores the original dimension:
\begin{equation}
\label{Chebyshev Adapter}
\mathrm{Chebyshev\text{-}Adapter}(\mathbf{x})=\alpha\,\mathbf{y} + \mathbf{x}\,.
\end{equation}

We initialize $\mathbf{W}_{\mathrm{down}}$ by Kaiming uniform and set all $C_{k,i,n}=0$ to start training from the identity mapping. The scalar $\alpha$ is also learnable, allowing the model to adaptively control the adapter’s contribution.

The Chebyshev Adapter leverages the enhanced Chebyshev–KAN Layer to boost approximation power without the dense spline‐grid storage required by standard Kolmogorov–Arnold Networks \citep{liu2024kan}.  

\paragraph{Fourier Adapter}%
Motivated by the amortised Fourier–kernel formulation of \citet{xiao2024amortized} and the expressive FourierKAN layer of \citet{xu2024fourierkan}, we introduce the \emph{FourierKAN Adapter} (Fourier Adapter) as a frequency-domain alternative to the vanilla bottleneck Adapter used in LOMs.  
Given an input activation $\mathbf{x}\in\mathbb{R}^{d_{\mathrm{in}}}$, the module first performs a linear dimension reduction
\begin{equation}
\mathbf{z}= \sigma\!\bigl(\mathbf{W}_{\mathrm{down}}\mathbf{x}+\mathbf{b}_{\mathrm{down}}\bigr),\qquad
\mathbf{W}_{\mathrm{down}}\in\mathbb{R}^{d_{\mathrm{bottleneck}}\times d_{\mathrm{in}}},
\label{eq:fourier_down}
\end{equation}
where $\sigma(\cdot)$ denotes GELU unless stated otherwise.  
To restore the original width we replace the standard linear up-projection with a \emph{FourierKAN} layer that expands each scalar $z_i$ into a truncated Fourier series of order~$K$:
\begin{align}
y_k &= \sum_{i=1}^{d_{\mathrm{bottleneck}}}\!
        \sum_{n=1}^{K}\Bigl(
        A_{k,i,n}\cos(n z_i)+
        B_{k,i,n}\sin(n z_i)
        \Bigr), \qquad k=1,\dots,d_{\mathrm{in}},  \label{eq:fourier_up_comp}\\[2pt]
\mathbf{y} &= \bigl(y_1,\dots,y_{d_{\mathrm{in}}}\bigr)^\top. \label{eq:fourier_up_vec}
\end{align}
Here the learnable coefficients $A,B$ lie in $\mathbb{R}^{d_{\mathrm{in}}\times d_{\mathrm{bottleneck}}\times K}$.  
Because $\cos$ and $\sin$ are $2\pi$-periodic and globally supported, Eqs.~\eqref{eq:fourier_up_comp}–\eqref{eq:fourier_up_vec} endow the adapter with a strong inductive bias for periodic, high-frequency phenomena that commonly arise in PDE spectra, while avoiding the dense spline grids required by classical KANs~\citep{liu2024kan}.  
The series order $K$ is typically $\le 256$ to curb aliasing and memory, yielding an $\mathcal{O}\!\bigl(d_{\mathrm{in}}d_{\mathrm{bottleneck}}K\bigr)$ cost.

A learnable LayerNorm followed by residual scaling finishes the block:
\begin{equation}
\mathrm{Fourier\text{-}Adapter}\bigl(\mathbf{x}\bigr)=
\alpha\,\mathrm{LN}\!\bigl(\mathbf{y}\bigr)+\mathbf{x},
\end{equation}
where learnable parameter $\alpha$ is initialized to $0$, so training begins from the identity map.  
We initialize $(A,B)$ with $\mathcal{N}\!\bigl(0,\,K^{-1/2}d_{\mathrm{bottleneck}}^{-1/2}\bigr)$ and attenuate high frequencies by $(n+1)^{-2}$ to ensure smooth scalar functions at start-up, following our implementation practice.

The Fourier Adapter offers parameter efficiency comparable to F-Adapter while directly modelling spectral bases; however, its global trigonometric kernels incur greater FLOPs and memory (Table~\ref{tab:flops}) and can amplify aliasing when $K$ is large, echoing the empirical findings in Table~\ref{tab:main_exp_results}.  Nonetheless, for tasks dominated by periodic boundary conditions or sharp oscillations, it serves as a principled, physics-aware drop-in replacement for projection-based adapters.

\paragraph{WaveAct Adapter}%
Building on the learnable wavelet-based activation proposed in \citet{zhao2023pinnsformer} and the success of wavelet transforms in operator learning \citep{tripura2022wavelet}, we devise the \emph{WaveAct-Activated Adapter} (WaveAct Adapter).  
Unlike functional–basis adapters that alter the projection layers themselves, WaveAct Adapter keeps the standard bottleneck architecture but replaces the pointwise non-linearity with a parameter-efficient WaveAct gate that superposes local sine and cosine responses.  
Formally, for an input activation $\mathbf{x}\in\mathbb{R}^{d_{\mathrm{in}}}$ we compute
\begin{align}
\mathbf{z} &= \mathbf{W}_{\mathrm{down}}\mathbf{x}+\mathbf{b}_{\mathrm{down}}, \qquad 
\mathbf{W}_{\mathrm{down}}\in\mathbb{R}^{d_{\mathrm{bottleneck}}\times d_{\mathrm{in}}}, \label{eq:waveact_down}\\[2pt]
\tilde z_i &= a\,\sin(z_i)+b\,\cos(z_i),\qquad i=1,\dots,d_{\mathrm{bottleneck}}, \label{eq:waveact_act}\\[2pt]
y_k &= \sum_{i=1}^{d_{\mathrm{bottleneck}}} \bigl[\mathbf{W}_{\mathrm{up}}\bigr]_{k,i}\,\tilde z_i
        + b^{\mathrm{up}}_{k}, \qquad k=1,\dots,d_{\mathrm{in}}, \label{eq:waveact_up}
\end{align}
where $a,b\in\mathbb{R}$ are \emph{two} learnable, shared scalars that modulate the sine / cosine mixture, and $\mathbf{W}_{\mathrm{up}}\in\mathbb{R}^{d_{\mathrm{in}}\times d_{\mathrm{bottleneck}}}$.  
WaveAct thus provides a compact spectral gate whose frequency content is dynamically tuned during training, requiring only \(2\) extra parameters irrespective of width.

Finally, a residual path equipped with a learnable gain $\alpha$ restores the original dimensionality:
\begin{equation}
\mathrm{WaveAct\!-\!Adapter}(\mathbf{x}) \;=\; \alpha\,\mathbf{y}+\mathbf{x}, 
\qquad \alpha\in\mathbb{R}.
\end{equation}

We set $(a,b)=(1,1)$ to start from an identity-like activation ($\sin+\cos\simeq 1$ near the origin).  
Following \citet{houlsby2019parameter}, $\mathbf{W}_{\mathrm{up}}$ and its bias are initialised at zero so that $\alpha=0$ yields an exact identity map at the beginning of training; $\mathbf{W}_{\mathrm{down}}$ follows Kaiming-uniform initialisation.
Eq.~\eqref{eq:waveact_act} equips each bottleneck coordinate with an adaptive wavelet kernel that can synthesise both low- and high-frequency components, while preserving the memory- and FLOP-profile of the vanilla adapter (Table~\ref{tab:freq_adapter_costs}).  
Empirically, WaveAct Adapter matches F-Adapter in memory usage and runtime yet trails slightly in L2RE (Table~\ref{tab:freq_adapter_comparison}), suggesting that wavelet activations alone are insufficient to fully model the extreme high-frequency dynamics present in the Fourier domain.  Nevertheless, its negligible parameter overhead and strong locality make it an attractive drop-in replacement when compute budgets are tight or periodicity is weak.

\subsection{Ablation Study over Diverse Hyperparameter Settings for F-Adapter}
\label{ablation_parameter}

We conducted extensive ablation studies on diverse hyperparameter settings using DPOT and F-Adapter on the 3D-Turbulance dataset.

\begin{table}[t]
\centering
\small
\setlength{\tabcolsep}{6pt}
\begin{tabular}{ccccccccccccc}
\toprule
$p$ & $r_{\min}$ & $r_{\max}$ & $B$ & L2RE & \% Param & FLOPs (G) & $B_{1}$ dim & $B_{2}$ dim & $B_{3}$ dim & $B_{4}$ dim & $B_{5}$ dim & $B_{6}$ dim \\
\midrule
2 & 4  & 16 & 4 & 0.4523 & 1.91\% & 548.5307 & 13 & 8  & 5  & 4  & -- & -- \\
2 & 4  & 16 & 6 & 0.4509 & 2.45\% & 548.7430 & 14 & 10 & 8  & 6  & 4  & 4  \\
2 & 8  & 32 & 4 & 0.4191 & 3.40\% & 555.3716 & 22 & 11 & 8  & 8  & -- & -- \\
1 & 16 & 32 & 4 & 0.4203 & 4.38\% & 556.8579 & 29 & 23 & 17 & 11 & -- & -- \\
4 & 16 & 64 & 4 & 0.3885 & 6.76\% & 569.9026 & 44 & 23 & 16 & 16 & -- & -- \\
1 & 16 & 64 & 4 & 0.4152 & 8.00\% & 572.8753 & 58 & 46 & 34 & 22 & -- & -- \\
\bottomrule
\end{tabular}
\caption{Performance and configuration across bandwidth settings.}
\label{tab:bandwidth_settings}
\end{table}

Results in \Cref{tab:bandwidth_settings} indicate that hyperparameters primarily influence performance by modulating adapter capacity allocation across frequency bands. This adjustment effectively governs the model's overall capacity. Crucially, hyperparameters do not directly affect performance. Their impact is mediated through capacity allocation. Consequently, selecting appropriate capacity based on computational resource constraints enables predictable performance outcomes. The magnitude of this impact remains relatively limited. This finding aligns with the insight presented in \Cref{tab:lora_adapter_ablation} which shows that adapters nearly obey the scaling law. Our Band-Specific Bottleneck Allocation framework in \Cref{eq:rb} maintains robust generalization across diverse tasks, while hyperparameters retain flexibility to accommodate available computational resources.

\subsection{Design Details of the Frequency-Based Capacity Allocation Paradigm in the Transformer-Based Poseidon Backbone}
\label{f-mechanism_in_transformer}

\paragraph{Motivation.}
Poseidon’s backbone scOT learns solution operators $S(t,a)$ with time-conditioned layer normalization and a hierarchical shifted-window attention stack. Our goal is to inject frequency awareness without rewriting the model to operate in Fourier space. We estimate frequency content per layer, allocate a capacity budget across frequency bands, and realize the budget with two parameter-efficient routes: \emph{F-Adapter} and \emph{F-LoRA}. The design preserves Poseidon’s native spatial pipeline and continuous-in-time interface. 

\paragraph{Preliminaries on frequency signals.}
Neural operator models such as FNO expose frequency channels through explicit spectral layers. In our transformer-based Poseidon backbone we keep all computations in the native spatial domain and recover frequency cues with lightweight probes: adjacent-token differences provide a proxy for high- versus low-frequency content in \texttt{Linear} layers, and a local real 2D FFT on \texttt{Conv2d} outputs yields a minibatch-normalized energy spectrum. The resulting per-band energies $\tilde{E}_b \in [0,1]$ act as data-dependent gates for our adapters. Bands are defined as concentric partitions of the frequency plane, and the capacity assigned to each band follows the same rule as \Cref{eq:rb}, which allocates larger bottlenecks to low frequencies while reserving nonzero capacity for higher bands; the same schedule parameterizes ranks in F-LoRA.

\paragraph{Frequency estimation on Poseidon.}
We keep the base scOT layers intact.
(i) For a \textbf{Linear} layer, we treat the token axis as a short 1D sequence, compute adjacent-token differences, and convert their magnitude to a scalar frequency score per example, which we softmax into band weights $\pi_b$.
(ii) For a \textbf{Conv2d} layer inside Poseidon’s downsample or upsample paths, we run a \emph{local real 2D FFT} on the convolution output, accumulate power within each annular band, and normalize to $\pi_b$. The estimates are used only to gate adapters; the base layer remains purely spatial.

\paragraph{F-Adapter.}
F-Adapter attaches a bank of lightweight per-band adapters to each target layer while freezing the base weights.

$$
y_{\text{base}} = \mathcal{L}(x),\qquad 
y_b = \mathcal{A}_b\!\big(y_{\text{base}}\big),\qquad
y = y_{\text{base}} \;+\; \sum_{b=0}^{B-1} \pi_b \, y_b .
$$

Here $\mathcal{L}$ is the original Linear or Conv2d, and $\mathcal{A}_b$ is a bottleneck MLP for Linear layers or a $1{\times}1$ conv bottleneck for Conv2d layers with width $d_b$. The per-band outputs are combined by the data-dependent weights $\pi_b$. Only the adapter parameters are trainable. This keeps inference identical to the base scOT path plus a small residual branch and does not alter Poseidon’s time conditioning.

\paragraph{F-LoRA.}
F-LoRA keeps the same frequency banding and gating but replaces each bottleneck adapter with LoRA-style low-rank updates that live on the frozen weight path. For a Linear weight $W\in\mathbb{R}^{m\times n}$,

$$
\mathcal{L}_{\text{F-LoRA}}(x) \;=\; W x \;+\; \sum_{b=0}^{B-1} \pi_b \, \alpha \, A_b B_b x,\qquad 
A_b\!\in\!\mathbb{R}^{m\times r_b},\; B_b\!\in\!\mathbb{R}^{r_b\times n},
$$

with trainable $A_b,B_b$ and a fixed scale $\alpha$. The rank $r_b$ follows the same capacity rule as $d_b$, which concentrates rank on low frequencies while keeping nonzero rank for higher bands. For Conv2d we use equivalent $1{\times}1$ factorizations per band in channel space. F-LoRA inherits the strong optimization behavior of LoRA on transformer backbones and maintains a small trainable footprint. 

\paragraph{Implementation notes.}
Both mechanisms freeze the original Poseidon weights. F-Adapter attaches per-band residual branches whose last projection is initialized at zero to avoid training instability at warm start. F-LoRA initializes $B_b$ from a truncated normal and $A_b$ at zero, which recovers the base model at step zero as in standard LoRA. The energy estimator and the gating $\pi_b$ are differentiable but do not introduce global FFTs, so training throughput is close to that of the base model. The loss follows Poseidon’s operator objective on sampled times with $L^1$ norm, which keeps supervision aligned with operator learning rather than single-step forecasting.

\section{Limitations}
\label{limitation}
Our theoretical guarantees rest on a \emph{low–frequency-dominance} condition—namely, that the Fourier energy spectrum of the target operator decays sufficiently fast so that most variance is captured by the first few modes. This premise is supported  across a broad class of dissipative PDEs, including incompressible and compressible Navier–Stokes, reaction–diffusion, shallow–water, and advection–diffusion systems, all of which exhibit steep inertial-range energy spectra.  Nevertheless, its universality remains to be fully established—strongly non-linear, multi-physics flows (e.g., MHD turbulence or reactive plasmas) may display flatter spectra that subtly stretch the separability premise in Proposition~\ref{prop:fourier-min}-~\ref{prop:spectral-split}.
 A theoretically rigorous characterization of how non-monotone or multi-modal spectra influence our approximation error bounds, and whether adaptive frequency-aware capacity allocation can re-establish similar guarantees in these settings, remains an open and compelling direction for future work.

\section{Broader Impacts}\label{sec:broader-impacts}

The proposed frequency–adaptive adapter framework lowers the computational and memory footprint required to fine-tune large operator models (LOMs) for complex partial-differential-equation systems, potentially democratizing high-resolution scientific forecasting in climate science, aerospace design, and renewable-energy optimization by enabling researchers with modest hardware to customise state-of-the-art solvers.  By concentrating learnable capacity on the most energetic spectral modes, our method also reduces training energy consumption relative to full fine-tuning, contributing to greener AI practice.  At the same time, accelerated surrogate models for fluid and plasma dynamics could be misused for strategic weapon design or proprietary industrial processes; we therefore commit to releasing code under a research-only licence and to incorporating provenance logging to discourage dual-use.  Finally, because adapter-based surrogates may still propagate modelling bias when extrapolating beyond their training spectra, we encourage downstream practitioners to couple our models with established uncertainty-quantification workflows and to validate predictions against trusted baselines before deployment in safety-critical settings.

\end{document}